\theoremstyle{plain}
\newtheorem{theorem}{Theorem}[section]
\newtheorem{lemma}[theorem]{Lemma}
\newtheorem{corollary}[theorem]{Corollary}
\theoremstyle{definition}
\newtheorem{definition}[theorem]{Definition}
\newtheorem{assumption}[theorem]{Assumption}
\theoremstyle{remark}
\newtheorem*{theorem*}{Theorem}
\newtheorem*{lemma*}{Lemma}
\newtheorem*{corollary*}{Corollary}
\newtheoremstyle{proofsketchstyle}
  {}{}
  {\itshape}
  {}
  {\itshape}
  {.}
  { }
  {\thmname{#1}\thmnote{ #3}}      
\theoremstyle{proofsketchstyle}
\newcommand{\alg}{$\mathsf{SPRINT}$~}
\newcommand{\algns}{$\mathsf{SPRINT}$}
\title{\algns: Stochastic Performative Prediction With Variance Reduction}
\author{
Tian Xie\thanks{These authors contribute equally.}\\
  the Ohio State University\\
  \texttt{xie.1379@osu.edu}\\
  \And Ding Zhu\footnotemark[1]\\
  the Ohio State University\\
  \texttt{zhu.3723@osu.edu}\\
  \And Jia Liu\\
  the Ohio State University\\
  \texttt{liu@ece.osu.edu}\\
  \And Mahdi Khalili \\
  the Ohio State University\\
  \texttt{khaliligarekani.1@osu.edu}\\
  \And Xueru Zhang\\
  the Ohio State University\\
  \texttt{zhang.12807@osu.edu}\\
}
\begin{document}

\maketitle

\begin{abstract}
Performative prediction (PP) is an algorithmic framework for optimizing machine learning (ML) models where the model's deployment affects the distribution of the data it is trained on. Compared to traditional ML with fixed data, designing algorithms in PP converging to a stable point -- known as a stationary performative stable (SPS) solution -- is more challenging than the counterpart in conventional ML tasks due to the model-induced distribution shifts. While considerable efforts have been made to find SPS solutions using methods such as repeated gradient descent (RGD) and greedy stochastic gradient descent (SGD-GD), most prior studies assumed a strongly convex loss until a recent work established $\mathcal{O}(1/\sqrt{T})$ convergence of SGD-GD to SPS solutions under smooth, non-convex losses. However, this latest progress is still based on the restricted bounded variance assumption in stochastic gradient estimates and yields convergence bounds with a non-vanishing error neighborhood that scales with the variance. This limitation motivates us to improve convergence rates and reduce error in stochastic optimization for PP, particularly in non-convex settings. Thus, we propose a new algorithm called \ul{s}tochastic \ul{p}erformative p\ul{r}ediction w\ul{i}th varia\ul{n}ce reduc\ul{t}ion (\algns) and establish its convergence to an SPS solution at a rate of $\mathcal{O}(1/T)$. Notably, the resulting error neighborhood is {\em independent} of the variance of the stochastic gradients. Experiments on multiple real datasets with non-convex models demonstrate that \alg outperforms SGD-GD in both convergence rate and stability.
\end{abstract}

\section{Introduction}

Machine learning (ML) models are typically trained under the assumption that both training and testing data are drawn independently from a static distribution. 
However, in many real-world applications where an ML system is used to make predictions about humans, this assumption is often violated, as the deployment of the model can alter the behavior of the population interacting with it, thereby inducing changes in the data distribution that the model is meant to predict.  
This phenomenon, known as ``performative effects'' or ``model-induced distribution shift'' \citep{perdomo_performative_2021,drusvyatskiy2023stochastic}, presents distinctive challenges to conventional learning paradigms. For example, in strategic classification scenarios, individuals can manipulate their features (e.g., by modifying their resumes or financial profiles) to obtain more favorable results \citep{hardt_strategic_2015}; In digital platforms, users may change their engagement behavior based on their beliefs about how the model operates \citep{Zhang_2019_Retention, chi2022towards}; In adversarial settings such as spam detection, spammers continuously adapt their tactics to evade newly deployed filters. Understanding and mitigating these performative effects is essential for building robust ML models.

Thus, \citet{perdomo_performative_2021} introduced performative prediction (PP) as the first optimization framework for model-dependent distribution shifts. 
In this framework, the data distribution $\mathcal{D}(\boldsymbol{\theta})$ is explicitly modeled as a mapping from the ML model parameters $\boldsymbol{\theta}$ to the space of distributions, and the goal is to minimize risk for this distribution. 
Since the data distribution itself depends on the model being optimized, the objective becomes minimizing the performative risk (PR), i.e.,
\begin{align}\label{eq:PO}
    \boldsymbol{\theta}^{\text{PO}} =\underset{\boldsymbol{\theta}}{\operatorname{argmin}}~\mathcal{V}(\boldsymbol{\theta})\stackrel{\text{def}}{=} \mathbb{E}_{z \sim \mathcal{D}({\boldsymbol{\theta}})} [\ell(z;\boldsymbol{\theta})],
\end{align}
where $\ell(\boldsymbol{\theta};z)$ is the loss function and $z = (x,y)$ is sampled from the distribution $\mathcal{D}(\boldsymbol{\theta})$. The coupling nature of PP implies that $\boldsymbol{\theta}$ determines both the model and the data distribution, and the minimizer $\boldsymbol{\theta}^{\text{PO}}$ is referred to as the \textbf{performative optimal} (PO) solution. Since the data distribution is itself a function of the variable $\boldsymbol{\theta}$ being optimized, finding $\boldsymbol{\theta}^{\text{PO}}$ is often intractable \citep{perdomo_performative_2021, izzo2021performative}.
To address this challenge, instead of finding PO solution, many existing works focused on \textbf{performative stable} (PS) solution $\boldsymbol{\theta}^{\text{PS}}$ to minimize the \textbf{decoupled performative risk} $\mathcal{J}(\boldsymbol{\theta};\boldsymbol{\theta}^{\text{PS}})$ in the following form:
\begin{align}\label{eq:PS}
\displaystyle \boldsymbol{\theta}^{\text{PS}} = \underset{\boldsymbol{\theta}}{\operatorname{argmin}}~\mathcal{J}(\boldsymbol{\theta};\boldsymbol{\theta}^{\text{PS}}) \stackrel{\text{def}}{=} \mathbb{E}_{z \sim \mathcal{D}(\boldsymbol{\theta}^{\text{PS}})} [\ell(z; \boldsymbol{\theta})].
\end{align}
Different from $\mathcal{V}(\boldsymbol{\theta})$ where the data distribution  $\mathcal{D}(\boldsymbol{\theta})$ depends on the variable $\boldsymbol{\theta}$ being optimized, $\mathcal{J}(\boldsymbol{\theta};\boldsymbol{\theta}^{\text{PS}})$ decouples the two: the data distribution is induced by a fixed parameter $\boldsymbol{\theta}^{\text{PS}}$, and the the learner only needs to minimize the loss over this fixed distribution. 
This enables us to find $\boldsymbol{\theta}^{\text{PS}}$ iteratively using \textit{repeated optimization} schemes---by repeatedly updating $\boldsymbol{\theta}_{t+1}$ to minimize risk on a fixed distribution $\mathcal{D}(\boldsymbol{\theta}_t)$. 
If the procedure converges, then the minimizer $\boldsymbol{\theta}_{t+1}$ approaches $\boldsymbol{\theta}^{\text{PS}}$. Examples of such approaches commonly used in the literature include repeated risk minimization \citep{perdomo_performative_2021}, repeated gradient descent \citep{perdomo_performative_2021}, and stochastic gradient descent-greedy deploy (SGD-GD) \citep{Mendler2020Stochastic}. 
Since the population size can be large in practice, it is not feasible to repeatedly run full gradient descent at each iteration \citep{perdomo_performative_2021}. In this paper, we consider stochastic optimization in PP with a primary focus on the SGD-GD approach \citep{li2024stochastic} since it is more computationally efficient and requires now knowledge of the performative phenomena.
In general, the SGD-GD approach can be written as in the following form:
\begin{align}\label{eq:sgdgd}
    \boldsymbol{\theta}_{t+1} = \boldsymbol{\theta}_t - \gamma_{t+1} \nabla \ell(z_{t+1}; \boldsymbol{\theta}_t), ~ z_{t+1} \sim \mathcal{D}(\boldsymbol{\theta}_t).
\end{align}

It has been shown that Eqn.~\ref{eq:sgdgd} could converge to a unique PS solution \citep{Mendler2020Stochastic}, but  under two strong assumptions: 
(i) the loss function $\ell$ is strongly convex and smooth in $\boldsymbol{\theta}$; 
and (ii) the sensitivity of the distribution map $\mathcal{D}$ measured by the Wasserstein-1 distance is upper bounded. 
However, the strong convexity requirement significantly limits the practical applicability of PP. 
A very recent work \citep{li2024stochastic} addressed this limitation by establishing the first convergence guarantees to a stationary performative stable (SPS) solution (cf. Def.~\ref{def:sps}) under the more general setting where $\ell$ is smooth and non-convex.

\begin{definition}[$\delta$-Stationary Performative Stable Solution \citep{li2024stochastic}]\label{def:sps}
For a given \(\delta \geq 0\), the vector \(\boldsymbol{\theta}_{\delta\text{-SPS}} \in \mathbb{R}^d\) is said to be a \(\delta\)-stationary performative stable (\(\delta\)-SPS) solution of Eqn.~\ref{eq:PS} if
\begin{align*}
\left\|\nabla \mathcal{J}(\boldsymbol{\theta}_{\delta\text{-SPS}}; \boldsymbol{\theta}_{\delta\text{-SPS}})\right\|^2 = \left\|\mathbb{E}_{z \sim \mathcal{D}(\boldsymbol{\theta}_{\delta\text{-SPS}})}\left[\nabla \ell(z; \boldsymbol{\theta}_{\delta\text{-SPS}})\right]\right\|^2 \leq \delta.
\end{align*}
\end{definition}

The parameter $\delta$ in Def.~\ref{def:sps} measures the stability of a solution. When $\delta = 0$, $\boldsymbol{\theta}_{\delta\text{-SPS}}$ coincides with PS solution $\boldsymbol{\theta}^{\text{PS}}$. As showed by \citet{li2024stochastic}, SGD-GD can converge to $\boldsymbol{\theta}_{\delta\text{-SPS}}$ at a rate of $\mathcal{O}(1/\sqrt{T})$ for non-convex, smooth loss function $\ell$ under an additional assumption of \textbf{bounded variance in the stochastic gradient estimates}. Moreover, the convergence bound includes a {\em non-vanishing} error neighborhood that {\em scales with the variance}. This naturally motivates the following fundamental question: 

\begin{tcolorbox}[left=1.2pt,right=1.2pt,top=1.2pt,bottom=1.2pt]
\textbf{(Q)}: Is it possible to control the stochastic gradient estimate variance in PP to develop a new SGD-GD-type algorithm, which converges to $\boldsymbol{\theta}_{\delta\text{-SPS}}$ i) with a faster convergence rate and ii) an error neighborhood {\em independent} of the gradient variance? If yes, how?
\end{tcolorbox}

We note that, in the literature, 
several variance reduction techniques have been developed for the conventional (i.e., non-PP) stochastic gradient descent (SGD) method in ML (e.g., the stochastic average gradient method (SAG) \citep{roux2012stochastic}, the stochastic average gradient augmented (SAGA) method \citep{defazio2014saga}, the stochastic variance reduced gradient method (SVRG) \citep{reddi2016stochastic}, and the stochastic path-integrated differential estimator (SPIDER) \citep{fang2018spider}), which achieve better convergence rates. 
However, their applicability and effectiveness in the PP settings remain largely {\em unknown}. 
In light of the increasing importance of PP and its significant gap between the theory and practice, our goal in this paper is to develop new variance-reduced SGD-GD-based algorithms for PP.

As a starting point in the new variance-reduced PP paradigm, we propose a \ul{s}tochastic \ul{p}erformative p\ul{r}ediction w\ul{i}th varia\ul{n}ce reduc\ul{t}ion (\algns) method, which is the first variance-reduced performative prediction (PP) framework and is inspired by the SVRG approach\citep{reddi2016stochastic}. Our key idea is to divide the updating iterations of SGD-GD into multiple epochs and store a snapshot of the full gradient $\nabla \mathcal{J}$ at the end of each epoch. At each iteration of repeated optimization, the most recent snapshot can reduce the variance of the current stochastic gradient descent with a slight bias.

We note, however, that our proposed \alg approach is {\em far from} a straightforward application of SVRG in the PP paradigm. 
Unlike conventional stochastic optimization, the unique nature of PP settings introduces two major challenges in algorithm design and theoretical analysis: 
(i) The effects change the data distribution and the resulting decoupled performative loss gradient across iterations, thereby rendering the full gradient snapshot taken in the previous iteration a {\em biased} estimate of the current stochastic gradient evaluated at the same $\boldsymbol{\theta}$ on a different data distribution; and (ii) The effects introduce new difficulties, absent in the standard SVRG theoretical analysis, in constructing a suitable Lyapunov function and establishing a negative Lyapunov drift, since the change from $\boldsymbol{\theta}_k^{s}$ to $\boldsymbol{\theta}_{k+1}^{s}$ incurs extra distribution shifts unseen in the non-PP settings. 

The major contribution of this paper is that we overcome the above technical challenges and establish the convergence of our \alg method to an SPS solution with a {\em variance-independent} fast convergence rate.
The main results and key contributions of this work are summarized as follows:


\begin{list}{\labelitemi}{\leftmargin=1em \itemindent=-0.0em \itemsep=.1em}


\item We propose the \alg algorithm to reduce the variance of the SGD-GD-Based approach in non-convex PP settings. 
Compared to existing works, \alg converges to an SPS solution at an {\em accelerated} rate of $\mathcal{O}(1/T)$, and, more importantly, features an error neighborhood that is \textbf{independent} of the stochastic gradient variance. 
To our knowledge, all these results are first in the literature.

    
\item To establish the aforementioned $\mathcal{O}(1/T)$ convergence rate of our \alg method, we propose a series of new Lyapunov function construction techniques, which are of general and independent interests in the PP literature. 
Moreover, we derive the incremental first-order oracle (IFO) complexity (the number of operations of taking a sample and calculating its gradient to achieve a $\mathcal{O}(\delta)$-SPS \citep{reddi2016stochastic}) of the algorithm.
    
\item We conduct extensive numerical experiments to validate the theoretical results on three real-world datasets. 
In addition to the credit data \citep{creditdata} and MNIST \citep{deng2012mnist}, we present additional experimental results for training MLP/CNN models using SGD-GD-Based and \alg approaches on the full CIFAR-10 \citep{krizhevsky2009learning} dataset. 
These broad experiments verify and enrich the practical applicability of optimization-based approaches for PP.

\end{list}

The remainder of this paper is organized as follows: In Sec. \ref{sec:related}, we review the related literature and formally present the problem formulation along with a discussion of prior results in Sec.~\ref{subsec:prelim}. We then present our \alg algorithm in Sec.~\ref{subsec:algo} and prove its convergence and IFO complexity in Sec.~\ref{sec:main}. 
Lastly, we present the numerical results in Sec.~\ref{sec:exp} and conclude the paper in Sec.~\ref{sec:conclusion}.

\section{Related work}\label{sec:related}

In this section, we provide a brief overview on three research areas that are closely related to our work: 1) finding SPS solutions for PP; 2) other related solution metrics in PP; and 3) variance reduction (VR) techniques for SGD-based optimization methods.

\textbf{1) Finding PS/SPS solutions for PP:} Performative prediction (PP)  was first formulated as an optimization framework by \citet{perdomo_performative_2021} to handle endogenous data distribution shifts, based on which
an iterative optimization procedure named \textit{repeated risk minimization} (RRM) to find a performative stable point and also bound the distance between the PS solution and the PO solution in \citep{perdomo_performative_2021}. \citet{perdomo_performative_2021} also proposed a repeated gradient descent (RGD) method, but a full gradient is required in each iteration. In contrast, \citet{Mendler2020Stochastic} designed the first algorithm to find the PS solution under the online setting. \citet{mendler2022anticipating} later established the convergence rate of greedy deployment and lazy deployment after each random update under the assumptions of smoothness and strong convexity. 
Later, \citet{mofakhami2023performative} investigated training neural network in the PP settings, but they assumed the model as $\hat{y} = f_{\boldsymbol{\theta}}$. The loss function is written as $\ell(\hat{y}, y)$ and assumed to be strongly convex to $\hat{y}$. The distribution map $D(\hat{y})$ must also be $\epsilon$-sensitive to $\hat{y}$. Most recently, \citet{li2024stochastic}  provided the first convergence results of the PP settings where the loss function in PP is non-convex and no particular solution structure is assumed.
However, this result is based on the bounded variance assumption limitation and suffers from a non-vanishing error neighborhood that scales linearly with the variance. Our work overcomes this limitation by proposing a variance reduction approach to accelerate the convergence and mitigate the error neighborhood.

\textbf{2) Related solution metrics for PP:} 
It is worth noting that, besides finding SPS solutions, there also exist other solution metrics for PP.
For example, \citet{miller_outside_2021} tackled PP problems by directly optimizing performance risk (PR) to find the PO solution for a restricted set of the distribution maps. 
In contrast, \citet{ray2022decision, liu2023two} utilized derivative-free optimization to find PO solutions.  \citet{izzo2021performative} also designed an algorithm based on performative gradient descent for finding PO solutions under a convex PR assumption, which, however, is hard to verify even with a strongly convex loss function. \citet{zhu2023online, zheng2024profl} also studied PP under weakly convex conditions. \citet{brown2020performative, li2022state} focused on state-dependent PP settings and proposed algorithms that converge to PS solutions. 
There also exist other works on PP that focused on fairness\citep{jin2024addressingpolarizationunfairnessperformative}, social welfare \citep{kim2022making} and privacy \citep{li2024clipped} metrics.

\textbf{3) Variance reduction techniques for SGD-based optimization:} In the literature, variance reduction (VR) techniques have been developed to accelerate the convergence of the SGD method
\citep{roux2012stochastic, reddi2016stochastic, nguyen2017sarah, defazio2014saga, fang2018spider}. 
For example, stochastic average gradient (SAG) \citep{roux2012stochastic} and stochastic average gradient augmented (SAGA) \citep{defazio2014saga} maintain an average of the stochastic gradients of all data samples with infrequent updates.
However, these approaches could incur high memory costs in the large dataset regime.
In contrast, the stochastic variance-reduced gradient (SVRG) \citep{reddi2016stochastic} eliminates the need for storing per-sample stochastic gradients by periodically computing a full gradient at the beginning of each epoch and using it to construct a variance-reduced update.
SVRG enjoys a linear convergence rate with a modest memory requirement under the strong convexity setting.
Subsequently, the stochastic recursive gradient (SARAH) \citep{nguyen2017sarah} and the stochastic path-integrated differential estimator (SPIDER)\citep{fang2018spider} methods further improve SVRG by incorporating fresher recursive iterates to achieve optimal convergence rate in terms of both stationary gap and dataset size dependencies.
We note, however, that all these VR techniques were only designed for non-PP settings. 
To date, the development of VR techniques for PP remains largely underexplored.


\section{Preliminaries and state of the art of performative prediction (PP)}\label{subsec:prelim}

Following the convention in the variance reduction literature (e.g., \citep{reddi2016stochastic, roux2012stochastic}), we assume the full population contains $n$ samples $\{z_1,...,z_n\} \subset \mathcal{Z}$ in total \footnote{This finite sum setting has been widely used in variance reduction literature \citep{ roux2012stochastic, reddi2016stochastic, nguyen2017sarah} and is practical in PP settings because typical applications (e.g., credit scoring \citep{perdomo_performative_2021}, college admission \citep{xie2024algorithmic}) often involve a finite population. Moreover, our framework can be extended to infinite sum settings as discussed in App. \ref{app:inf}.} and the ML model is parameterized by $\boldsymbol{\theta} \in \Theta \subseteq \mathbb{R}^d$. 
Then, the expected gradient of the decoupled performative risk among the full population can be computed as: $\nabla \mathcal{J}(\boldsymbol{\theta}; \boldsymbol{\theta}) = \mathbb{E}_{z \sim \mathcal{D}(\boldsymbol{\theta})}[\nabla \ell(\boldsymbol{\theta}; z)] = \frac{1}{n}\sum_{i=1}^{n}\nabla \ell(\boldsymbol{\theta}; z_i)$, where $\nabla \ell(\boldsymbol{\theta}; z)$ denotes the gradient of $\ell(\boldsymbol{\theta}; z)$ with respect to $\boldsymbol{\theta}$. Unlike most prior works that studied the convergence of SGD-GD (Eqn. \ref{eq:sgdgd}) in performative prediction (PP) under the assumption that the loss function $\ell$ is strongly convex \citep{Mendler2020Stochastic, izzo2021performative, mofakhami2023performative}, we consider {\em non-convex} $\ell$ and focus on the problem setup studied by \citet{li2024stochastic}, a recent work that showed the convergence of SGD-GD to an
SPS solution in non-convex PP settings. To familiarize the readers with necessary background and facilitate the subsequent discussions, we first present the technical assumptions and then the theoretical results of \citep{li2024stochastic}. 
Throughout the paper, we use $\|\cdot\|$ to denote $\ell_2$ norm.


\begin{assumption}[Smoothness and Lower Bound of the Gradients]\label{assumption: smoothness}
For any $z \in \mathcal{Z}$, there exists a constant $L \geq 0$ such that
\begin{align*}
    \|\nabla \ell(z; \boldsymbol{\theta}) - \nabla \ell(z;\boldsymbol{\theta'})\| \leq L \|\boldsymbol{\theta} - \boldsymbol{\theta'}\|, \quad \forall \boldsymbol{\theta}, \boldsymbol{\theta'} \in \mathbb{R}^d.
\end{align*}
Moreover, there exists a constant $\ell^\star > -\infty$ such that $\ell(z;\boldsymbol{\theta}) \ge \ell^\star$ for all $\boldsymbol{\theta}, z$.
\end{assumption}

\begin{definition}[$\epsilon$-Sensitivity]\label{assumption:sensitivity}
 Distribution map $\mathcal{D}(\theta)$ is $\epsilon$-sensitive if there exists an $\epsilon \geq 0$ such that, for any $\boldsymbol{\theta}, \boldsymbol{\theta'} \in \mathbb{R}^d$, we have
\begin{align}
    W_1(\mathcal{D}(\boldsymbol{\theta}), \mathcal{D}(\boldsymbol{\theta'})) \leq \epsilon \|\boldsymbol{\theta} - \boldsymbol{\theta'}\|, \tag{14}
\end{align}
where $W_1(\cdot, \cdot)$ is Wasserstein-1 distance \citep{li2024stochastic}. 
\end{definition}

\begin{assumption}[Lipschitzness]\label{assumption:lipschitzness}
There exists a constant $L_0 > 0$ such that, 
\begin{align*}
   |\ell(z;\boldsymbol{\theta}) - \ell(z';\boldsymbol{\theta})| \leq L_0 \|z - z'\|, ~~~\forall z, z' \in \mathcal{Z}, \boldsymbol{\theta}\in \mathbb{R}^d.
\end{align*}
\end{assumption}

\begin{assumption}[Variance of Stochastic Gradient Estimates]\label{assumption:variance}
The stochastic gradient is unbiased. i.e., $\nabla \mathcal{J}(\boldsymbol{\theta_1}; \boldsymbol{\theta_2}) = \mathbb{E}_{Z \sim \mathcal{D}(\boldsymbol{\theta_2})}[\nabla \ell(Z;\boldsymbol{\theta_1})]$. 
Also, there exist constants $\sigma_0, \sigma_1 \ge 0$, such that
\begin{align*}
\mathbb{E}_{Z \sim \mathcal{D}(\boldsymbol{\theta}_2)} \left[ \left\| \nabla \ell(Z; \boldsymbol{\theta}_1) - \nabla \mathcal{J}(\boldsymbol{\theta}_1; \boldsymbol{\theta}_2) \right\|^2 \right] \leq \sigma_0^2 + \sigma_1^2 \left\| \nabla \mathcal{J}(\boldsymbol{\theta}_1; \boldsymbol{\theta}_2) \right\|^2.
\end{align*}
\end{assumption}

Note that smoothness (Assumption \ref{assumption: smoothness}) and $\epsilon$-sensitivity defined by Def.~\ref{assumption:sensitivity} are standard in the machine learning optimization literature on PP (e.g., \citep{perdomo_performative_2021,Mendler2020Stochastic}), while the Lipschitzness of loss function (Assumption \ref{assumption:lipschitzness}) and the bounded variance assumption (Assumption~\ref{assumption:variance}) are specific to non-convex PP settings but are still common in optimization literature. 
Under all of the above assumptions, \citet{li2024stochastic} proved the following convergence result:

\begin{lemma}[Convergence of SGD-GD \citep{li2024stochastic}]\label{lemma:converge}
Consider iterative updates of SGD-GD given in Eqn.~\ref{eq:sgdgd}, the following holds for any $T > 1$:
\begin{align}\label{eq:liconverge1}
\sum_{t=0}^{T-1} \frac{\gamma_{t+1}}{4} \mathbb{E}\left[\|\nabla J(\boldsymbol{\theta}_t; \boldsymbol{\theta}_t)\|^2\right] 
\leq \Delta_0 + L_0\epsilon \left(\sigma_0 + (1 + \sigma_1^2) L_0 \epsilon\right) \sum_{t=0}^{T-1} \gamma_{t+1} + \frac{L}{2} \sigma_0^2 \sum_{t=0}^{T-1} \gamma_{t+1}^2,
\end{align}
where $\gamma_{t}$ is the learning rate at iteration $t$, and  $\Delta_0 = \mathcal{J}(\boldsymbol{\theta_0}; \boldsymbol{\theta_0}) - \ell_{*}$ is an upper bound of the initial optimality gap between the initial performative risk and the optimal performative risk $\ell_{*}$. 
If one sets $\gamma_t = 1/\sqrt{T}$ for all $t$, the bound in \ref{eq:liconverge1} can be reduced to: 
\begin{align}\label{eq:liconverge2}
\mathbb{E} \left[ \left\| \nabla J(\boldsymbol{\theta}_\tau; \boldsymbol{\theta}_\tau) \right\|^2 \right] 
\leq 4 \underbrace{ \left( \Delta_0 + \frac{L}{2} \sigma_0^2 \right) }_{\mathrm{Variance-Dependent}} \cdot \frac{1}{\sqrt{T}} 
+ 4 L_0\epsilon \underbrace{ \left( \sigma_0 + (1 + \sigma_1^2) L_0 \epsilon \right) \} }_{\mathrm{Variance-Dependent}}.
\end{align}
\end{lemma}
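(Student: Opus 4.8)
The plan is to run a standard non-convex descent argument on the decoupled risk $\mathcal{J}$, but with an extra ``distribution-bridging'' step that is specific to PP. First I would exploit the fact that, for a \emph{fixed} second argument, the map $\boldsymbol{\theta} \mapsto \mathcal{J}(\boldsymbol{\theta}; \boldsymbol{\theta}_t)$ is $L$-smooth, being an average over $\mathcal{D}(\boldsymbol{\theta}_t)$ of the $L$-smooth losses $\ell(\cdot;\boldsymbol{\theta})$ guaranteed by Assumption~\ref{assumption: smoothness}. Applying the descent lemma along the update $\boldsymbol{\theta}_{t+1} - \boldsymbol{\theta}_t = -\gamma_{t+1}\nabla\ell(z_{t+1};\boldsymbol{\theta}_t)$, and noting that the gradient of $\mathcal{J}$ in its first slot at the diagonal equals $\nabla\mathcal{J}(\boldsymbol{\theta}_t;\boldsymbol{\theta}_t)$, I would take the conditional expectation over $z_{t+1}\sim\mathcal{D}(\boldsymbol{\theta}_t)$ to obtain
\begin{align*}
\mathbb{E}[\mathcal{J}(\boldsymbol{\theta}_{t+1};\boldsymbol{\theta}_t)] \le \mathcal{J}(\boldsymbol{\theta}_t;\boldsymbol{\theta}_t) - \gamma_{t+1}\|\nabla\mathcal{J}(\boldsymbol{\theta}_t;\boldsymbol{\theta}_t)\|^2 + \tfrac{L\gamma_{t+1}^2}{2}\,\mathbb{E}\big[\|\nabla\ell(z_{t+1};\boldsymbol{\theta}_t)\|^2\big],
\end{align*}
where the cross term collapses by the unbiasedness in Assumption~\ref{assumption:variance}, and the second-moment term is then controlled through the same assumption as $\mathbb{E}\|\nabla\ell\|^2 \le (1+\sigma_1^2)\|\nabla\mathcal{J}\|^2 + \sigma_0^2$.

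The performative-specific obstacle is that the left-hand side is evaluated on the \emph{stale} distribution $\mathcal{D}(\boldsymbol{\theta}_t)$, whereas telescoping needs $\mathcal{J}(\boldsymbol{\theta}_{t+1};\boldsymbol{\theta}_{t+1})$. I would bridge the two by writing their difference as $\mathbb{E}_{\mathcal{D}(\boldsymbol{\theta}_{t+1})}[\ell(z;\boldsymbol{\theta}_{t+1})] - \mathbb{E}_{\mathcal{D}(\boldsymbol{\theta}_t)}[\ell(z;\boldsymbol{\theta}_{t+1})]$ and bounding it by Kantorovich--Rubinstein duality: since $z\mapsto\ell(z;\boldsymbol{\theta}_{t+1})$ is $L_0$-Lipschitz (Assumption~\ref{assumption:lipschitzness}), this difference is at most $L_0 W_1(\mathcal{D}(\boldsymbol{\theta}_{t+1}),\mathcal{D}(\boldsymbol{\theta}_t)) \le L_0\epsilon\|\boldsymbol{\theta}_{t+1}-\boldsymbol{\theta}_t\| = L_0\epsilon\gamma_{t+1}\|\nabla\ell(z_{t+1};\boldsymbol{\theta}_t)\|$ by the $\epsilon$-sensitivity of Def.~\ref{assumption:sensitivity}. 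Taking expectations and using $\mathbb{E}\|\nabla\ell\|\le\sqrt{1+\sigma_1^2}\,\|\nabla\mathcal{J}\|+\sigma_0$ (Jensen together with $\sqrt{u^2+v^2}\le u+v$) produces the extra per-step increment $L_0\epsilon\gamma_{t+1}\big(\sqrt{1+\sigma_1^2}\,\|\nabla\mathcal{J}\|+\sigma_0\big)$.

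Next I would combine the two inequalities and apply Young's inequality to the cross term $L_0\epsilon\gamma_{t+1}\sqrt{1+\sigma_1^2}\,\|\nabla\mathcal{J}\|$, splitting it into a multiple of $\gamma_{t+1}\|\nabla\mathcal{J}\|^2$ plus a residual of order $(1+\sigma_1^2)(L_0\epsilon)^2\gamma_{t+1}$. Imposing a step-size smallness condition (satisfied by $\gamma_t=1/\sqrt T$ once $T$ is large) lets the negative $-\gamma_{t+1}\|\nabla\mathcal{J}\|^2$ dominate both the Young residual and the $\tfrac{L(1+\sigma_1^2)}{2}\gamma_{t+1}^2\|\nabla\mathcal{J}\|^2$ term, leaving a coefficient of at most $-\tfrac{\gamma_{t+1}}{4}$ on $\|\nabla\mathcal{J}\|^2$; the leftover constants collect exactly into $L_0\epsilon(\sigma_0+(1+\sigma_1^2)L_0\epsilon)\gamma_{t+1}$ and $\tfrac{L}{2}\sigma_0^2\gamma_{t+1}^2$. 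Summing over $t=0,\dots,T-1$ telescopes the $\mathcal{J}(\boldsymbol{\theta}_t;\boldsymbol{\theta}_t)$ terms, and lower-bounding $\mathcal{J}(\boldsymbol{\theta}_T;\boldsymbol{\theta}_T)=\mathcal{V}(\boldsymbol{\theta}_T)\ge\ell_*$ via the lower bound in Assumption~\ref{assumption: smoothness} converts the telescoped gap into $\Delta_0$, giving Eqn.~\ref{eq:liconverge1}; substituting $\gamma_t=1/\sqrt T$ (so $\sum\gamma_{t+1}=\sqrt T$ and $\sum\gamma_{t+1}^2=1$) and dividing yields Eqn.~\ref{eq:liconverge2}. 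The main difficulty I anticipate is precisely the bridging step and its interaction with the multiplicative part $\sigma_1^2\|\nabla\mathcal{J}\|^2$ of the variance bound: the $\|\nabla\mathcal{J}\|$-proportional increment created by the distribution shift must be absorbed into the descent without overwhelming it, which is what forces the Young bookkeeping and the step-size restriction, and it is also the source of the non-vanishing, variance-scaled error neighborhood that the paper's \alg method later removes.
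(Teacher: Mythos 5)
Your proposal is correct and takes essentially the canonical route to this result: the paper itself never proves Lemma~\ref{lemma:converge} (it is imported verbatim from \citet{li2024stochastic}), but your reconstruction --- descent lemma applied to $\boldsymbol{\theta}\mapsto\mathcal{J}(\boldsymbol{\theta};\boldsymbol{\theta}_t)$, unbiasedness and Assumption~\ref{assumption:variance} for the cross and second-moment terms, the Kantorovich--Rubinstein/$\epsilon$-sensitivity bridge (which is exactly Lemma~\ref{lemma:aux}, the one tool this paper does quote from \citet{li2024stochastic}), Young's inequality, and telescoping against $\ell_*$ --- recovers the stated bound with the constants $L_0\epsilon\left(\sigma_0+(1+\sigma_1^2)L_0\epsilon\right)$ and $\tfrac{L}{2}\sigma_0^2$ matching exactly. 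The one caveat you rightly flag: absorbing the $\tfrac{L}{2}(1+\sigma_1^2)\gamma_{t+1}^2\|\nabla\mathcal{J}\|^2$ term into the descent requires a step-size restriction of order $\gamma_{t+1}\le 1/\left(L(1+\sigma_1^2)\right)$, which the statement as transcribed here (``for any $T>1$'', arbitrary $\gamma_t$) suppresses; with $\gamma_t=1/\sqrt{T}$ this holds only for sufficiently large $T$, so your bookkeeping is actually the more precise version of the claim.
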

Lemma~\ref{lemma:converge} shows that SGD-GD converges at a rate of $\mathcal{O}(1/\sqrt{T})$ with an error neighborhood of $\mathcal{O}(4L_0\epsilon(\sigma_0 + (1+\sigma_1^2)L_0\epsilon)$. However, this term is directly {\em influenced by the variance} of the stochastic gradient estimate. Additionally, the convergence rate is affected by the third term on the right-hand side of Eqn.~\ref{eq:liconverge1}, which is also {\em variance-dependent}. These limitations of the state-of-the-art in PP motivate us to propose a new algorithm for {\bf non-convex} PP that achieves faster convergence to SPS solutions and develop new variance reduction techniques to enable the SGD-GD-Based PP approach to be {\bf variance-independent}.

\begin{wrapfigure}{R}{0.68\textwidth}
  \vspace{-2em} 
  \begin{minipage}{0.68\textwidth}
\begin{algorithm}[H]
\caption{Proposed Algorithm: Stochastic Performative Prediction with Variance Reduction (\algns)}
\label{alg:one}
\begin{algorithmic}[1]
\Require Number of total rounds $T$, number of iterations $m$ each epoch, learning rates $\gamma_1,\cdots,\gamma_m$, initialization $\widetilde{\boldsymbol{\theta}}^0 = \boldsymbol{\theta}_m^0 = \boldsymbol{\theta}_0$.
\For{$t = 0$ to $T-1$}
    \State $S = \lceil T/m \rceil$
    \For{$s = 0$ to $S-1$}
        \State $\boldsymbol{\theta}_0^{s+1} = \boldsymbol{\theta}_m^s = \widetilde{\boldsymbol{\theta}}^s$
        \State $\nabla \mathcal{J}(\widetilde{\boldsymbol{\theta}}^s; \widetilde{\boldsymbol{\theta}}^s) = \frac{1}{n}\sum_{i=1}^{n}[\ell(z_i; \widetilde{\boldsymbol{\theta}}^s)]$ where each $z_i \sim \mathcal{D}(\widetilde{\boldsymbol{\theta}}^s)$
        \For{$k = 0$ to $m-1$}
            \State Pick a sample $z_{i_k}$ where $z_{i_k} \sim \mathcal{D}(\boldsymbol{\theta}_k^{s+1})$
            \State $\boldsymbol{v}_k^{s+1} = \nabla\ell(z_{i_k}; \boldsymbol{ \theta}_k^{s+1}) - \nabla\ell(z_{i_k}; \widetilde{\boldsymbol{\theta}}^s) + \nabla \mathcal{J}(\widetilde{\boldsymbol{\theta}}^s; \widetilde{\boldsymbol{\theta}}^s)$
            \State $\widetilde{\boldsymbol{\theta}}^{s+1}_{k+1} = \widetilde{\boldsymbol{\theta}}^{s+1}_{k} - \gamma_k \boldsymbol{v}_k^{s+1}$
        \EndFor
        \State $\widetilde{\boldsymbol{\theta}}^{s+1} = \boldsymbol{\theta}_m^{s+1}$
    \EndFor
\EndFor
\Ensure $\boldsymbol{\theta}_m^S$
\end{algorithmic}
\end{algorithm}
  \end{minipage}
  \vspace{-1em}
\end{wrapfigure}

\section{The proposed \alg algorithm}\label{subsec:algo}

To address the aforementioned limitations, we propose a new algorithm called \alg for PP, which is inspired by the stochastic variance reduced gradient (SVRG) technique in the VR literature.
As shown in Alg.~\ref{alg:one}, the total number of $T$ update iterations is divided into $S$ epochs, each consisting of $m$ iterations. 
At the end of each epoch $s+1$, the algorithm stores a snapshot of the expected full gradient $\mathcal{J}(\boldsymbol{\widetilde{\theta}}^s; \boldsymbol{\widetilde{\theta}}^s)$ (Line 4). 
Then, during each update iteration $k$ of epoch $s$, the variance reduction step computes $\boldsymbol{v}_k^{s+1}$ (Line 8), where the variance of the current stochastic gradient estimate $\nabla \ell(z_{ik}; \boldsymbol{\theta}_k^{s+1})$ is offset by the variance of the previous stochastic gradient estimate $\nabla \ell(z_{ik}; \widetilde{\boldsymbol{\theta}}^{s})$. 
Meanwhile, this step adds the expected gradient $\nabla \mathcal{J}(\widetilde{\boldsymbol{\theta}}^s; \widetilde{\boldsymbol{\theta}}^s)$ back to ensure that the overall expectation remains stable.

It is important to note that, unlike variance reduction in the non-PP settings, $\boldsymbol{v}^s_k$ is {\em not} an unbiased estimator of $\nabla \mathcal{J}(\boldsymbol{\theta}^s_k; \boldsymbol{\theta}^s_k)= \mathbb{E}_{z \sim \mathcal{D}(\boldsymbol{\theta}^s_k)}[\nabla \ell(z; \boldsymbol{\theta}^s_k)] $ in the PP setting.
This is because $\mathcal{D}(\widetilde{\boldsymbol{\theta}}^{s-1})\neq \mathcal{D}(\boldsymbol{\theta}^s_k)$. 
This introduces {\bf new bias} compared to the non-PP setting and significantly complicates the convergence analysis of the algorithm, as we detail in Sec.~\ref{sec:main}.  
Table~\ref{tab:comparision} summarizes a comparison of our algorithm with existing SGD-GD-Based PP methods. 
Notably, only \citet{li2024stochastic} and our method address the non-convex PP setting.
However, our \alg approach achieves both a {\em faster} convergence rate and a (non-vanishing) error neighborhood that is {\em independent} of the variance of the stochastic gradient estimate, as illustrated in the next section.

\begin{table*}[t]
    \centering
\resizebox{\textwidth}{!}{
\begin{tabular}{cccccc}
    \toprule
    \textbf{Literature} & \textbf{Loss function $\ell$} & \textbf{Convergence rate} & \textbf{Error neighborhood} & \textbf{Comments}\\ \midrule
    \citet{Mendler2020Stochastic} & Strongly convex &  $\mathcal{O}(\frac{1}{T})$ & None & No error due to strong convexity \\ 
    \citet{li2024stochastic} &  Nonconvex & $\mathcal{O}(\frac{1}{\sqrt{T}})$ & $\mathcal{O}(\sigma_0 \epsilon + \sigma_1^2\epsilon^2)$ & Variance-dependent error neighborhood \\ 
 This work: \textbf{\alg} &  Nonconvex & $\mathcal{O}(\frac{1}{T})$ & $\mathcal{O}(\epsilon^2 + \epsilon^4)$ & \textbf{Variance-independent} error neighborhood\\ \bottomrule
\end{tabular}}
    \caption{\alg compared to previous literature using SGD-GD. Since the main objective of the paper is to improve the existing SGD-GD-Based PP approaches, we do not compare our results with algorithms that focused on PO solutions \citep{izzo2021performative, miller_outside_2021} or those using repeated risk minimization (RRM) \citep{perdomo_performative_2021, mofakhami2023performative}. For these methods, we refer readers to Table~1 of \citet{li2024stochastic} for detailed comparisons.}
    \label{tab:comparision}
\end{table*}

\section{Convergence analysis of our \alg algorithm}\label{sec:main}

In this section, we analyze the convergence of the proposed \alg algorithm.  

\textbf{1) Assumptions:} Our theoretical analysis only relies on the standard smoothness of the loss function (Assumption~\ref{assumption: smoothness}), the $\epsilon$-sensitivity of the distribution map (Def.~\ref{assumption:sensitivity}), and the Lipschitzness of loss function (Assumption \ref{assumption:lipschitzness}). Most notably, unlike \citet{li2024stochastic}, we \textbf{do not} assume bounded variance of the stochastic gradient estimate (Assumption~\ref{assumption:variance}).

\textbf{2) Overview of the Proofs of the Main Theoretical Results:} 
Motivated by the analysis in \citep{reddi2016stochastic}, which solely focused on SVRG in the non-PP settings, we aim to establish the convergence of \alg by constructing a suitable {\em Lyapunov function} and showing that the corresponding function sequence decreases over iterations. 
However, it turns out that one {\em cannot} trivially extend   \citep{reddi2016stochastic} because the loss $\ell(\boldsymbol{\theta}; z)$ in the non-PP setting solely depends on the current model parameter $\boldsymbol{\theta}$.
In stark contrast, the decoupled performative risk $\mathcal{J}(\boldsymbol{\theta}; \boldsymbol{\theta}')=\mathbb{E}_{z \sim \mathcal{D}(\boldsymbol{\theta}')}[\ell(\boldsymbol{\theta}; z)]$ in the PP setting depends on {\em two variables}: 1) $\boldsymbol{\theta}$, which determines the model prediction, and 2) $\boldsymbol{\theta}'$, which controls the data distribution. Because updates to the model parameter $\boldsymbol{\theta}$ simultaneously affect the data distribution, bounding the difference between consecutive terms in the Lyapunov sequence becomes significantly more challenging, which necessitates new proof and analysis techniques.

In what follows, we overcome these above challenges and establish the SPS convergence of \alg in three steps: (i) We construct an intermediate function sequence (i.e., $R_k^{s+1}(\boldsymbol{\theta})$ in Lemma~\ref{lemma:r}), where consecutive terms differ only in the second argument of decoupled performative risk $\mathcal{J}$ and show that $R_k^{s+1} (\boldsymbol{\theta}_k^{s+1})$ is smaller than $R_{k+1}^{s+1} (\boldsymbol{\theta}_k^{s+1})$; (ii) We define the final Lyapunov function ($\widetilde{R}_k^{s+1}$ in Lemma \ref{lemma:r2}) where consecutive terms differ in both arguments of $\mathcal{J}$; (iii) by proving the decreasing nature of $\widetilde{R}_k^{s+1}$, we establish the convergence of Algorithm \ref{alg:one} in Theorem~ \ref{theorem:converge} and derive the IFO complexity bound in Corollary~\ref{corollary:complexity}.

\textbf{Step 1) Construct an intermediate function sequence:} In the $s+1$st epoch, we identify an intermediate function evaluated at the iterations $k$ and $k+1$ as:
\begin{align*}
R_k^{s+1} (\boldsymbol{\theta}) &\overset{\triangle}{=} \mathbb{E}\left[\mathcal{J}(\boldsymbol{\theta}_k^{s+1}; \boldsymbol{\theta}) + c_k \| \boldsymbol{\theta}_k^{s+1} - \widetilde{\boldsymbol{\theta}}^s\|^2\right], \\
R_{k+1}^{s+1} (\boldsymbol{\theta}) &\overset{\triangle}{=} \mathbb{E}\left[\mathcal{J}(\boldsymbol{\theta}_{k+1}^{s+1}; \boldsymbol{\theta}) + c_{k+1} \| \boldsymbol{\theta}_{k+1}^{s+1} - \widetilde{\boldsymbol{\theta}}^s\|^2\right]. 
\end{align*}

Then, the following Lemma~\ref{lemma:r} characterizes the relation between $R_k^{s+1} (\boldsymbol{\theta}^{s+1}_k) $ and $R_{k+1}^{s+1} (\boldsymbol{\theta}^{s+1}_{k})$:

\begin{lemma}\label{lemma:r}
Let $\beta_k$ be some positive constant, and $c_k,c_{k+1}$ are some constants defined in intermediate functions $R_k^{s+1}, R_{k+1}^{s+1}$ that satisfy the following: 
\begin{align*}
c_k - \frac{1}{2}\gamma_k = c_{k+1}(1+ \gamma_k \beta_k + 2  L_0 \epsilon \gamma_k) + (2L^2+ L_0^2 \epsilon^2 ) (L\gamma_k^2 + 2c_{k+1}\gamma_k^2) + \frac{\beta_k}{2}L_o\epsilon\gamma_k 
\end{align*}
Then, we have 
\begin{align*}
R_{k+1}^{s+1}(\boldsymbol{\theta}_{k}^{s+1}) \le R_k^{s+1}(\boldsymbol{\theta}_k^{s+1}) - \Gamma_k \cdot \mathbb{E}[\|\nabla \mathcal{J}(\boldsymbol{\theta}_k^{s+1}; \boldsymbol{\theta}_k^{s+1})\|^2] - \frac{1}{2} \gamma_k \|\boldsymbol{\theta}_k^{s+1} - \widetilde{\boldsymbol{\theta}}^s\|^2,
\end{align*}
where $\Gamma_k = \left( \gamma_k - \frac{c_{k+1} \gamma_k + \frac{1}{2} L_0 \epsilon \gamma_k}{\beta_k} - 2L \gamma_k^2 - 4 c_{k+1} \gamma_k^2 \right)$.
\end{lemma}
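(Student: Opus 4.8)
The plan is to follow the non-convex SVRG potential-function argument of \citep{reddi2016stochastic}, but to confront head-on the fact that the variance-reduced direction $\boldsymbol{v}_k^{s+1}$ is a \emph{biased} estimate of $\nabla\mathcal{J}(\boldsymbol{\theta}_k^{s+1};\boldsymbol{\theta}_k^{s+1})$ in the PP setting. Writing $\boldsymbol{\theta}=\boldsymbol{\theta}_k^{s+1}$ and $\tilde{\boldsymbol{\theta}}=\widetilde{\boldsymbol{\theta}}^s$, and conditioning on the history $\mathcal{F}_k^{s+1}$ up to iterate $k$, the crucial first observation is that both $R_k^{s+1}$ and $R_{k+1}^{s+1}$ are evaluated at the \emph{same} second argument $\boldsymbol{\theta}_k^{s+1}$; hence the performative-risk part of the potential difference only involves $\mathcal{J}(\,\cdot\,;\boldsymbol{\theta}_k^{s+1})$ at a \emph{fixed} distribution $\mathcal{D}(\boldsymbol{\theta}_k^{s+1})$. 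Since $\ell(\,\cdot\,;z)$ is $L$-smooth (Assumption~\ref{assumption: smoothness}), so is $\mathcal{J}(\,\cdot\,;\boldsymbol{\theta}_k^{s+1})$ in its first argument, and applying the standard descent lemma along the update $\boldsymbol{\theta}_{k+1}^{s+1}=\boldsymbol{\theta}_k^{s+1}-\gamma_k\boldsymbol{v}_k^{s+1}$ gives $\mathbb{E}[\mathcal{J}(\boldsymbol{\theta}_{k+1}^{s+1};\boldsymbol{\theta}_k^{s+1})-\mathcal{J}(\boldsymbol{\theta}_k^{s+1};\boldsymbol{\theta}_k^{s+1})]\le -\gamma_k\mathbb{E}\langle\nabla\mathcal{J}(\boldsymbol{\theta}_k^{s+1};\boldsymbol{\theta}_k^{s+1}),\boldsymbol{v}_k^{s+1}\rangle+\tfrac{L\gamma_k^2}{2}\mathbb{E}\|\boldsymbol{v}_k^{s+1}\|^2$.

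The heart of the argument is to compute the conditional mean of $\boldsymbol{v}_k^{s+1}$. Because $z_{i_k}\sim\mathcal{D}(\boldsymbol{\theta}_k^{s+1})$ whereas the snapshot term is taken at $\tilde{\boldsymbol{\theta}}$, I obtain $\mathbb{E}[\boldsymbol{v}_k^{s+1}\mid\mathcal{F}_k^{s+1}]=\nabla\mathcal{J}(\boldsymbol{\theta}_k^{s+1};\boldsymbol{\theta}_k^{s+1})+b_k$, where the \emph{bias} $b_k=\nabla\mathcal{J}(\tilde{\boldsymbol{\theta}};\tilde{\boldsymbol{\theta}})-\nabla\mathcal{J}(\tilde{\boldsymbol{\theta}};\boldsymbol{\theta}_k^{s+1})$ vanishes in the non-PP world but is nonzero here. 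The two arguments of $b_k$ differ only in the distribution, so combining $\epsilon$-sensitivity (Def.~\ref{assumption:sensitivity}) with Lipschitzness (Assumption~\ref{assumption:lipschitzness}) through Kantorovich--Rubinstein duality yields the key bound $\|b_k\|\le L_0\epsilon\|\boldsymbol{\theta}_k^{s+1}-\tilde{\boldsymbol{\theta}}\|$; that is, the bias is controlled by exactly the quantity tracked by the quadratic term of the potential. Substituting the conditional mean into the inner product splits off $-\gamma_k\langle\nabla\mathcal{J},b_k\rangle$, which Young's inequality with the free parameter $\beta_k$ turns into a $\tfrac{L_0\epsilon\gamma_k}{2\beta_k}\|\nabla\mathcal{J}\|^2$ penalty and a $\tfrac{\beta_k L_0\epsilon\gamma_k}{2}\|\boldsymbol{\theta}-\tilde{\boldsymbol{\theta}}\|^2$ penalty.

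Next I control the two remaining quadratic pieces. For the second moment I decompose $\boldsymbol{v}_k^{s+1}-\nabla\mathcal{J}(\boldsymbol{\theta}_k^{s+1};\boldsymbol{\theta}_k^{s+1})$ into a mean-zero fluctuation (bounded by $L$-smoothness) plus the bias $b_k$ (bounded by $L_0\epsilon$), producing a bound of the form $\mathbb{E}\|\boldsymbol{v}_k^{s+1}\|^2\le 4\,\mathbb{E}\|\nabla\mathcal{J}(\boldsymbol{\theta}_k^{s+1};\boldsymbol{\theta}_k^{s+1})\|^2+2(2L^2+L_0^2\epsilon^2)\|\boldsymbol{\theta}_k^{s+1}-\tilde{\boldsymbol{\theta}}\|^2$, in which the coefficient $2L^2+L_0^2\epsilon^2$ is precisely the blend of smoothness and sensitivity that reappears in the statement. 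For the quadratic potential term I expand $\|\boldsymbol{\theta}_{k+1}^{s+1}-\tilde{\boldsymbol{\theta}}\|^2=\|\boldsymbol{\theta}_k^{s+1}-\tilde{\boldsymbol{\theta}}\|^2-2\gamma_k\langle\boldsymbol{\theta}_k^{s+1}-\tilde{\boldsymbol{\theta}},\boldsymbol{v}_k^{s+1}\rangle+\gamma_k^2\|\boldsymbol{v}_k^{s+1}\|^2$ and take conditional expectation; the cross term again separates into a $\nabla\mathcal{J}$-piece handled by Young's with $\beta_k$ (contributing $\tfrac{c_{k+1}\gamma_k}{\beta_k}\|\nabla\mathcal{J}\|^2+c_{k+1}\beta_k\gamma_k\|\boldsymbol{\theta}-\tilde{\boldsymbol{\theta}}\|^2$) and a $b_k$-piece handled directly via $\|b_k\|\le L_0\epsilon\|\boldsymbol{\theta}-\tilde{\boldsymbol{\theta}}\|$ (contributing $2c_{k+1}L_0\epsilon\gamma_k\|\boldsymbol{\theta}-\tilde{\boldsymbol{\theta}}\|^2$).

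Finally, I add the three contributions and collect coefficients. Grouping all multiples of $\mathbb{E}\|\nabla\mathcal{J}(\boldsymbol{\theta}_k^{s+1};\boldsymbol{\theta}_k^{s+1})\|^2$ yields exactly $-\Gamma_k$ as stated, while grouping all multiples of $\|\boldsymbol{\theta}_k^{s+1}-\tilde{\boldsymbol{\theta}}\|^2$ yields $c_{k+1}(1+\gamma_k\beta_k+2L_0\epsilon\gamma_k)+(2L^2+L_0^2\epsilon^2)(L\gamma_k^2+2c_{k+1}\gamma_k^2)+\tfrac{\beta_k}{2}L_0\epsilon\gamma_k-c_k$; imposing the hypothesized relation between $c_k$ and $c_{k+1}$ forces this coefficient to equal $-\tfrac12\gamma_k$, which is precisely the residual $-\tfrac12\gamma_k\|\boldsymbol{\theta}_k^{s+1}-\widetilde{\boldsymbol{\theta}}^s\|^2$ term in the claim. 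I expect the bias $b_k$ to be the main obstacle: it is absent in \citep{reddi2016stochastic} and enters \emph{both} the descent inequality and the expansion of the quadratic potential, so the real work is to show it is uniformly dominated by $\|\boldsymbol{\theta}_k^{s+1}-\tilde{\boldsymbol{\theta}}\|$ and then to book-keep the resulting cross terms so that the $c_k$-recursion closes while $\Gamma_k$ stays positive; the free parameter $\beta_k$ is the lever that trades off the $\tfrac1{\beta_k}$ penalties against $\Gamma_k$ and absorbs the bias into the negative snapshot-distance drift.
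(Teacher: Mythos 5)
Your proposal follows essentially the same route as the paper's proof: the descent lemma on $\mathcal{J}(\,\cdot\,;\boldsymbol{\theta}_k^{s+1})$ at the fixed distribution, the identification of the bias $\nabla\mathcal{J}(\widetilde{\boldsymbol{\theta}}^s;\widetilde{\boldsymbol{\theta}}^s)-\nabla\mathcal{J}(\widetilde{\boldsymbol{\theta}}^s;\boldsymbol{\theta}_k^{s+1})$ bounded by $L_0\epsilon\|\boldsymbol{\theta}_k^{s+1}-\widetilde{\boldsymbol{\theta}}^s\|$ via sensitivity plus Lipschitzness (the paper's Lemma~\ref{lemma:aux}), Young's inequality with the free parameter $\beta_k$, the identical second-moment bound $4\,\mathbb{E}\|\nabla\mathcal{J}\|^2+(4L^2+2L_0^2\epsilon^2)\,\mathbb{E}\|\boldsymbol{\theta}_k^{s+1}-\widetilde{\boldsymbol{\theta}}^s\|^2$, and the same final bookkeeping that makes the $c_k$-recursion close and leaves $-\Gamma_k$ on the gradient term. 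The argument is correct and matches the paper's proof in both structure and constants.
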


Note that $R_{k+1}^{s+1}(\boldsymbol{\theta}_{k}^{s+1})$ includes a term $\mathbb{E}[\mathcal{J}(\boldsymbol{\theta}_{k+1}^{s+1}; \boldsymbol{\theta}_k^{s+1})]$, which represents the expected performative risk when the model parameter is $\boldsymbol{\theta}_{k+1}^{s+1}$ but the data distribution is induced by $\boldsymbol{\theta}_k^{s+1}$. This enables us to focus on bounding the influence of $\boldsymbol{\theta}$ through the first argument of $\mathcal{J}$, similar to analyses in the non-PP settings. 
However, since the snapshot $\mathcal{J}(\widetilde{\boldsymbol{\theta}}^s; \widetilde{\boldsymbol{\theta}}^s)$ corresponds to the full gradient computed under a {\em previous} distribution, we still need to bound the influence of distribution shifts. 
This implies that we need to bound the influence of distribution shifts from $\mathcal{D}(\widetilde{\boldsymbol{\theta}}^s)$ to $\mathcal{D}(\boldsymbol{\theta}^{s+1}_{k})$. 
To this end, we provide the following auxiliary lemma.

\begin{lemma}[\citet{li2024stochastic}]\label{lemma:aux}
    Under Assumption~\ref{assumption:lipschitzness} and $\epsilon$-sensitivity in Def.~\ref{assumption:sensitivity}, we have:
    \begin{align*}
        |\mathcal{J}(\boldsymbol{\theta}; \boldsymbol{\theta}_1) - \mathcal{J}(\boldsymbol{\theta}; \boldsymbol{\theta}_2)| \le L_0\epsilon\|\boldsymbol{\theta}_1 - \boldsymbol{\theta}_2\|, ~~\forall \boldsymbol{\theta}.
    \end{align*}
\end{lemma}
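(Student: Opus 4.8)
The plan is to reduce the claim to a statement purely about the two model-induced distributions and then exploit the dual characterization of the Wasserstein-1 distance. Unfolding the definition $\mathcal{J}(\boldsymbol{\theta}; \boldsymbol{\theta}') = \mathbb{E}_{z \sim \mathcal{D}(\boldsymbol{\theta}')}[\ell(z; \boldsymbol{\theta})]$, for a fixed $\boldsymbol{\theta}$ we may write
\[
\mathcal{J}(\boldsymbol{\theta}; \boldsymbol{\theta}_1) - \mathcal{J}(\boldsymbol{\theta}; \boldsymbol{\theta}_2) = \mathbb{E}_{z \sim \mathcal{D}(\boldsymbol{\theta}_1)}[\ell(z; \boldsymbol{\theta})] - \mathbb{E}_{z \sim \mathcal{D}(\boldsymbol{\theta}_2)}[\ell(z; \boldsymbol{\theta})],
\]
so that the quantity to control is the gap between the expectations of the \emph{same} test function $z \mapsto \ell(z;\boldsymbol{\theta})$ evaluated under $\mathcal{D}(\boldsymbol{\theta}_1)$ and $\mathcal{D}(\boldsymbol{\theta}_2)$. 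The key observation is that Assumption~\ref{assumption:lipschitzness} states precisely that this test function is $L_0$-Lipschitz in its data argument, uniformly over the fixed $\boldsymbol{\theta}$.

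Given this, I would invoke the Kantorovich-Rubinstein dual representation, $W_1(\mu, \nu) = \sup_{f:\,\mathrm{Lip}(f)\le 1}\bigl(\mathbb{E}_{z\sim\mu}[f(z)] - \mathbb{E}_{z\sim\nu}[f(z)]\bigr)$. Since $\ell(\cdot;\boldsymbol{\theta})/L_0$ is $1$-Lipschitz by Assumption~\ref{assumption:lipschitzness}, it is an admissible competitor in this supremum, which yields the one-sided bound
\[
\mathcal{J}(\boldsymbol{\theta}; \boldsymbol{\theta}_1) - \mathcal{J}(\boldsymbol{\theta}; \boldsymbol{\theta}_2) \le L_0 \, W_1\bigl(\mathcal{D}(\boldsymbol{\theta}_1), \mathcal{D}(\boldsymbol{\theta}_2)\bigr).
\]
Applying the $\epsilon$-sensitivity of the distribution map (Def.~\ref{assumption:sensitivity}) then gives $\mathcal{J}(\boldsymbol{\theta}; \boldsymbol{\theta}_1) - \mathcal{J}(\boldsymbol{\theta}; \boldsymbol{\theta}_2) \le L_0\epsilon\|\boldsymbol{\theta}_1 - \boldsymbol{\theta}_2\|$. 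Swapping the roles of $\boldsymbol{\theta}_1$ and $\boldsymbol{\theta}_2$ produces the reverse inequality, and combining the two establishes the absolute-value bound.

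An entirely equivalent route, which sidesteps explicitly citing the duality theorem, is a direct coupling argument: let $\pi$ be an optimal $W_1$-coupling of $\mathcal{D}(\boldsymbol{\theta}_1)$ and $\mathcal{D}(\boldsymbol{\theta}_2)$, rewrite the difference of expectations as $\mathbb{E}_{(z,z')\sim\pi}[\ell(z;\boldsymbol{\theta}) - \ell(z';\boldsymbol{\theta})]$, bound the integrand pointwise by $L_0\|z-z'\|$ via Assumption~\ref{assumption:lipschitzness}, and recognize $\mathbb{E}_{(z,z')\sim\pi}[\|z-z'\|] = W_1(\mathcal{D}(\boldsymbol{\theta}_1),\mathcal{D}(\boldsymbol{\theta}_2))$. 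I expect no genuine obstacle in this proof; the only point requiring care is to ensure that the constant paired with the $W_1$ sensitivity is the Lipschitz constant $L_0$ in the \emph{data} argument (Assumption~\ref{assumption:lipschitzness}), not the gradient-smoothness constant $L$ of Assumption~\ref{assumption: smoothness}, which governs regularity in $\boldsymbol{\theta}$ and plays no role here.
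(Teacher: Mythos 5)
Your proof is correct, and it coincides with the paper's treatment of this lemma: the paper does not reprint a proof but defers to App.~B of \citet{li2024stochastic}, where the argument is exactly the Kantorovich--Rubinstein duality step you describe, pairing the $L_0$-Lipschitzness of $\ell(\cdot;\boldsymbol{\theta})$ in the data argument with the $\epsilon$-sensitivity bound on $W_1$. Your closing caution about using $L_0$ rather than the smoothness constant $L$ is also the right one, so nothing further is needed.
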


We refer readers to \cite[App.~B]{li2024stochastic} for the proof of Lemma~\ref{lemma:aux}. 

To prove Lemma~\ref{lemma:r}, we first apply the $\beta$-smoothness of $\ell$ on $\mathbb{E}[\mathcal{J}(\boldsymbol{\theta}_{k+1}^{s+1}; \boldsymbol{\theta}_{k}^{s+1})]$ and leverage Lemma~\ref{lemma:aux} to bound $\|\boldsymbol{v}_k^{s+1}\|$ and $\|\boldsymbol{v}_k^{s+1}\|^2$.  The complete proof of Lemma~\ref{lemma:r} is provided in App.~\ref{app:lemmar} due to space limitation. 

\textbf{Step 2) Construct the final Lyapunov function:} We then construct the final Lyapunov function: at epoch $s+1$, we define Lyapunov function evaluated at iterations $k$ and $k+1$ as:
\begin{align*}
    \widetilde{R}_k^{s+1} &\overset{\triangle}{=} \mathbb{E}[\mathcal{J}(\boldsymbol{\theta}_k^{s+1}; \boldsymbol{\theta}_k^{s+1})] + c_k \| \boldsymbol{\theta}_k^{s+1} - \widetilde{\boldsymbol{\theta}}^s\|^2, \\
    \widetilde{R}_{k+1}^{s+1} &\overset{\triangle}{=} \mathbb{E}[\mathcal{J}(\boldsymbol{\theta}_{k+1}^{s+1}; \boldsymbol{\theta}_{k+1}^{s+1})] + c_{k+1} \| \boldsymbol{\theta}_{k+1}^{s+1} - \widetilde{\boldsymbol{\theta}}^s\|^2.
\end{align*}
The following Lemma~\ref{lemma:r2} characterizes the relation between $    \widetilde{R}_{k+1}^{s}$ and $    \widetilde{R}_{k+1}^{s+1}$:

\begin{lemma}\label{lemma:r2}
Let $c_k, c_{k+1}, \beta_k, \Gamma_k$ be defined the same as in Lemma~\ref{lemma:r}, we have: 
\begin{align}\label{eq:r2}
    \widetilde{R}_{k+1}^{s+1} \le \widetilde{R}_k^{s+1} + (L_0^4\epsilon^4 + 2L_0^2\epsilon^2L^2 + 2L_0^2\epsilon^2)\gamma_k- \left(\Gamma_k - \frac{\gamma_k}{4}\right)\cdot \mathbb{E}\left[\|\nabla \mathcal{J}(\boldsymbol{\theta}_k^{s+1}; \boldsymbol{\theta}_k^{s+1})\|^2\right].
\end{align}
\end{lemma}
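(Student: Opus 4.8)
The plan is to reduce Lemma~\ref{lemma:r2} to Lemma~\ref{lemma:r} by accounting for a single distribution-shift correction. The crucial observation is that each final Lyapunov value is just the intermediate function of Lemma~\ref{lemma:r} evaluated with a \emph{matched} distribution argument: by construction, setting $\boldsymbol{\theta}=\boldsymbol{\theta}_k^{s+1}$ in $R_k^{s+1}(\boldsymbol{\theta})$ gives $R_k^{s+1}(\boldsymbol{\theta}_k^{s+1})=\widetilde{R}_k^{s+1}$, and likewise $R_{k+1}^{s+1}(\boldsymbol{\theta}_{k+1}^{s+1})=\widetilde{R}_{k+1}^{s+1}$. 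I would therefore start from the identity
\[
\widetilde{R}_{k+1}^{s+1} = R_{k+1}^{s+1}(\boldsymbol{\theta}_{k+1}^{s+1}) = \underbrace{R_{k+1}^{s+1}(\boldsymbol{\theta}_k^{s+1})}_{\text{bounded by Lemma~\ref{lemma:r}}} + \underbrace{\mathbb{E}\left[\mathcal{J}(\boldsymbol{\theta}_{k+1}^{s+1}; \boldsymbol{\theta}_{k+1}^{s+1}) - \mathcal{J}(\boldsymbol{\theta}_{k+1}^{s+1}; \boldsymbol{\theta}_k^{s+1})\right]}_{\text{distribution-shift correction}},
\]
which isolates exactly the term that is absent in the non-PP SVRG analysis: the two copies of $R_{k+1}^{s+1}$ agree in the first (model) argument of $\mathcal{J}$ but differ in the second (distribution) argument, because advancing $\boldsymbol{\theta}_k^{s+1}\to\boldsymbol{\theta}_{k+1}^{s+1}$ simultaneously moves the induced data distribution.

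I would then apply Lemma~\ref{lemma:r} verbatim to the first bracket and use $R_k^{s+1}(\boldsymbol{\theta}_k^{s+1})=\widetilde{R}_k^{s+1}$ to obtain
\[
R_{k+1}^{s+1}(\boldsymbol{\theta}_k^{s+1}) \le \widetilde{R}_k^{s+1} - \Gamma_k \mathbb{E}\left[\|\nabla\mathcal{J}(\boldsymbol{\theta}_k^{s+1}; \boldsymbol{\theta}_k^{s+1})\|^2\right] - \tfrac{1}{2}\gamma_k \mathbb{E}\left[\|\boldsymbol{\theta}_k^{s+1} - \widetilde{\boldsymbol{\theta}}^s\|^2\right].
\]
For the second bracket I would invoke the auxiliary Lemma~\ref{lemma:aux} with fixed model argument $\boldsymbol{\theta}_{k+1}^{s+1}$ and distribution arguments $\boldsymbol{\theta}_{k+1}^{s+1},\boldsymbol{\theta}_k^{s+1}$, bounding the correction by $L_0\epsilon\,\mathbb{E}\|\boldsymbol{\theta}_{k+1}^{s+1}-\boldsymbol{\theta}_k^{s+1}\| = L_0\epsilon\gamma_k\,\mathbb{E}\|\boldsymbol{v}_k^{s+1}\|$, using the update $\boldsymbol{\theta}_{k+1}^{s+1}=\boldsymbol{\theta}_k^{s+1}-\gamma_k\boldsymbol{v}_k^{s+1}$.

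The remaining work is to re-expend this correction $L_0\epsilon\gamma_k\,\mathbb{E}\|\boldsymbol{v}_k^{s+1}\|$ against the two negative terms above. Reusing the pointwise estimate of $\|\boldsymbol{v}_k^{s+1}\|$ already produced inside the proof of Lemma~\ref{lemma:r} (via $L$-smoothness of $\nabla\ell$ together with Lemma~\ref{lemma:aux}), I would control $\|\boldsymbol{v}_k^{s+1}\|$ by the sum of $\|\nabla\mathcal{J}(\boldsymbol{\theta}_k^{s+1};\boldsymbol{\theta}_k^{s+1})\|$, a multiple of $\|\boldsymbol{\theta}_k^{s+1}-\widetilde{\boldsymbol{\theta}}^s\|$, and an $L_0\epsilon$-order distribution-shift constant. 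A two-fold application of Young's inequality $ab\le\tfrac{\rho}{2}a^2+\tfrac{1}{2\rho}b^2$ then splits $L_0\epsilon\gamma_k\,\mathbb{E}\|\boldsymbol{v}_k^{s+1}\|$ into: (i) a gradient piece of coefficient exactly $\tfrac{\gamma_k}{4}$, which merges with $-\Gamma_k$ to give the advertised $-(\Gamma_k-\tfrac{\gamma_k}{4})$; (ii) an iterate-drift piece of coefficient at most $\tfrac12\gamma_k$, annihilated by the $-\tfrac12\gamma_k\,\mathbb{E}\|\boldsymbol{\theta}_k^{s+1}-\widetilde{\boldsymbol{\theta}}^s\|^2$ term from Lemma~\ref{lemma:r}; and (iii) pure constants that, after collecting, sum to $(L_0^4\epsilon^4+2L_0^2\epsilon^2L^2+2L_0^2\epsilon^2)\gamma_k$.

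The main obstacle is closing step (iii) exactly, and it is tied to the PP-specific bias rather than to any single inequality. Because $\boldsymbol{v}_k^{s+1}$ is \emph{not} a mean-zero perturbation of $\nabla\mathcal{J}(\boldsymbol{\theta}_k^{s+1};\boldsymbol{\theta}_k^{s+1})$ in the performative setting, both its magnitude and its bias inject distribution-shift contributions of order $L_0\epsilon$; the Young weights $\rho$ must be chosen so that the drift coefficient never exceeds $\tfrac12\gamma_k$ (or else the $\|\boldsymbol{\theta}_k^{s+1}-\widetilde{\boldsymbol{\theta}}^s\|^2$ term survives and breaks the telescoping used in Step 3) while the gradient coefficient stays within the $\tfrac{\gamma_k}{4}$ budget and the leftover constants collapse to precisely the stated $L_0\epsilon$-homogeneous polynomial. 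This forces a tight coupling back to how the constants $c_k$ and $\Gamma_k$ were defined in Lemma~\ref{lemma:r}: the free budget left there must be calibrated so that the correction introduced here closes with no residual, which is exactly the place where the performative analysis departs from the standard SVRG argument.
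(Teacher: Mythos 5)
Your proposal is correct and takes essentially the same route as the paper's proof: the paper also reduces to Lemma~\ref{lemma:r} and isolates the distribution-shift correction $\mathbb{E}\left[\mathcal{J}(\boldsymbol{\theta}_{k+1}^{s+1}; \boldsymbol{\theta}_{k+1}^{s+1}) - \mathcal{J}(\boldsymbol{\theta}_{k+1}^{s+1}; \boldsymbol{\theta}_k^{s+1})\right]$, bounds it by $L_0\epsilon\gamma_k\,\mathbb{E}\|\boldsymbol{v}_k^{s+1}\|$ via Lemma~\ref{lemma:aux}, and controls $\mathbb{E}\|\boldsymbol{v}_k^{s+1}\|$ by reusing the estimates from Lemma~\ref{lemma:r}'s proof. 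The bookkeeping obstacle you flag in step (iii) closes exactly as you anticipate: choosing the Young weights as $\lambda_0 = 4L_0\epsilon(L^2+1)$ for the unbiased part and the analogous weight for the bias part yields a gradient coefficient of exactly $\tfrac{\gamma_k}{4}$, a drift coefficient of exactly $\tfrac{\gamma_k}{2}$ (absorbed by the negative term from Lemma~\ref{lemma:r}), and residual constants $L_0\epsilon\gamma_k\left(\tfrac{\lambda_0}{2} + L_0^3\epsilon^3\right) = (2L_0^2\epsilon^2 L^2 + 2L_0^2\epsilon^2 + L_0^4\epsilon^4)\gamma_k$, which is precisely the stated polynomial.
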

Lemma~\ref{lemma:r2} can be proved by leveraging Lemmas~\ref{lemma:r} and~\ref{lemma:aux}. 
The proof also establishes a bound on the difference between $\mathcal{J}(\boldsymbol{\theta}_k^{s+1}; \boldsymbol{\theta}_k^{s+1})$ and $\mathcal{J}(\boldsymbol{\theta}_{k+1}^{s+1}; \boldsymbol{\theta}_{k+1}^{s+1})$. The full proof is provided in App.~\ref{app:lemmar2}.


\textbf{Step 3) Establish the convergence of the \alg algorithm:} Building on the results of Lemma~\ref{lemma:r2}, we are now ready to prove the main convergence theorem:

\begin{theorem}[Convergence to an SPS point]\label{theorem:converge} Consider \alg in Algorithm~\ref{alg:one} with $T$ rounds and $S = \lceil T/m \rceil$ epochs, where each epoch consists of $m$ iterations. For the final iteration of each epoch, let the constant $c_m$ in the Lyapunov function be set to 0. Then we have: 
\begin{align}\label{eq:mainconverge}
    \frac{1}{T} \sum_{s=0}^{S-1} \sum_{k=0}^{m-1} \mathbb{E}\|\nabla \mathcal{J}(\boldsymbol{\theta}_k^{s+1}; \boldsymbol{\theta}_k^{s+1})\|^2 \le \frac{\Delta_0}{T\Gamma} + \Delta_1,
\end{align}
where terms $\Delta_0$ and $\Delta_1$ are defined as \begin{align*}
    \Delta_0 \overset{\triangle}{=} \mathcal{J}(\boldsymbol{\theta}_0^0; \boldsymbol{\theta}_0^0) - \ell_*; ~~~~~
    \Delta_1 \overset{\triangle}{=}  \frac{(L_0^4\epsilon^4 + 2L_0^2\epsilon^2L^2 + 2L_0^2\epsilon^2)\sum_{s=0}^{S}\sum_{k=0}^{m}\gamma_k}{T\Gamma}
\end{align*}
and $\Gamma > 0$ is a lower bound of $\Gamma_k - \frac{\gamma_k}{4}$ and is guaranteed to exist as a constant. 
Thus, Algorithm~\ref{alg:one} achieves a convergence rate of $\mathcal{O}(1/T)$ and an variance-independent error neighborhood $\Delta_1$ of $\mathcal{O}(\epsilon^2 + \epsilon^4)$. 
\end{theorem}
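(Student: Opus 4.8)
The plan is to convert the one-step Lyapunov drift of Lemma~\ref{lemma:r2} into a global guarantee through a two-level telescoping argument --- an inner telescope over the $m$ iterations of a fixed epoch, followed by an outer telescope over the $S$ epochs --- and then to divide through by a uniform positive lower bound $\Gamma$ on the coefficient $\Gamma_k - \gamma_k/4$. This mirrors the SVRG convergence proof of \citet{reddi2016stochastic}, with the key difference that the performative coupling has already been absorbed into Lemmas~\ref{lemma:r} and~\ref{lemma:r2}; consequently, the remaining argument is largely an accounting exercise exploiting the algorithm's boundary conditions.

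First I would rearrange Eqn.~\ref{eq:r2} to isolate the squared-gradient term,
\begin{align*}
\left(\Gamma_k - \tfrac{\gamma_k}{4}\right)\mathbb{E}\left[\|\nabla \mathcal{J}(\boldsymbol{\theta}_k^{s+1}; \boldsymbol{\theta}_k^{s+1})\|^2\right] \le \widetilde{R}_k^{s+1} - \widetilde{R}_{k+1}^{s+1} + \left(L_0^4\epsilon^4 + 2L_0^2\epsilon^2 L^2 + 2L_0^2\epsilon^2\right)\gamma_k,
\end{align*}
and sum over $k = 0,\dots,m-1$ within a fixed epoch $s+1$, so the Lyapunov terms telescope to $\widetilde{R}_0^{s+1} - \widetilde{R}_m^{s+1}$. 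The crucial bookkeeping is that the initialization $\boldsymbol{\theta}_0^{s+1} = \widetilde{\boldsymbol{\theta}}^s$ kills the quadratic penalty at $k=0$, so $\widetilde{R}_0^{s+1} = \mathbb{E}[\mathcal{J}(\widetilde{\boldsymbol{\theta}}^s; \widetilde{\boldsymbol{\theta}}^s)]$, while the prescribed terminal value $c_m = 0$ kills it at $k=m$, so via $\widetilde{\boldsymbol{\theta}}^{s+1} = \boldsymbol{\theta}_m^{s+1}$ one gets $\widetilde{R}_m^{s+1} = \mathbb{E}[\mathcal{J}(\widetilde{\boldsymbol{\theta}}^{s+1}; \widetilde{\boldsymbol{\theta}}^{s+1})]$. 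Hence the inner telescope collapses cleanly to $\mathbb{E}[\mathcal{J}(\widetilde{\boldsymbol{\theta}}^s; \widetilde{\boldsymbol{\theta}}^s)] - \mathbb{E}[\mathcal{J}(\widetilde{\boldsymbol{\theta}}^{s+1}; \widetilde{\boldsymbol{\theta}}^{s+1})]$, with no leftover distribution-penalty terms.

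Next I would sum this over $s = 0,\dots,S-1$, which telescopes again to $\mathcal{J}(\boldsymbol{\theta}_0^0; \boldsymbol{\theta}_0^0) - \mathbb{E}[\mathcal{J}(\widetilde{\boldsymbol{\theta}}^S; \widetilde{\boldsymbol{\theta}}^S)]$; the lower bound $\ell_*$ from Assumption~\ref{assumption: smoothness} caps this by $\Delta_0$. The accumulated constant drift $(L_0^4\epsilon^4 + 2L_0^2\epsilon^2 L^2 + 2L_0^2\epsilon^2)\sum \gamma_k$ is precisely what assembles into $T\Gamma\cdot\Delta_1$. Finally, replacing each coefficient $\Gamma_k - \gamma_k/4$ by its uniform lower bound $\Gamma$, using $Sm \ge T$, and dividing through by $T\Gamma$ yields Eqn.~\ref{eq:mainconverge}; reading off the $\epsilon$-orders of the drift coefficient then gives the $\mathcal{O}(\epsilon^2 + \epsilon^4)$ error neighborhood, and crucially no $\sigma_0,\sigma_1$ appear, so the neighborhood is variance-independent.

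The main obstacle is justifying the existence of $\Gamma > 0$, i.e.\ that $\Gamma_k - \gamma_k/4$ is uniformly bounded below by a positive constant. This is not automatic: $c_k$ is governed by the backward recursion in Lemma~\ref{lemma:r} seeded at $c_m = 0$, and $\Gamma_k = \gamma_k - (c_{k+1}\gamma_k + \tfrac12 L_0\epsilon\gamma_k)/\beta_k - 2L\gamma_k^2 - 4c_{k+1}\gamma_k^2$ depends on $c_{k+1}$, $\beta_k$ and $\gamma_k$. I would therefore analyze the geometric-type growth of $c_{k+1}$ as $k$ decreases from $m$, and choose the step sizes $\gamma_k$, the auxiliary constants $\beta_k$, and the epoch length $m$ small and balanced enough that $c_{k+1}$ stays bounded and $\Gamma_k - \gamma_k/4 \ge \Gamma > 0$ for every $k$. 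Verifying the feasibility of this joint parameter selection --- as in the SVRG tuning of \citet{reddi2016stochastic}, but now with the extra $L_0\epsilon$ performative terms inflating the recursion --- is the delicate, PP-specific heart of the argument.
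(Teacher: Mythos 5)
Your proposal follows essentially the same route as the paper's proof: the identical double telescoping of Lemma~\ref{lemma:r2} using the boundary conditions $\boldsymbol{\theta}_0^{s+1} = \widetilde{\boldsymbol{\theta}}^s$ and $c_m = 0$ so that the Lyapunov sum collapses to $\mathcal{J}(\boldsymbol{\theta}_0^0;\boldsymbol{\theta}_0^0) - \mathbb{E}[\mathcal{J}(\widetilde{\boldsymbol{\theta}}^S;\widetilde{\boldsymbol{\theta}}^S)] \le \Delta_0$, followed by division by $T\Gamma$. The step you single out as the delicate heart --- establishing $\Gamma > 0$ by controlling the backward recursion for $c_k$ seeded at $c_m = 0$ through suitably small, balanced choices of $\gamma_k$, $\beta_k$, $m$ --- is precisely what the paper executes via its Lemma~\ref{lemma:c_k} (backward induction showing $c_k \le \eta L_0\epsilon$ for small enough step sizes) and the explicit choices $\beta = 4L + \tfrac{4}{3}L_0\epsilon$, $\gamma_k = \gamma \le \tfrac{1}{8L + \frac{8}{3}L_0\epsilon}$, so your sketch matches the paper's argument in both structure and substance.
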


Eqn. \ref{eq:mainconverge} is established by summing the terms in Lemma \ref{lemma:r2}, and then lower bounding $\Gamma_k - \frac{1}{4}\gamma_k$ to guarantee the convergence rate $\mathcal{O}(1/T)$. The complete proof can be found in App.~\ref{app:thmconverge}. Thm.~\ref{theorem:converge} improves upon previous results in three key aspects: (i) the $\mathcal{O}(1/T)$ convergence rate is faster than that of \citet{li2024stochastic}; 
(ii) we eliminate Assumption~\ref{assumption:variance}, and the error neighborhood $\Delta_1$ is {\bf no longer dependent} on the variance of the stochastic gradient; and (iii) the error neighborhood is $\mathcal{O}(\epsilon^2 + \epsilon^4)$, which is guaranteed to be smaller than that of the SGD-GD-Based PP algorithm in \citep{li2024stochastic} when the distribution map is not sensitive ($\epsilon < 1$). Next, we derive the IFO complexity as defined in \citet{reddi2016stochastic} (i.e., the number of operations required to sample and compute the gradient to achieve a $\mathcal{O}(\delta)$-SPS) for Algorithm~\ref{alg:one}. Compared to the IFO Complexity of SGD-GD ($\mathcal{O}(\frac{(\Delta_0 + \frac{L}{2}\sigma_0^2)^2}{\delta^2})$), \alg has a clear advantage when the population size $n$ is not too large or when the variance parameter $\sigma_0^2$ is large or even does not exist. We provide more details in App. \ref{app:adv}.

\begin{corollary}[IFO Complexity of \algns]\label{corollary:complexity}
Assume a finite distribution setting where the population consists of $n$ samples, and $\alpha \in (0,1)$. For the algorithm with fixed parameters at each round, i.e., $\gamma_k = \gamma = \mathcal{O}(\frac{1}{n^{\alpha}}), \beta_k = \beta = \mathcal{O}(n^{\frac{\alpha}{2}}), m = \mathcal{O}(n^{\frac{\alpha}{2}})$, we have $\Gamma = \mathcal{O}(\frac{1}{n^{\alpha}})$, and the IFO complexity to achieve $\mathcal{O}(\delta)$-SPS, in addition to the error neighborhood, is $\mathcal{O}(\frac{(n^{\alpha} + n^{1+\frac{\alpha}{2}})\Delta_0}{\delta})$. 
\end{corollary}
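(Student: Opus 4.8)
The plan is to start from the convergence guarantee of Theorem~\ref{theorem:converge}, which bounds the running-average squared gradient norm by $\frac{\Delta_0}{T\Gamma} + \Delta_1$. Since $\Delta_1$ is the variance-independent error neighborhood, achieving an $\mathcal{O}(\delta)$-SPS solution ``in addition to the error neighborhood'' amounts to driving the first term $\frac{\Delta_0}{T\Gamma}$ below $\mathcal{O}(\delta)$. Thus the proof splits into two tasks: (i) establish the claimed order $\Gamma = \mathcal{O}(n^{-\alpha})$ under the prescribed schedule, and (ii) convert the resulting round count $T$ into a total gradient-evaluation (IFO) count.

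For task (i), I would first control the auxiliary sequence $\{c_k\}$ defined by the backward recursion in Lemma~\ref{lemma:r} with terminal condition $c_m = 0$. Under $\gamma = \mathcal{O}(n^{-\alpha})$ and $\beta = \mathcal{O}(n^{\alpha/2})$, the multiplicative factor of the recursion is $1 + \gamma_k\beta_k + 2L_0\epsilon\gamma_k = 1 + \mathcal{O}(n^{-\alpha/2})$, while the additive forcing is dominated by $\frac{\beta_k}{2}L_0\epsilon\gamma_k = \mathcal{O}(n^{-\alpha/2})$. The crucial balance is that $m = \mathcal{O}(n^{\alpha/2})$ is chosen precisely so that $m\cdot\gamma_k\beta_k = \mathcal{O}(1)$; consequently the product of growth factors over a full epoch stays bounded, via $(1 + \mathcal{O}(n^{-\alpha/2}))^{\mathcal{O}(n^{\alpha/2})} = \mathcal{O}(1)$, and summing the geometric series of forcing terms yields $c_k = \mathcal{O}(1)$ uniformly in $k$. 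Substituting this into $\Gamma_k = \gamma_k - \frac{c_{k+1}\gamma_k + \frac{1}{2}L_0\epsilon\gamma_k}{\beta_k} - 2L\gamma_k^2 - 4c_{k+1}\gamma_k^2$, the leading term is $\gamma_k = \mathcal{O}(n^{-\alpha})$ and every subtracted term is of strictly smaller order ($\mathcal{O}(n^{-3\alpha/2})$ or $\mathcal{O}(n^{-2\alpha})$ for $\alpha>0$), so $\Gamma_k - \frac{\gamma_k}{4} = \mathcal{O}(n^{-\alpha})$ and its uniform lower bound $\Gamma$ inherits this order.

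For task (ii), requiring $\frac{\Delta_0}{T\Gamma} = \mathcal{O}(\delta)$ together with $\Gamma = \mathcal{O}(n^{-\alpha})$ gives $T = \mathcal{O}(n^{\alpha}\Delta_0/\delta)$. I would then account for the per-epoch cost: Line 5 of Algorithm~\ref{alg:one} computes the full-population snapshot gradient at cost $n$, and each of the $m$ inner iterations (Line 8) evaluates two stochastic gradients, contributing $2m$. With $S = \lceil T/m \rceil$ epochs, the total IFO count is $S(n + 2m) = \frac{T}{m}n + 2T$. Substituting $T = \mathcal{O}(n^{\alpha}\Delta_0/\delta)$ and $m = \mathcal{O}(n^{\alpha/2})$ yields $\frac{Tn}{m} = \mathcal{O}(n^{1+\alpha/2}\Delta_0/\delta)$ and $2T = \mathcal{O}(n^{\alpha}\Delta_0/\delta)$; summing these two orders produces the claimed $\mathcal{O}\big((n^{\alpha} + n^{1+\alpha/2})\Delta_0/\delta\big)$.

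The main obstacle is the uniform bound $c_k = \mathcal{O}(1)$ in task (i): the recursion is geometric-like with a growth factor exceeding one, so a crude bound would blow up exponentially in $m$. The argument hinges on the deliberate coupling $m\cdot\gamma\beta = \mathcal{O}(1)$ encoded in the schedule, which keeps the compounded per-epoch growth at a constant and lets the $\mathcal{O}(n^{-\alpha/2})$ forcing terms accumulate to only $\mathcal{O}(1)$. Making this estimate rigorous, rather than heuristic, is where the care lies; once $\Gamma = \mathcal{O}(n^{-\alpha})$ is in hand, the remaining algebra and the IFO bookkeeping in task (ii) are routine.
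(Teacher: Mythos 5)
Your proposal is correct, and its overall skeleton (invoke Theorem~\ref{theorem:converge}, solve $\Delta_0/(T\Gamma)\le\delta$ for $T$, then count $n+2m$ gradient evaluations per epoch so the total is $T(2+n/m)$) is exactly the paper's accounting in App.~E. Where you differ is in how $\Gamma=\mathcal{O}(n^{-\alpha})$ is justified. The paper simply re-invokes its Lemma~\ref{lemma:c_k} from the proof of Theorem~\ref{theorem:converge}: there, an inductive small-increment construction forces $c_k\le\eta L_0\epsilon$ by taking $\gamma\le\omega/(C_0+C_1)$ with $\omega=\eta_0/(2m)$, and since $C_0,C_1=\mathcal{O}(\beta)=\mathcal{O}(n^{\alpha/2})$ and $m=\mathcal{O}(n^{\alpha/2})$ this yields $\gamma=\mathcal{O}(n^{-\alpha})$ and $\Gamma=\Theta(\gamma)$. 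You instead unroll the backward recursion $c_k=\tfrac{1}{2}\gamma_k+A_kc_{k+1}+B_k$ directly and use the compounding bound $(1+\mathcal{O}(n^{-\alpha/2}))^{\mathcal{O}(n^{\alpha/2})}=\mathcal{O}(1)$ together with the accumulated forcing $m\cdot\mathcal{O}(n^{-\alpha/2})=\mathcal{O}(1)$ to get $c_k=\mathcal{O}(1)$ uniformly, then observe that every subtracted term in $\Gamma_k$ is of strictly lower order than $\gamma$ (being damped by $1/\beta$ or an extra factor of $\gamma$), so $\Gamma_k-\gamma/4=\Theta(n^{-\alpha})$. This is the standard SVRG-style (Reddi et al.) epoch-compounding argument, and it makes transparent why the schedule couples $m\gamma\beta=\mathcal{O}(1)$; it also only needs $c_k=\mathcal{O}(1)$ rather than the paper's stronger $c_k\le\eta L_0\epsilon$, which suffices here precisely because the corollary is an asymptotic statement in $n$. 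The trade-off is that the paper's route, by keeping $c_k$ below an explicit constant multiple of $L_0\epsilon$, supplies the non-asymptotic constants ($\beta=4L+\tfrac{4}{3}L_0\epsilon$, $\eta\le\eta_0$) that make $\Gamma_k-\gamma_k/4\ge\gamma/4$ hold for all $n$, not just for $n$ large; your asymptotic version should state that the inequality holds for sufficiently large $n$ (or equivalently absorb the $o(1)$ terms into the implied constants), after which the two arguments deliver the same bound.
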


\section{Experiments}\label{sec:exp}

In this section, we conduct comprehensive experimental studies to verify the effectiveness of \alg in practice. 
We compare the convergence behavior of the training loss and training accuracy of \alg with SGD-GD \citep{li2024stochastic} on $3$ real-world datasets with non-convex models and study the performative effects. 
The results of Credit dataset \citep{creditdata} and CIFAR-10 dataset \citep{krizhevsky2009learning} are included in the main paper, while results of MNIST dataset \citep{deng2012mnist} are relegated in App.~\ref{app:exp} due to space limitation. 
We also show the magnitude of the squared gradient norm $\|\nabla \mathcal{J}(\widetilde{\boldsymbol{\theta}}^s; \widetilde{\boldsymbol{\theta}}^s)\|^2$ in App.~\ref{app:exp} to validate Thm.~\ref{theorem:converge}. 

Notably, besides the credit and MNIST datasets, we are the first to provide results using CNN models on the CIFAR-10 \citep{krizhevsky2009learning} dataset in the PP settings. 
As neural network architectures are widely used, our experiments go one step further for the practical applications of PP algorithms. 

\subsection{Experiments on the Credit dataset}

We use the \textit{Give me some credit} data \citep{creditdata} with $10$ features $X \in \mathbb{R}^{10}$ to predict creditworthiness $Y \in \{0,1\}$ \citep{perdomo_performative_2021, jin2024addressingpolarizationunfairnessperformative}. 
We preprocess the dataset by selecting a balanced subset of $5,000$ examples and then normalizing features similar to \citet{perdomo_performative_2021}. We then assume a credit scoring agency uses a two-layer MLP model to predict $Y$ given the individuals' features $X$ with cross entropy loss. 
To simulate performative effects, we assume individuals are strategic \citep{perdomo_performative_2021, hardt_strategic_2015}, i.e., there is a subset of features $X_s\in X$ which individuals can change to $X_s'$ based on the current model. 
Given the current model $f_{\boldsymbol{\theta}}$, individuals with feature $x_s$ will best respond to the model in the direction of $\nabla_xf_{\boldsymbol{\theta}}(x_s)$ and change $x_s$ to $x_s' = x_s + \alpha \nabla_xf_{\boldsymbol{\theta}}(x_s)$ to improve their scoring \citep{rosenfeld2020predictions, xie2024nonlinear}.  

In this setting, we train the MLP model using \alg and SGD-GD for 40 epochs and illustrate the accuracy and training loss in Fig.~\ref{fig:credit}. 
We simulate different performative effects using of $\alpha \in \{0.01, 0.2, 0.4\}$. 
In Fig.~\ref{fig:credit}, we show that the \alg algorithm achieves lower training loss and converges faster compared with the SGD-GD in all $3$ cases and the advantages are quite prominent when performative effects are larger (i.e., when $\alpha = 0.2$ and $0.4$).

\subsection{Experiments on the CIFAR-10 dataset}

Furthermore, we use the \textbf{CIFAR-10 dataset \citep{krizhevsky2009learning}} to predict 10 possible class labels $Y \in \{0, 1, ...,9\}$. 
Each label corresponds to a common and well-separated category such as \textit{dog} and \textit{truck}. 
The dataset contains $50,000$ images in total and each class includes $5,000$ samples. 
We design a CNN model with two convolutional layers to train the algorithms with a $0.05$ learning rate and cross-entropy loss. 
We simulate performative effects motivated by retention dynamics \citep{hashimoto2018fairness, jin2024addressingpolarizationunfairnessperformative, Zhang_2019_Retention} where the class distribution will change based on the performance of the current model. 
The fraction in class $c$ at iteration $k$ of $p^{k+1}_c = \frac{e^{-\alpha \ell^k_c}}{\sum_{c'}e^{-\alpha \ell^k_{c'}}}$, where $\ell^k_c$ is the loss expectation of class $c$ in iteration $k$. 
This means that the image class with larger loss now will be less likely to appear in the next iteration \citep{Zhang_2019_Retention}. 
We set $\alpha$ at $20, 50, 80$ and train 400 epochs in total. 
The experimental results are shown in Fig.~\ref{fig:cifar}. 
Consistent with the results observed on the Credit dataset, \alg also exhibits a faster convergence on the CIFAR-10 dataset.

\begin{figure}[H]
    \centering
    \begin{subfigure}[b]{0.28\textwidth}
        \includegraphics[width=\textwidth]{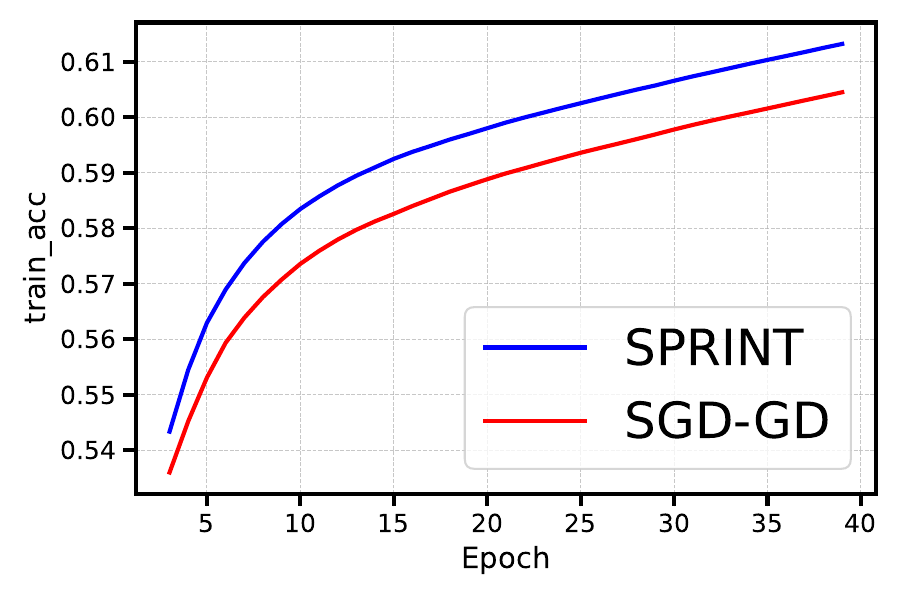}
        \label{fig:credit_sub1}
    \end{subfigure}
    \hspace{0.5cm}
    \begin{subfigure}[b]{0.28\textwidth}
        \includegraphics[width=\textwidth]{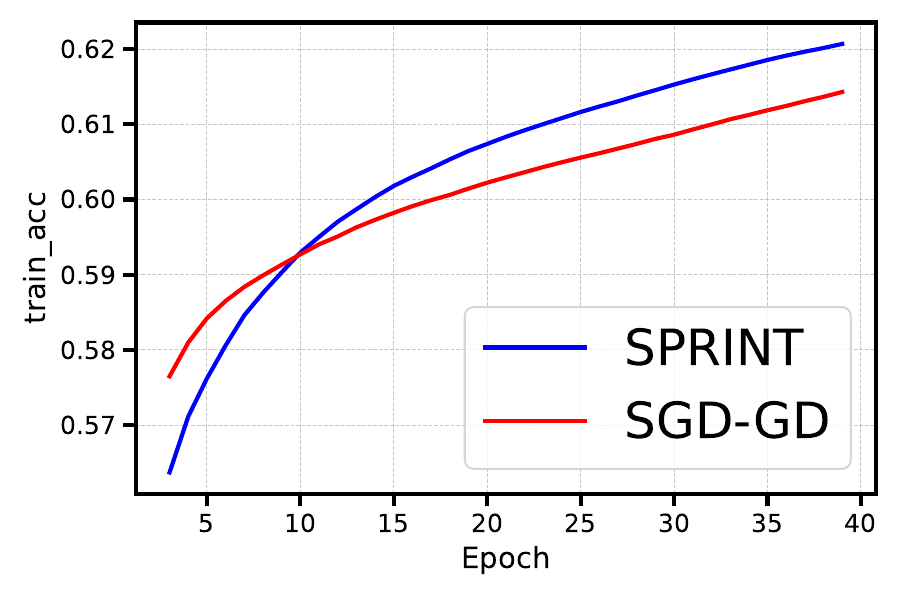}
        \label{fig:credit_sub2}
    \end{subfigure}
    \hspace{0.5cm}
    \begin{subfigure}[b]{0.28\textwidth}
        \includegraphics[width=\textwidth]{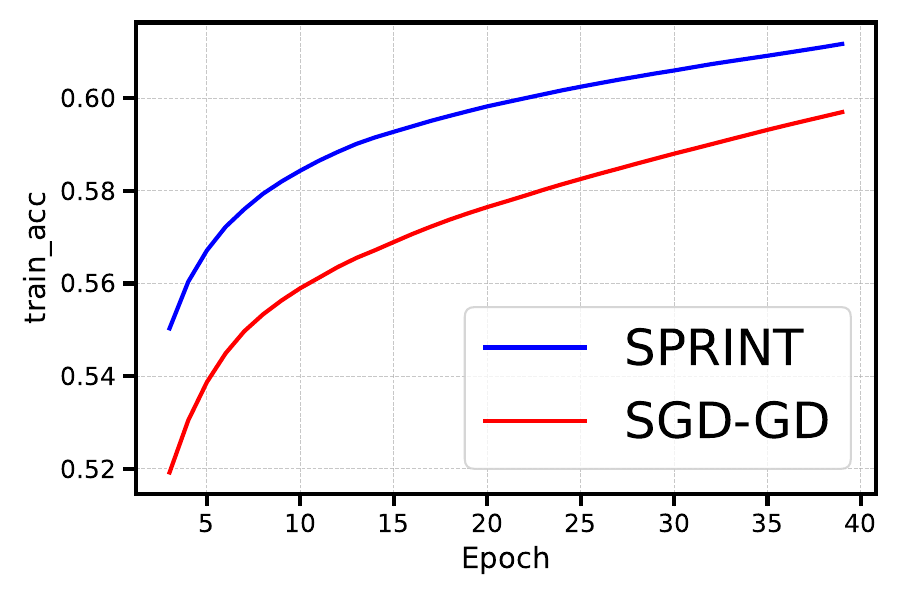}
        \label{fig:credit_sub3}
    \end{subfigure}

    \begin{subfigure}[b]{0.3\textwidth}
        \includegraphics[width=\textwidth]{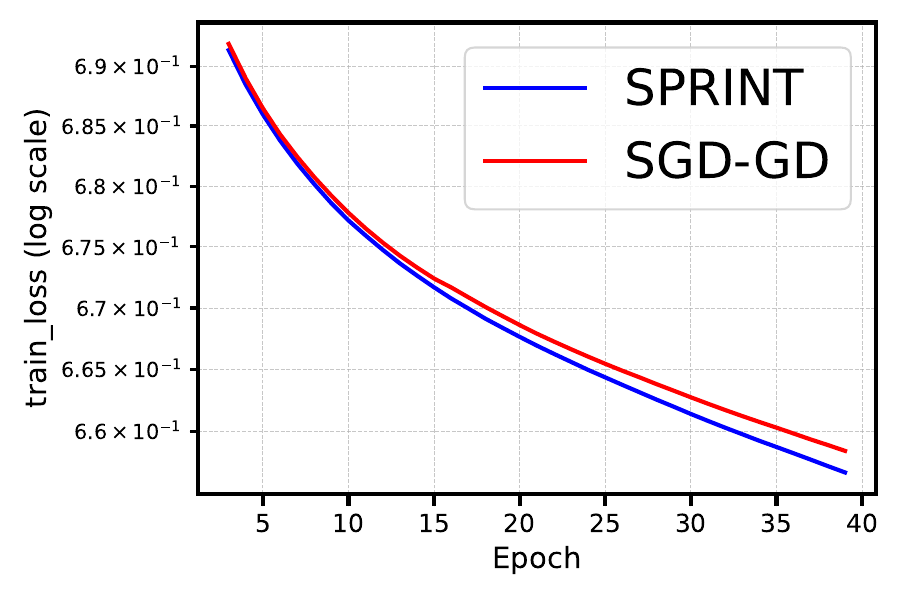}
        \caption{$\alpha=0.01$}
        \label{fig:credit_sub4}
    \end{subfigure}
    \hspace{0.5cm}
    \begin{subfigure}[b]{0.3\textwidth}
        \includegraphics[width=\textwidth]{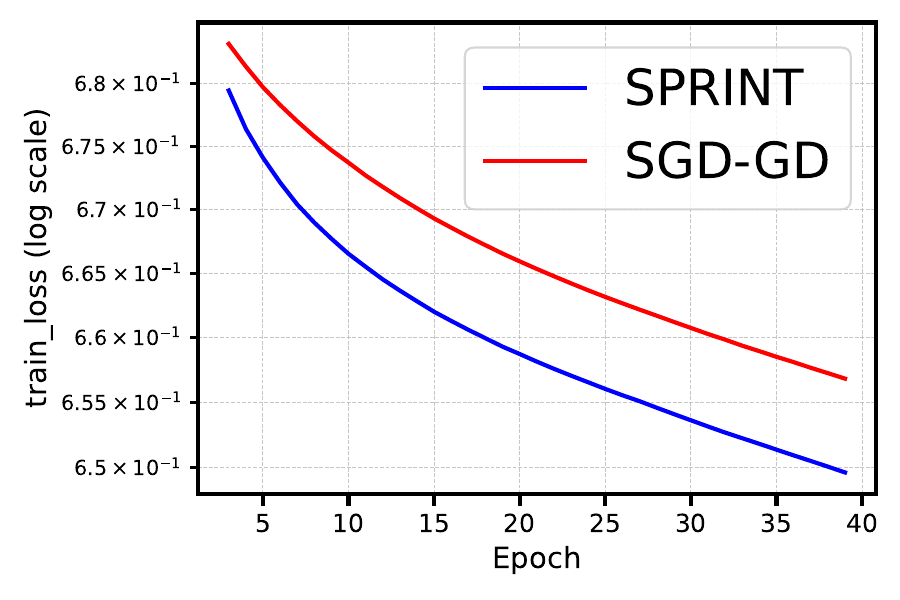}
        \caption{$\alpha= 0.2$}
        \label{fig:credit_sub5}
    \end{subfigure}
    \hspace{0.5cm}
    \begin{subfigure}[b]{0.3\textwidth}
        \includegraphics[width=\textwidth]{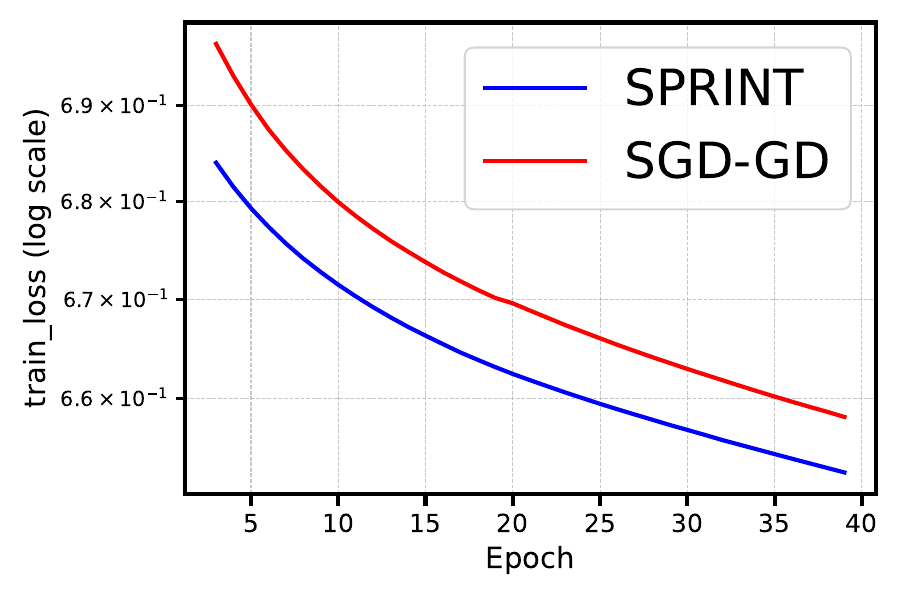}
        \caption{$\alpha=0.4$}
        \label{fig:credit_sub6}
    \end{subfigure}
    \vspace{-0.1cm}
    \caption{The \textbf{Credit} dataset: training accuracy (first row) and training loss (second row) of SGD-GD and \alg under different $\alpha$. A larger $\alpha$ implies more intense individual strategic behaviors.}
    \label{fig:credit}
\end{figure}

\begin{figure}[H]
    \centering
    \begin{subfigure}[b]{0.3\textwidth}
        \includegraphics[width=\textwidth]{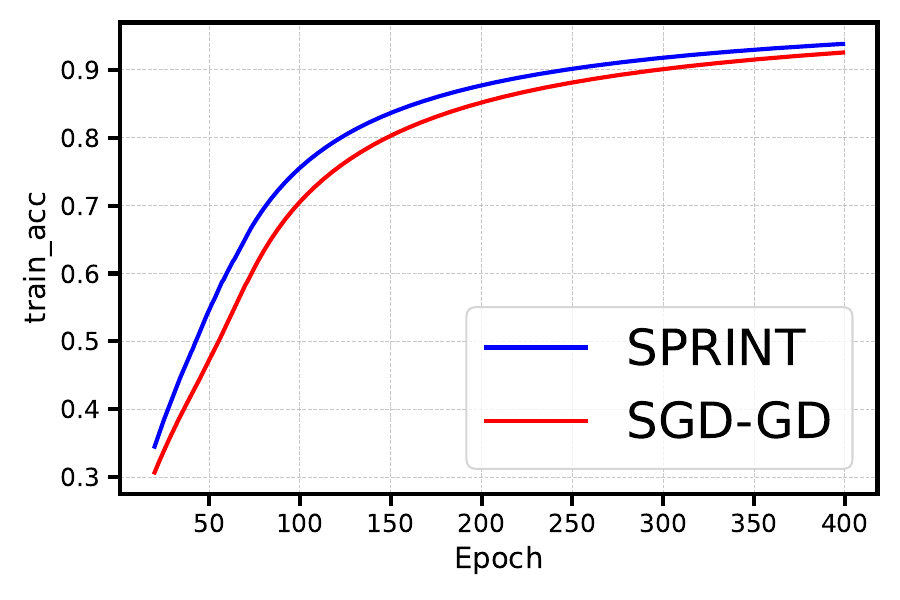}
        \label{fig:cifarsub1}
    \end{subfigure}
    \hspace{0.5cm}
    \begin{subfigure}[b]{0.3\textwidth}
        \includegraphics[width=\textwidth]{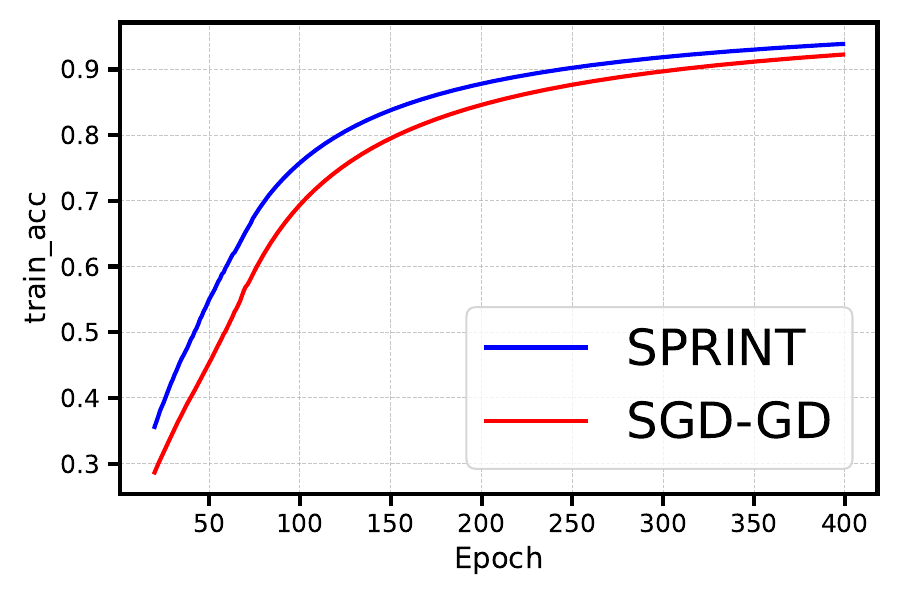}
        \label{fig:cifarsub2}
    \end{subfigure}
    \hspace{0.5cm}
    \begin{subfigure}[b]{0.3\textwidth}
        \includegraphics[width=\textwidth]{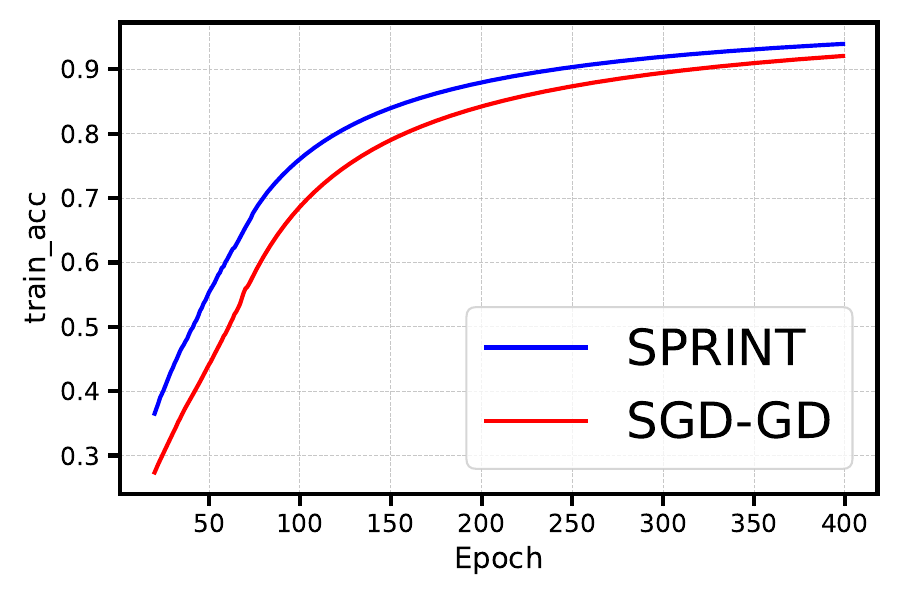}
        \label{fig:cifarsub3}
    \end{subfigure}

    \begin{subfigure}[b]{0.3\textwidth}
        \includegraphics[width=\textwidth]{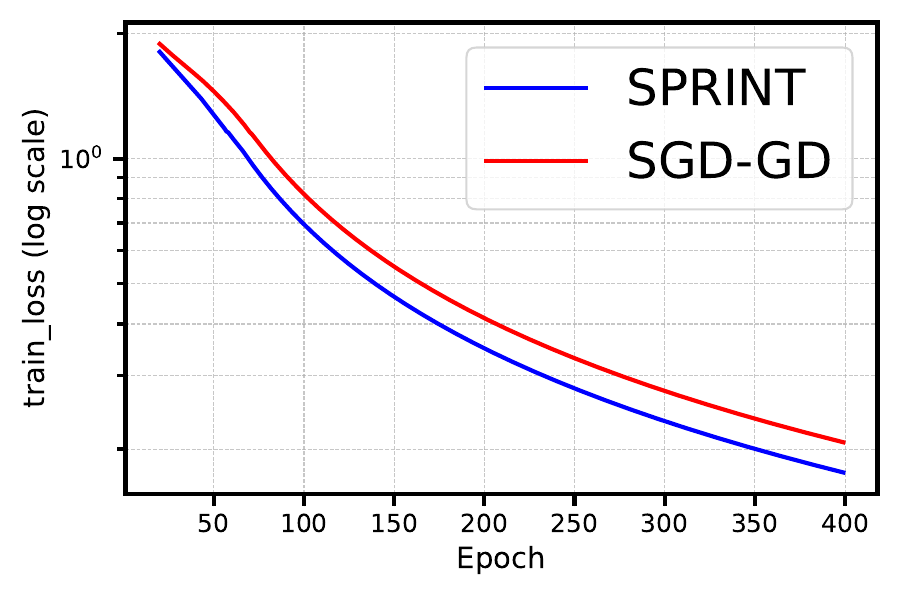}
        \caption{$\alpha=20$}
        \label{fig:sub4}
    \end{subfigure}
    \hspace{0.5cm}
    \begin{subfigure}[b]{0.3\textwidth}
        \includegraphics[width=\textwidth]{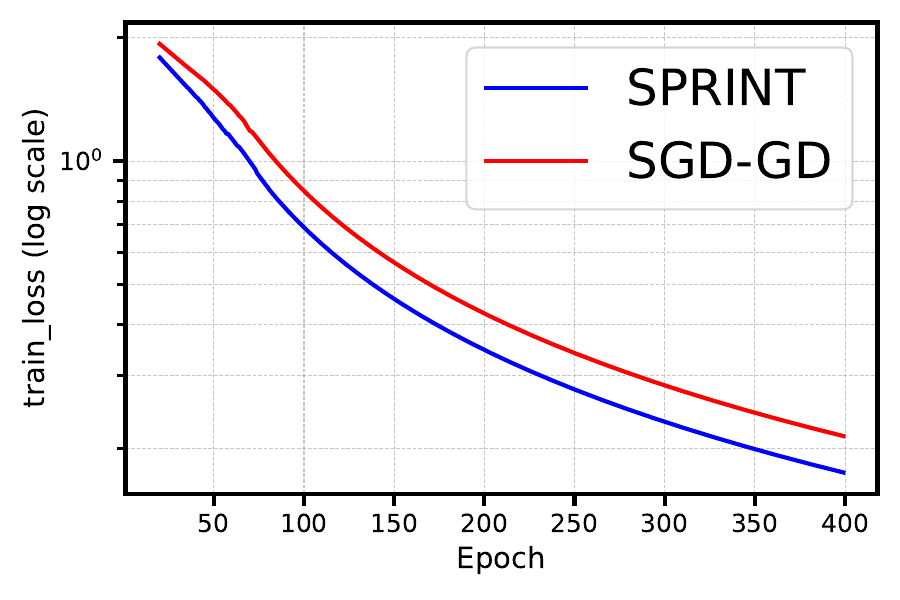}
        \caption{$\alpha=50$}
        \label{fig:sub5}
    \end{subfigure}
    \hspace{0.5cm}
    \begin{subfigure}[b]{0.3\textwidth}
        \includegraphics[width=\textwidth]{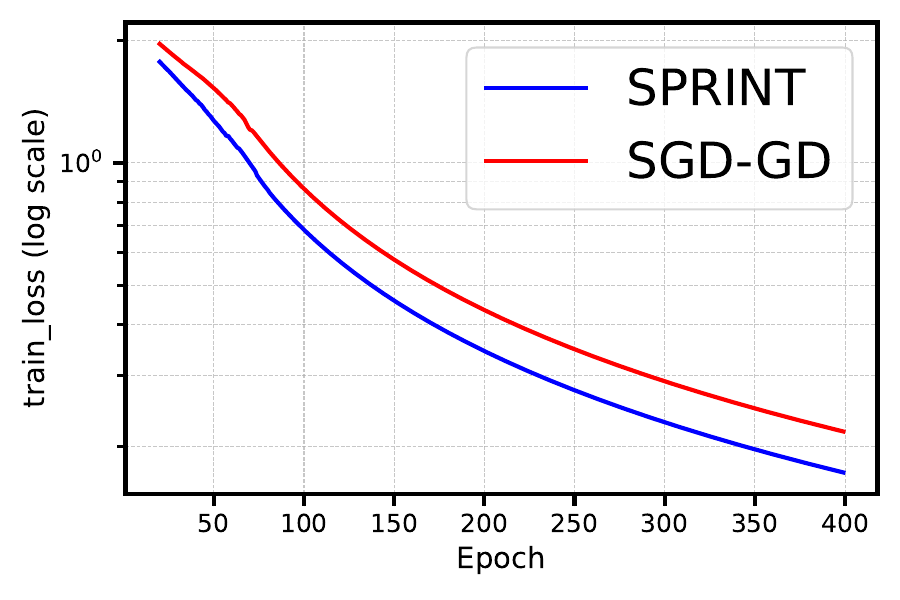}
        \caption{$\alpha = 80$}
        \label{fig:sub6}
    \end{subfigure}

    \caption{The \textbf{CIFAR-10} dataset: training accuracy (first row) and training loss (second row) of SGD-GD and \alg under different $\alpha$. A larger $\alpha$ implies more intense retention dynamics.}
    \label{fig:cifar}
\end{figure}

\section{Conclusion and future work} \label{sec:conclusion}

This paper focused on improving the convergence of stochastic optimization methods in non-convex PP settings by adapting a variance reduction method. 
We proposed \alg as a new algorithm to achieve better convergence results without the need for the bounded variance assumption on stochastic gradient descent estimates. 
We also presented extensive experiments with different non-convex neural network models to compare \alg with SGD-GD, which demonstrated the superior performance of our algorithm \algns. However, \alg only converges to a non-vanishing error neighborhood. Future work may include developing algorithms to converge to performative optimal solutions in the non-convex PP settings, or adapting other more advanced variance reduction methods (SARAH or SPIDER) to the PP setting and trying to completely eliminate the non-vanishing error neighborhood.

\bibliographystyle{plainnat} 
\bibliography{sample}

\appendix
\newpage

\section{The Use of Large Language Models (LLMs)}

We only use LLMs for polishing the writing without using them to generate any single sentence solely on their own.

\section{Proof of Lemma \ref{lemma:r}}\label{app:lemmar}

\begin{lemma*}
Let $\beta_k$ be some positive constant, and $c_k,c_{k+1}$ are some constants defined in intermediate functions $R_k^{s+1}, R_{k+1}^{s+1}$ that satisfy the following 
\begin{align*}
c_k - \frac{1}{2}\gamma_k = c_{k+1}(1+ \gamma_k \beta_k + 2  L_0 \epsilon \gamma_k) + (2L^2+ L_0^2 \epsilon^2 ) (L\gamma_k^2 + 2c_{k+1}\gamma_k^2) + \frac{\beta_k}{2}L_0\epsilon\gamma_k
\end{align*}
Then, we have 
\begin{align*}
R_{k+1}^{s+1}(\boldsymbol{\theta}_{k}^{s+1}) \le R_k^{s+1}(\boldsymbol{\theta}_k^{s+1}) - \Gamma_k \cdot \mathbb{E}[\|\nabla \mathcal{J}(\boldsymbol{\theta}_k^{s+1}; \boldsymbol{\theta}_k^{s+1})\|^2] - \frac{1}{2} \gamma_k \|\boldsymbol{\theta}_k^{s+1} - \widetilde{\boldsymbol{\theta}}^s\|^2
\end{align*}
where $\Gamma_k = \left( \gamma_k - \frac{c_{k+1} \gamma_k + \frac{1}{2} L_0 \epsilon \gamma_k}{\beta_k} - 2L \gamma_k^2 - 4 c_{k+1} \gamma_k^2 \right) $
\end{lemma*}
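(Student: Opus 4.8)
The plan is to mirror the non-convex SVRG descent-lemma analysis of \citet{reddi2016stochastic}, but to exploit the special structure of the intermediate function $R_k^{s+1}$, whose second (distribution-inducing) argument is \emph{frozen} at $\boldsymbol{\theta}_k^{s+1}$. Because $\mathcal{J}(\cdot;\boldsymbol{\theta}_k^{s+1}) = \mathbb{E}_{z\sim\mathcal{D}(\boldsymbol{\theta}_k^{s+1})}[\ell(\cdot;z)]$ inherits $L$-smoothness in its first argument from Assumption~\ref{assumption: smoothness}, I would start from the descent inequality applied to the update $\boldsymbol{\theta}_{k+1}^{s+1}=\boldsymbol{\theta}_k^{s+1}-\gamma_k\boldsymbol{v}_k^{s+1}$:
\[
\mathcal{J}(\boldsymbol{\theta}_{k+1}^{s+1};\boldsymbol{\theta}_k^{s+1}) \le \mathcal{J}(\boldsymbol{\theta}_k^{s+1};\boldsymbol{\theta}_k^{s+1}) - \gamma_k\langle \nabla\mathcal{J}(\boldsymbol{\theta}_k^{s+1};\boldsymbol{\theta}_k^{s+1}),\boldsymbol{v}_k^{s+1}\rangle + \tfrac{L\gamma_k^2}{2}\|\boldsymbol{v}_k^{s+1}\|^2 .
\]
Freezing the second argument is exactly what lets the ``model'' effect of the step be handled as in the non-PP setting, so that all genuinely performative effects are pushed into the estimator $\boldsymbol{v}_k^{s+1}$.

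The crux is the conditional first and second moments of $\boldsymbol{v}_k^{s+1}$. Since $z_{i_k}\sim\mathcal{D}(\boldsymbol{\theta}_k^{s+1})$ while the snapshot $\nabla\mathcal{J}(\widetilde{\boldsymbol{\theta}}^s;\widetilde{\boldsymbol{\theta}}^s)$ is taken under the \emph{stale} distribution $\mathcal{D}(\widetilde{\boldsymbol{\theta}}^s)$, I would compute
\[
\mathbb{E}[\boldsymbol{v}_k^{s+1}] = \nabla\mathcal{J}(\boldsymbol{\theta}_k^{s+1};\boldsymbol{\theta}_k^{s+1}) + \underbrace{\big(\nabla\mathcal{J}(\widetilde{\boldsymbol{\theta}}^s;\widetilde{\boldsymbol{\theta}}^s) - \nabla\mathcal{J}(\widetilde{\boldsymbol{\theta}}^s;\boldsymbol{\theta}_k^{s+1})\big)}_{=:\,\boldsymbol{b}},
\]
exhibiting the bias $\boldsymbol{b}$ that is absent from standard SVRG. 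A gradient analogue of Lemma~\ref{lemma:aux} (driven by the $\epsilon$-sensitivity of Def.~\ref{assumption:sensitivity}) bounds $\|\boldsymbol{b}\|\le L_0\epsilon\|\boldsymbol{\theta}_k^{s+1}-\widetilde{\boldsymbol{\theta}}^s\|$. For the second moment I would split $\boldsymbol{v}_k^{s+1}-\nabla\mathcal{J}(\boldsymbol{\theta}_k^{s+1};\boldsymbol{\theta}_k^{s+1})$ into a mean-zero part, whose variance is controlled by $L^2\|\boldsymbol{\theta}_k^{s+1}-\widetilde{\boldsymbol{\theta}}^s\|^2$ via $L$-smoothness, plus the deterministic bias $\boldsymbol{b}$, obtaining a bound of the form $\mathbb{E}\|\boldsymbol{v}_k^{s+1}\|^2 \le 2\|\nabla\mathcal{J}(\boldsymbol{\theta}_k^{s+1};\boldsymbol{\theta}_k^{s+1})\|^2 + (2L^2+L_0^2\epsilon^2)\|\boldsymbol{\theta}_k^{s+1}-\widetilde{\boldsymbol{\theta}}^s\|^2$. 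Both $L^2$ (gradient variance) and $L_0^2\epsilon^2$ (distribution shift) surface here, foreshadowing the constants in the stated $c_k$-recursion.

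To assemble the bound, I would take expectations of the descent inequality, producing $-\gamma_k\|\nabla\mathcal{J}\|^2$ together with a bias cross term $-\gamma_k\langle\nabla\mathcal{J},\boldsymbol{b}\rangle$, and then expand the correction term through $\|\boldsymbol{\theta}_{k+1}^{s+1}-\widetilde{\boldsymbol{\theta}}^s\|^2 = \|\boldsymbol{\theta}_k^{s+1}-\widetilde{\boldsymbol{\theta}}^s\|^2 - 2\gamma_k\langle\boldsymbol{\theta}_k^{s+1}-\widetilde{\boldsymbol{\theta}}^s,\boldsymbol{v}_k^{s+1}\rangle + \gamma_k^2\|\boldsymbol{v}_k^{s+1}\|^2$ and taking its conditional mean. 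Every cross term (the bias against $\nabla\mathcal{J}$, and $\boldsymbol{\theta}_k^{s+1}-\widetilde{\boldsymbol{\theta}}^s$ against $\nabla\mathcal{J}$) is then split by Young's inequality with the free parameter $\beta_k$: the $\tfrac{1}{\beta_k}$-weighted halves accumulate into the coefficient of $\|\nabla\mathcal{J}\|^2$, which I would lower-bound to recover exactly $\Gamma_k$, while the $\beta_k$-weighted halves, the baseline $c_{k+1}$ term, the $c_{k+1}$-weighted drift terms, and the $\|\boldsymbol{\theta}_k^{s+1}-\widetilde{\boldsymbol{\theta}}^s\|^2$ part of the second-moment bound accumulate into the coefficient of $\|\boldsymbol{\theta}_k^{s+1}-\widetilde{\boldsymbol{\theta}}^s\|^2$. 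By the defining recursion, this latter coefficient equals $c_k-\tfrac{1}{2}\gamma_k$; moving $c_k\|\boldsymbol{\theta}_k^{s+1}-\widetilde{\boldsymbol{\theta}}^s\|^2$ back into $R_k^{s+1}(\boldsymbol{\theta}_k^{s+1})$ leaves precisely the residual $-\tfrac{1}{2}\gamma_k\|\boldsymbol{\theta}_k^{s+1}-\widetilde{\boldsymbol{\theta}}^s\|^2$, which completes the claimed inequality.

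The main obstacle is controlling the bias $\boldsymbol{b}$, which has no counterpart in the non-PP SVRG proof and is the entire reason $\boldsymbol{v}_k^{s+1}$ fails to be unbiased. The observation that makes the argument close is that $\epsilon$-sensitivity converts $\boldsymbol{b}$ into a quantity proportional to $\|\boldsymbol{\theta}_k^{s+1}-\widetilde{\boldsymbol{\theta}}^s\|$, which is exactly the drift already tracked by the quadratic Lyapunov correction $c_k\|\boldsymbol{\theta}_k^{s+1}-\widetilde{\boldsymbol{\theta}}^s\|^2$; hence the bias can be absorbed into the recursion rather than left as an irreducible error term at this stage. Some care is needed with the conditioning when taking expectations (the bias is deterministic given the history up to iteration $k$, so $\mathbb{E}[\langle\nabla\mathcal{J},\boldsymbol{b}\rangle]$ is handled cleanly) and with verifying that the chosen $\beta_k$ and $c_{k+1}$ keep $\Gamma_k$ nonnegative; the quantitative feasibility of the latter is deferred to the convergence theorem.
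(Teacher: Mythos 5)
Your proposal is correct and follows essentially the same route as the paper's proof: a descent-lemma step on $\mathcal{J}(\cdot;\boldsymbol{\theta}_k^{s+1})$ with the distribution argument frozen, identification of the bias $\boldsymbol{b}$ and its bound $L_0\epsilon\|\boldsymbol{\theta}_k^{s+1}-\widetilde{\boldsymbol{\theta}}^s\|$ via the gradient analogue of Lemma~\ref{lemma:aux}, a second-moment bound on $\boldsymbol{v}_k^{s+1}$, Young's inequality with the free parameter $\beta_k$ on the cross terms, and absorption of all drift coefficients into the $c_k$ recursion. The only (harmless) deviation is that your direct bias--variance decomposition yields tighter second-moment constants than the paper's double application of Young's inequality, which gives $4\,\mathbb{E}\|\nabla\mathcal{J}(\boldsymbol{\theta}_k^{s+1};\boldsymbol{\theta}_k^{s+1})\|^2+(4L^2+2L_0^2\epsilon^2)\,\mathbb{E}\|\boldsymbol{\theta}_k^{s+1}-\widetilde{\boldsymbol{\theta}}^s\|^2$; since your constants are dominated by these, the stated recursion for $c_k$ and the stated $\Gamma_k$ still follow.
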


\begin{proof}
    Since $\ell(\boldsymbol{\theta}; z)$ is $L$-smooth in $\boldsymbol{\theta}$ and $v_k^{s+1} = \frac{1}{\gamma_k}
\bigl(\boldsymbol{\theta}_{k}^{s+1}-\boldsymbol{\theta}_{k+1}^{s+1}\bigr)$, we can apply descent lemma to get the following result:

\small
\begin{align}\label{eq:proof1_1}
        \mathbb{E}[\mathcal{J}(\boldsymbol{\theta}_{k+1}^{s+1}; \boldsymbol{\theta}_{k}^{s+1})] &\le \mathbb{E}[\mathcal{J}(\boldsymbol{\theta}_{k}^{s+1}; \boldsymbol{\theta}_{k}^{s+1}) + \frac{L}{2} \|\boldsymbol{\theta}_{k+1}^{s+1} - \boldsymbol{\theta}_k^{s+1}\|^2 + \nabla\mathcal{J}(\boldsymbol{\theta}_k^{s+1}; \boldsymbol{\theta}_k^{s+1})^T(\boldsymbol{\theta}_{k+1}^{s+1} - \boldsymbol{\theta}_k^{s+1})]\\
       \nonumber &= \mathbb{E}[\mathcal{J}(\boldsymbol{\theta}_{k}^{s+1}; \boldsymbol{\theta}_{k}^{s+1})] + \frac{L\gamma_k^2}{2} \|v_k^{s+1}\|^2 - \gamma_k\nabla\mathcal{J}(\boldsymbol{\theta}_k^{s+1}; \boldsymbol{\theta}_k^{s+1})^Tv_k^{s+1}
\end{align}
\normalsize

By definition of $v_k^{s+1}$, we have:

\small
\begin{align}\label{eq:vk}
    \mathbb{E}[v_k^{s+1}] &=  \mathbb{E}[\nabla \ell(z_{i_k}; \boldsymbol{\theta}_k^{s+1}) - \nabla \ell(z_{i_k}; \widetilde{\boldsymbol{\theta}}^s) +\nabla\mathcal{J}(\widetilde{\boldsymbol{\theta}}^s; \widetilde{\boldsymbol{\theta}}^s)]\\
   \nonumber &= \mathbb{E}[\nabla \ell(z_{i_k}; \boldsymbol{\theta}_k^{s+1}) - \nabla \ell(z_{i_k}; \widetilde{\boldsymbol{\theta}}^s) + \nabla \mathcal{J}(\widetilde{\boldsymbol{\theta}}^s; \boldsymbol{\theta}_k^{s+1})] - \mathbb{E}[\nabla \mathcal{J}(\widetilde{\boldsymbol{\theta}}^s; \boldsymbol{\theta}_k^{s+1}) - \nabla\mathcal{J}(\widetilde{\boldsymbol{\theta}}^s; \widetilde{\boldsymbol{\theta}}^s)] \\
   \nonumber & = \mathbb{E}[\nabla \mathcal{J}(\boldsymbol{\theta}_k^{s+1}; \boldsymbol{\theta}_k^{s+1})] - \mathbb{E}[\nabla \mathcal{J}(\widetilde{\boldsymbol{\theta}}^s; \boldsymbol{\theta}_k^{s+1}) - \nabla\mathcal{J}(\widetilde{\boldsymbol{\theta}}^s; \widetilde{\boldsymbol{\theta}}^s)]
\end{align}
\normalsize

The last equality holds since $\mathbb{E}[\nabla \ell(z_{i_k}; \widetilde{\boldsymbol{\theta}}^s)] =  \nabla \mathcal{J}(\widetilde{\boldsymbol{\theta}}^s, \boldsymbol{\theta}_k^{s+1})$. Then applying Lemma \ref{lemma:aux} on the second term above to derive the following:

\small
\begin{align*}
    \gamma_k\nabla\mathcal{J}(\boldsymbol{\theta}_k^{s+1}; \boldsymbol{\theta}_k^{s+1})^Tv_k^{s+1} &= \gamma_k\nabla\mathcal{J}(\boldsymbol{\theta}_k^{s+1}; \boldsymbol{\theta}_k^{s+1})^T \bigl(\nabla \mathcal{J}(\boldsymbol{\theta}_k^{s+1}; \boldsymbol{\theta}_k^{s+1}) - \mathbb{E}[\nabla \mathcal{J}(\widetilde{\boldsymbol{\theta}}^s; \boldsymbol{\theta}_k^{s+1}) - \nabla\mathcal{J}(\widetilde{\boldsymbol{\theta}}^s; \widetilde{\boldsymbol{\theta}}^s)] \bigr)\\
    &\ge \gamma_k \|\nabla\mathcal{J}(\boldsymbol{\theta}_k^{s+1}; \boldsymbol{\theta}_k^{s+1})\|^2 - L_0\epsilon\gamma_k \|\nabla\mathcal{J}(\boldsymbol{\theta}_k^{s+1}; \boldsymbol{\theta}_k^{s+1})\|\|\widetilde{\boldsymbol{\theta}}^s - \boldsymbol{\theta}_k^{s+1}\|
\end{align*}
\normalsize

Next, applying Young's Inequality, we can get:

\small
\begin{align} \label{eq:boundJ}
\ref{eq:proof1_1} &\le \mathbb{E}[\mathcal{J}(\boldsymbol{\theta}_{k+1}^{s+1}; \boldsymbol{\theta}_{k}^{s+1}) - \gamma_k \|\nabla\mathcal{J}(\boldsymbol{\theta}_k^{s+1}; \boldsymbol{\theta}_k^{s+1})\|^2 + L_0\epsilon\gamma_k(\frac{1}{2\beta_k}\|\nabla\mathcal{J}(\boldsymbol{\theta}_k^{s+1}; \boldsymbol{\theta}_k^{s+1})\|^2 \\
\nonumber&~~+ \frac{\beta_k}{2}\|\boldsymbol{\theta}_k^{s+1} - \widetilde{\boldsymbol{\theta}}^s\|^2) + \frac{L\gamma_k^2}{2}\|v_k^{s+1}\|^2]
\end{align}
\normalsize

Next, bound $\mathbb{E}[\|\boldsymbol{\theta}^{s+1}_{k+1} - \widetilde{\boldsymbol{\theta}}^s\|^2]$ by adding and subtracting as follows:

\small
\begin{align}\label{eq:proof1_2}
    \nonumber\mathbb{E}[\|\boldsymbol{\theta}^{s+1}_{k+1} - \widetilde{\boldsymbol{\theta}}^s\|^2] &= \mathbb{E}[\|\boldsymbol{\theta}^{s+1}_{k+1} - \boldsymbol{\theta}_k^{s+1} + \boldsymbol{\theta}_k^{s+1} - \widetilde{\boldsymbol{\theta}}^s\|^2] \\
    \nonumber &= \mathbb{E}[\gamma_k^2\|v_k^{s+1}\|^2 + \|\boldsymbol{\theta}_k^{s+1} - \widetilde{\boldsymbol{\theta}}^s\|^2] + 2\gamma_k\mathbb{E}[\|\boldsymbol{\theta}_k^{s+1} - \widetilde{\boldsymbol{\theta}}^s\|\|v_k^{s+1}\|]\\
    &\le \mathbb{E}[\gamma_k^2\|v_k^{s+1}\|^2 + \|\boldsymbol{\theta}_k^{s+1} - \widetilde{\boldsymbol{\theta}}^s\|^2] + 2\gamma_k\mathbb{E}[\frac{1}{2\beta_k}\|\nabla\mathcal{J}(\boldsymbol{\theta}_{k}^{s+1}; \boldsymbol{\theta}_{k}^{s+1})\|^2 \\
    \nonumber & ~~+\frac{\beta_k}{2}\|\boldsymbol{\theta}_k^{s+1} - \widetilde{\boldsymbol{\theta}}^s\|^2] + 2L_0\epsilon\gamma_k\mathbb{E}\|\widetilde{\boldsymbol{\theta}}^s - \boldsymbol{\theta}_k^{s+1}\|^2
\end{align}
\normalsize

Similary, $\mathbb{E}[\|v_k^{s+1}\|^2]$ can also be bounded:

\small
\begin{align}\label{eq:proof1_3}
 \mathbb{E}[\|v_k^{s+1}\|^2] &= \mathbb{E} \left[ \nabla \ell(z_{i_k}; \boldsymbol{\theta}^{s+1}_k) - \nabla \ell(z_{i_k}; \widetilde{\boldsymbol{\theta}}^s) + \nabla \mathcal{J}(\widetilde{\boldsymbol{\theta}}^s; \widetilde{\boldsymbol{\theta}}^s) \right]^2 \\
    \nonumber &\le 2 \mathbb{E} \left[\| \nabla \ell(z_{i_k}; \boldsymbol{\theta}^{s+1}_k) - \nabla \ell(z_{i_k}; \widetilde{\boldsymbol{\theta}}^s) + \nabla \mathcal{J}(\widetilde{\boldsymbol{\theta}}^s; \boldsymbol{\theta}_k^{s+1})\|^2 \right] \\
    \nonumber &~~+ 2 \mathbb{E} \left[\| \nabla \mathcal{J}(\widetilde{\boldsymbol{\theta}}^s; \boldsymbol{\theta}_k^{s+1})- \nabla \mathcal{J}(\widetilde{\boldsymbol{\theta}}^s; \widetilde{\boldsymbol{\theta}}^s) \|^2\right]
\end{align}
\normalsize

The above inequality is obtained by adding and subtracting $\nabla \mathcal{J}(\widetilde{\boldsymbol{\theta}}^s; \boldsymbol{\theta}_k^{s+1})$ and then applying Young's Inequality. For the first term, it is worth noting that the term already gets rid of the performative effects, i.e., the distribution is fixed as $\mathcal{D}(\boldsymbol{\theta}_{k}^{s+1})$. Then bounding this term is equivalent to bound the norm of update in \citet{reddi2016stochastic}. Specifically, let $\delta_k^{s+1} =  \nabla \ell(z_{i_k}; \boldsymbol{\theta}^{s+1}_k) - \nabla \ell(z_{i_k}; \widetilde{\boldsymbol{\theta}}^s)$, we know $\mathbb{E}[\delta_k^{s+1}] = \mathbb{E}[\ell(z_{i_k}; \boldsymbol{\theta}^{s+1}_k)] - \mathbb{E}[\nabla \ell(z_{i_k}; \widetilde{\boldsymbol{\theta}}^s)] =  \nabla \mathcal{J}(\boldsymbol{\theta}_k^{s+1}; \boldsymbol{\theta}_k^{s+1}) -  \nabla \mathcal{J}(\widetilde{\boldsymbol{\theta}}^s; \boldsymbol{\theta}_k^{s+1})$. Thus, we have:

\begin{align}\label{eq:term1bound}
    &\mathbb{E} \left[ \|\nabla \ell(z_{i_k}; \boldsymbol{\theta}^{s+1}_k) - \nabla \ell(z_{i_k}; \widetilde{\boldsymbol{\theta}}^s) + \nabla \mathcal{J}(\widetilde{\boldsymbol{\theta}}^s; \boldsymbol{\theta}_k^{s+1})\|^2 \right]  \\
    \nonumber &= \mathbb{E}[\|\delta_k^{s+1} +  \nabla \mathcal{J}(\widetilde{\boldsymbol{\theta}}^s; \boldsymbol{\theta}_k^{s+1})\|^2]\\
    \nonumber &=  \mathbb{E}[\|\delta_k^{s+1} +  \nabla \mathcal{J}(\widetilde{\boldsymbol{\theta}}^s; \boldsymbol{\theta}_k^{s+1}) - \nabla \mathcal{J}(\boldsymbol{\theta}_k^{s+1}; \boldsymbol{\theta}_k^{s+1}) + \nabla \mathcal{J}(\boldsymbol{\theta}_k^{s+1}; \boldsymbol{\theta}_k^{s+1})\|^2]\\
    \nonumber &= \mathbb{E}[\|\delta_k^{s+1} - \mathbb{E}[\delta_k^{s+1}] + \nabla \mathcal{J}(\boldsymbol{\theta}_k^{s+1}; \boldsymbol{\theta}_k^{s+1})\|^2] \\
    \nonumber &\le 2 \mathbb{E}[\|\delta_k^{s+1} - \mathbb{E}[\delta_k^{s+1}]\|^2] + 2 \mathbb{E}[\|\nabla \mathcal{J}(\boldsymbol{\theta}_k^{s+1}; \boldsymbol{\theta}_k^{s+1})\|^2]\\
    \nonumber & \le 2\mathbb{E}[\|\delta_k^{s+1}\|^2] + 2 \mathbb{E}[\|\nabla \mathcal{J}(\boldsymbol{\theta}_k^{s+1}; \boldsymbol{\theta}_k^{s+1})\|^2] \\
    \nonumber & \le 2 L^2 \mathbb{E} [\|\widetilde{\boldsymbol{\theta}}_k^{s+1} - \widetilde{\boldsymbol{\theta}}^s\|^2] + 2 \mathbb{E}[\|\nabla \mathcal{J}(\boldsymbol{\theta}_k^{s+1}; \boldsymbol{\theta}_k^{s+1})\|^2]
\end{align}

From Lemma \ref{lemma:aux}, we know $\| \nabla \mathcal{J}(\widetilde{\boldsymbol{\theta}}^s; \boldsymbol{\theta}_k^{s+1})- \nabla \mathcal{J}(\widetilde{\boldsymbol{\theta}}^s; \widetilde{\boldsymbol{\theta}}^s)\| \le L_0\epsilon\|\widetilde{\boldsymbol{\theta}}_k^{s+1} - \widetilde{\boldsymbol{\theta}}^s\|$ always holds. So we have:

\begin{align}
    \mathbb{E} \left[ \|\nabla \mathcal{J}(\widetilde{\boldsymbol{\theta}}^s; \boldsymbol{\theta}_k^{s+1})- \nabla \mathcal{J}(\widetilde{\boldsymbol{\theta}}^s; \widetilde{\boldsymbol{\theta}}^s)\|^2 \right] \le L_0^2\epsilon^2 \mathbb{E} \|\boldsymbol{\theta}_k^{s+1} - \widetilde{\boldsymbol{\theta}}^s\|^2 
\end{align}

Sum them together, we have:

\begin{align}\label{eq:vk2bound}
     \mathbb{E}[\|v_k^{s+1}\|^2]  \le 4 \|\nabla \mathcal{J}(\boldsymbol{\theta}_k^{s+1}; \boldsymbol{\theta}_k^{s+1})\|^2 + (4L^2 + 2L_0^2\epsilon^2) \mathbb{E} \|\boldsymbol{\theta}_k^{s+1} - \widetilde{\boldsymbol{\theta}}^s\|^2
\end{align}

Now consider $R_{k+1}^{s+1}$ and plug \ref{eq:proof1_1} and \ref{eq:proof1_2} into its formula, we have:

\small
\begin{align}\label{eq:proof1_4}
R_{k+1}^{s+1}(\boldsymbol{\theta}_k^{s+1}) &= \mathbb{E} \left[ \mathcal{J}(\boldsymbol{\theta}_{k+1}^{s+1}; \boldsymbol{\theta}_k^{s+1}) + C_{k+1} \|\widetilde{\boldsymbol{\theta}}_{k+1}^{s+1} - \widetilde{\boldsymbol{\theta}}^s\|^2 \right]\\
\nonumber &\le \mathbb{E}[\mathcal{J}(\boldsymbol{\theta}_k^{s+1}; \boldsymbol{\theta}_{k}^{s+1}) - \gamma_k \|\nabla\mathcal{J}(\boldsymbol{\theta}_k^{s+1}, \boldsymbol{\theta}_k^{s+1})\|^2 \\
\nonumber &~~+ L_0\epsilon\gamma_k(\frac{1}{2\beta_k}\|\nabla\mathcal{J}(\boldsymbol{\theta}_k^{s+1}; \boldsymbol{\theta}_k^{s+1})\|^2 + \frac{\beta_k}{2}\|\boldsymbol{\theta}_k^{s+1} - \widetilde{\boldsymbol{\theta}}^s\|^2) \\
\nonumber &~~+ \frac{L\gamma_k^2}{2}\|v_k^{s+1}\|^2] + \mathbb{E} [ c_{k+1} \gamma_k^2 \|v_k^{s+1}\|^2 \\
&~~+ c_{k+1} \|\boldsymbol{\theta}_{k+1}^{s+1} - \widetilde{\boldsymbol{\theta}}^s\|^2 + 2 c_{k+1} \gamma_k \mathbb{E} ( \frac{1}{2 \beta_k} \|\nabla \mathcal{J}(\boldsymbol{\theta}_k^{s+1}; \boldsymbol{\theta}_k^{s+1})\|^2  \\ 
\nonumber &~~ + \frac{\beta_k}{2} \|\boldsymbol{\theta}_k^{s+1}
- \widetilde{\boldsymbol{\theta}}^s\|^2 ) ] + 2 c_{k+1} L_0 \epsilon \gamma_k \mathbb{E} \left\| \widetilde{\boldsymbol{\theta}}^s - \boldsymbol{\theta}_k^{s+1} \right\|^2\\
\nonumber &= \mathbb{E} \left[ \mathcal{J}(\boldsymbol{\theta}_k^{s+1}; \boldsymbol{\theta}_k^{s+1}) \right] 
 + \left( \frac{L \gamma_k^2}{2} + c_{k+1} \gamma_k^2 \right)\mathbb{E} \left[ \|v_k^{s+1}\|^2 \right] \\
 \nonumber &~~-\left( \gamma_k - \frac{c_{k+1} \gamma_k + \frac{1}{2} L_0 \epsilon \gamma_k}{\beta_k} \right) 
\mathbb{E} \left[ \|\nabla \mathcal{J}(\boldsymbol{\theta}_k^{s+1}; \boldsymbol{\theta}_k^{s+1})\|^2 \right] \\
\nonumber &~~+ \left( c_{k+1}(1+ \gamma_k \beta_k + 2 L_0 \epsilon \gamma_k) + \frac{\beta_k L_0 \epsilon \gamma_k}{2} \right) 
\mathbb{E} \left[ \|\widetilde{\boldsymbol{\theta}}^s - \boldsymbol{\theta}_k^{s+1}\|^2 \right]
\end{align}
Regarding the $\|v_k^{s+1}\|^2$, we plug \ref{eq:vk2bound} in \ref{eq:proof1_4} to get the expression we need:
\begin{align*}
    R_{k+1}^{s+1}(\boldsymbol{\theta}_k^{s+1}) &\leq \mathbb{E} \left[ \mathcal{J}(\boldsymbol{\theta}_k^{s+1}; \boldsymbol{\theta}_k^{s+1}) \right] - \Gamma_k \mathbb{E} \left[ \|\nabla \mathcal{J}(\boldsymbol{\theta}_k^{s+1}; \boldsymbol{\theta}_k^{s+1})\|^2 \right]  + (c_k - \frac{1}{2}\gamma_k) \mathbb{E}\|\boldsymbol{\theta}_k^{s+1} - \widetilde{\boldsymbol{\theta}}^s\|^2\\
    \nonumber&= R_k^{s+1} - \Gamma_k \mathbb{E} \left[ \|\nabla \mathcal{J}(\boldsymbol{\theta}_k^{s+1}; \boldsymbol{\theta}_k^{s+1})\|^2 \right]  - \frac{1}{2}\gamma_k \mathbb{E}\|\boldsymbol{\theta}_k^{s+1} - \widetilde{\boldsymbol{\theta}}^s\|^2
\end{align*}
\normalsize
where
\begin{align*}
    &c_k - \frac{1}{2}\gamma_k = c_{k+1}(1+ \gamma_k \beta_k + 2  L_0 \epsilon \gamma_k) + (2L^2+ L_0^2 \epsilon^2 ) (L\gamma_k^2 + 2c_{k+1}\gamma_k^2) + \frac{\beta_k}{2}L_0\epsilon\gamma_k\\
    & \Gamma_k = \left( \gamma_k - \frac{c_{k+1} \gamma_k + \frac{1}{2} L_0 \epsilon \gamma_k}{\beta_k} - 2L \gamma_k^2 - 4 c_{k+1} \gamma_k^2 \right)
\end{align*}

\end{proof}

\section{Proof of Lemma \ref{lemma:r2}}\label{app:lemmar2}

\begin{lemma*}

With $c_k, c_{k+1}, \beta_k, \Gamma_k$ same as Lemma \ref{lemma:r}, denote 
\begin{align*}
    \widetilde{R}_k^{s+1} \overset{\triangle}{=} \mathbb{E}[\mathcal{J}(\boldsymbol{\theta}_k^{s+1}; \boldsymbol{\theta}_k^{s+1}) + c_k \| \boldsymbol{\theta}_k^{s+1} - \widetilde{\boldsymbol{\theta}}^s\|^2]\\
    \widetilde{R}_{k+1}^{s+1} \overset{\triangle}{=} \mathbb{E}[\mathcal{J}(\boldsymbol{\theta}_{k+1}^{s+1}; \boldsymbol{\theta}_{k+1}^{s+1}) + c_{k+1} \| \boldsymbol{\theta}_{k+1}^{s+1} - \widetilde{\boldsymbol{\theta}}^s\|^2]
\end{align*}
Then we have:

\begin{align}
    \widetilde{R}_{k+1}^{s+1} &\le \widetilde{R}_k^{s+1} + (L_0^4\epsilon^4 +2 L_0^2\epsilon^2L^2 + 2L_0^2\epsilon^2)\gamma_k - (\Gamma_k - \frac{\gamma_k}{4})\cdot \mathbb{E}[\|\nabla \mathcal{J}(\boldsymbol{\theta}_k^{s+1}; \boldsymbol{\theta}_k^{s+1})\|^2]
\end{align}
\end{lemma*}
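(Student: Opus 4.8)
The plan is to derive the claim directly from Lemma~\ref{lemma:r} together with the sensitivity bound of Lemma~\ref{lemma:aux}. The first thing I would notice is a bookkeeping identity: evaluating the intermediate function $R_k^{s+1}(\boldsymbol{\theta})$ at $\boldsymbol{\theta} = \boldsymbol{\theta}_k^{s+1}$ forces both arguments of $\mathcal{J}$ to coincide, so that $R_k^{s+1}(\boldsymbol{\theta}_k^{s+1}) = \widetilde{R}_k^{s+1}$. The Lyapunov term $\widetilde{R}_{k+1}^{s+1}$, on the other hand, differs from $R_{k+1}^{s+1}(\boldsymbol{\theta}_k^{s+1})$ only through the second (distribution-inducing) argument of $\mathcal{J}$. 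I would therefore write
\begin{align*}
\widetilde{R}_{k+1}^{s+1} = R_{k+1}^{s+1}(\boldsymbol{\theta}_k^{s+1}) + \mathbb{E}\left[\mathcal{J}(\boldsymbol{\theta}_{k+1}^{s+1}; \boldsymbol{\theta}_{k+1}^{s+1}) - \mathcal{J}(\boldsymbol{\theta}_{k+1}^{s+1}; \boldsymbol{\theta}_k^{s+1})\right],
\end{align*}
and invoke Lemma~\ref{lemma:r} to replace $R_{k+1}^{s+1}(\boldsymbol{\theta}_k^{s+1})$ by $\widetilde{R}_k^{s+1} - \Gamma_k\,\mathbb{E}\|\nabla\mathcal{J}(\boldsymbol{\theta}_k^{s+1};\boldsymbol{\theta}_k^{s+1})\|^2 - \frac{1}{2}\gamma_k\,\mathbb{E}\|\boldsymbol{\theta}_k^{s+1} - \widetilde{\boldsymbol{\theta}}^s\|^2$. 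This reduces the whole task to controlling the bracketed distribution-shift correction, a quantity with no analogue in the non-PP SVRG analysis of \citet{reddi2016stochastic}, where the snapshot gradient lives on the same fixed distribution.

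To bound the correction I would apply Lemma~\ref{lemma:aux}, which is tailored precisely to a change in the second argument of $\mathcal{J}$: since the update gives $\boldsymbol{\theta}_{k+1}^{s+1} - \boldsymbol{\theta}_k^{s+1} = -\gamma_k \boldsymbol{v}_k^{s+1}$, we obtain
\begin{align*}
\mathbb{E}\left[\mathcal{J}(\boldsymbol{\theta}_{k+1}^{s+1}; \boldsymbol{\theta}_{k+1}^{s+1}) - \mathcal{J}(\boldsymbol{\theta}_{k+1}^{s+1}; \boldsymbol{\theta}_k^{s+1})\right] \le L_0 \epsilon\, \gamma_k\, \mathbb{E}\|\boldsymbol{v}_k^{s+1}\|.
\end{align*}
Next I would split this product by a (weighted) Young's inequality of the form $L_0\epsilon\|\boldsymbol{v}_k^{s+1}\| \le \frac{1}{2} + \frac{1}{2}L_0^2\epsilon^2\|\boldsymbol{v}_k^{s+1}\|^2$, chosen so that the surviving quadratic carries exactly the factor $L_0^2\epsilon^2$. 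Substituting the variance-reduction estimate for $\mathbb{E}\|\boldsymbol{v}_k^{s+1}\|^2$ already established in Eqn.~\ref{eq:vk2bound}, namely $4\|\nabla\mathcal{J}\|^2 + (4L^2 + 2L_0^2\epsilon^2)\|\boldsymbol{\theta}_k^{s+1} - \widetilde{\boldsymbol{\theta}}^s\|^2$, then converts the correction into a pure constant multiple of $\gamma_k$ plus an $L_0^2\epsilon^2$-weighted multiple of $\mathbb{E}\|\nabla\mathcal{J}\|^2$ and of $\mathbb{E}\|\boldsymbol{\theta}_k^{s+1} - \widetilde{\boldsymbol{\theta}}^s\|^2$.

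The final step is to engineer the cancellations. The extra gradient-norm contribution, being proportional to $L_0^2\epsilon^2$, is arranged (using the mildly-sensitive regime $\epsilon<1$ emphasized in the paper) to be dominated by $\frac{\gamma_k}{4}\mathbb{E}\|\nabla\mathcal{J}\|^2$, which combines with the $-\Gamma_k$ coefficient from Lemma~\ref{lemma:r} to yield exactly the $-(\Gamma_k - \frac{\gamma_k}{4})$ factor in the claim. The extra $\|\boldsymbol{\theta}_k^{s+1} - \widetilde{\boldsymbol{\theta}}^s\|^2$ contribution is absorbed by the negative $-\frac{1}{2}\gamma_k\|\boldsymbol{\theta}_k^{s+1} - \widetilde{\boldsymbol{\theta}}^s\|^2$ term that Lemma~\ref{lemma:r} already supplies, which is precisely why the $c_k$ recursion was set up with that spare half-step budget. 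Collecting the remaining constants assembles into $(L_0^4\epsilon^4 + 2L_0^2\epsilon^2 L^2 + 2L_0^2\epsilon^2)\gamma_k = \mathcal{O}(\epsilon^2 + \epsilon^4)\gamma_k$; crucially this residual involves only $L, L_0, \epsilon$ and no gradient-variance parameter, which is the whole payoff of the variance-reduced estimator.

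I expect the main obstacle to be the simultaneous control of the two stray terms generated when bounding $\mathbb{E}\|\boldsymbol{v}_k^{s+1}\|$. Because $\|\boldsymbol{v}_k^{s+1}\|^2$ contains both a $\|\nabla\mathcal{J}\|^2$ piece and a $\|\boldsymbol{\theta}_k^{s+1} - \widetilde{\boldsymbol{\theta}}^s\|^2$ piece, the single Young's parameter must be tuned so that the former stays within the $\frac{\gamma_k}{4}$ gradient budget while the latter remains dominated by the $\frac{1}{2}\gamma_k$ distance budget inherited from Lemma~\ref{lemma:r}; this is feasible only because every offending coefficient carries at least one factor of $L_0^2\epsilon^2$. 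The conceptual difficulty underlying all of this is that the snapshot full gradient $\nabla\mathcal{J}(\widetilde{\boldsymbol{\theta}}^s; \widetilde{\boldsymbol{\theta}}^s)$ is computed on the stale distribution $\mathcal{D}(\widetilde{\boldsymbol{\theta}}^s)$, so one cannot telescope the diagonal terms $\mathcal{J}(\boldsymbol{\theta}_k^{s+1}; \boldsymbol{\theta}_k^{s+1})$ directly; the correction term and its sensitivity-based bound are exactly the device that repairs this performative mismatch.
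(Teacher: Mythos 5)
Your skeleton is exactly the paper's: you split $\widetilde{R}_{k+1}^{s+1} = R_{k+1}^{s+1}(\boldsymbol{\theta}_k^{s+1}) + \mathbb{E}[\mathcal{J}(\boldsymbol{\theta}_{k+1}^{s+1};\boldsymbol{\theta}_{k+1}^{s+1}) - \mathcal{J}(\boldsymbol{\theta}_{k+1}^{s+1};\boldsymbol{\theta}_k^{s+1})]$, note $R_k^{s+1}(\boldsymbol{\theta}_k^{s+1}) = \widetilde{R}_k^{s+1}$, invoke Lemma~\ref{lemma:r}, and bound the correction by $L_0\epsilon\gamma_k\,\mathbb{E}\|\boldsymbol{v}_k^{s+1}\|$ via Lemma~\ref{lemma:aux}. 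The gap is in the one quantitative step that actually produces the lemma's constants: your Young split $L_0\epsilon\|\boldsymbol{v}_k^{s+1}\| \le \frac{1}{2} + \frac{1}{2}L_0^2\epsilon^2\|\boldsymbol{v}_k^{s+1}\|^2$ puts the weights on the wrong sides. After multiplying by $\gamma_k$ and substituting Eqn.~\ref{eq:vk2bound}, it yields
\begin{align*}
L_0\epsilon\gamma_k\,\mathbb{E}\|\boldsymbol{v}_k^{s+1}\| \le \frac{\gamma_k}{2} + 2L_0^2\epsilon^2\gamma_k\,\mathbb{E}\|\nabla\mathcal{J}(\boldsymbol{\theta}_k^{s+1};\boldsymbol{\theta}_k^{s+1})\|^2 + \left(2L_0^2\epsilon^2L^2 + L_0^4\epsilon^4\right)\gamma_k\,\mathbb{E}\|\boldsymbol{\theta}_k^{s+1}-\widetilde{\boldsymbol{\theta}}^s\|^2,
\end{align*}
and the leading term $\frac{\gamma_k}{2}$ carries \emph{no} factor of $\epsilon$ whatsoever. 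You therefore cannot assemble the claimed residual $(L_0^4\epsilon^4+2L_0^2\epsilon^2L^2+2L_0^2\epsilon^2)\gamma_k$: your version of the lemma has an $\mathcal{O}(1)$ additive error that survives even as $\epsilon\to 0$, which would turn the error neighborhood in Thm.~\ref{theorem:converge} from $\mathcal{O}(\epsilon^2+\epsilon^4)$ into a constant and destroy the paper's main claim. Separately, absorbing your leftover gradient term $2L_0^2\epsilon^2\gamma_k\|\nabla\mathcal{J}\|^2$ into the $\frac{\gamma_k}{4}$ budget and your leftover distance term into the $\frac{\gamma_k}{2}$ budget requires $2L_0^2\epsilon^2\le\frac14$ and $2L_0^2\epsilon^2L^2+L_0^4\epsilon^4\le\frac12$ --- smallness conditions on $L_0\epsilon$ and $L$ that neither the lemma nor the paper assumes (the paper invokes $\epsilon<1$ only when \emph{interpreting} the error neighborhood, not in the proof, and $\epsilon<1$ alone does not bound $L_0\epsilon$ or $L$).

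The fix is to scale the Young weights by $1/(L_0\epsilon)$ rather than by $L_0\epsilon$, which is precisely the paper's device. The paper splits $\boldsymbol{v}_k^{s+1}$ into $A = \nabla\ell(z_{i_k};\boldsymbol{\theta}_k^{s+1}) - \nabla\ell(z_{i_k};\widetilde{\boldsymbol{\theta}}^s) + \nabla\mathcal{J}(\widetilde{\boldsymbol{\theta}}^s;\boldsymbol{\theta}_k^{s+1})$ and $B = \nabla\mathcal{J}(\widetilde{\boldsymbol{\theta}}^s;\boldsymbol{\theta}_k^{s+1}) - \nabla\mathcal{J}(\widetilde{\boldsymbol{\theta}}^s;\widetilde{\boldsymbol{\theta}}^s)$, bounds $\mathbb{E}\|A\|\le\sqrt{\mathbb{E}\|A\|^2}$ and $\mathbb{E}\|B\|\le\sqrt{\mathbb{E}\|B\|^2}$, and then applies Young with parameters $\lambda_0 = 4L_0\epsilon(L^2+1)$ and $2L_0^3\epsilon^3$ respectively, so the coefficients on $\mathbb{E}\|\nabla\mathcal{J}\|^2$ and $\mathbb{E}\|\boldsymbol{\theta}_k^{s+1}-\widetilde{\boldsymbol{\theta}}^s\|^2$ come out as exactly $\frac{1}{4L_0\epsilon}$ and $\frac{1}{2L_0\epsilon}$. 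Multiplying by $L_0\epsilon\gamma_k$ then gives exactly $\frac{\gamma_k}{4}$ and $\frac{\gamma_k}{2}$ --- matching the two budgets from Lemma~\ref{lemma:r} unconditionally, for every $\epsilon\ge 0$ --- while the entire $\epsilon$-dependence is pushed into the additive residual $\bigl(2L_0^2\epsilon^2(L^2+1) + L_0^4\epsilon^4\bigr)\gamma_k$, which is the lemma's stated constant. In short: your decomposition is right, but the parameter choice in Young's inequality must be inverted, otherwise both the constants and the qualitative $\mathcal{O}(\epsilon^2+\epsilon^4)$ character of the result fail.
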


\begin{proof}
    According to Lemma \ref{lemma:r} (Eqn. \ref{eq:r2}), we have:
\begin{align}\label{eq:lemmaend}
\mathbb{E} [ \mathcal{J}(\boldsymbol{\theta}_{k+1}^{s+1}; \boldsymbol{\theta}_k^{s+1})
+ c_{k+1} \|\boldsymbol{\theta}_{k+1}^{s+1} - \widetilde{\boldsymbol{\theta}}^s\|^2 ]
&\leq \mathbb{E}[\mathcal{J}(\boldsymbol{\theta}_k^{s+1}; \boldsymbol{\theta}_k^{s+1})] + c_k \|\boldsymbol{\theta}_k^{s+1} - \widetilde{\boldsymbol{\theta}}^s\|^2 \\
\nonumber &- \Gamma_k \|\nabla \mathcal{J}(\boldsymbol{\theta}_k^{s+1}; \boldsymbol{\theta}_k^{s+1})\|^2 
- \frac{\gamma_k}{2} \|\boldsymbol{\theta}_k^{s+1} - \widetilde{\boldsymbol{\theta}}^s\|^2 ]
\end{align}
By aggregating terms, we will have:

\begin{align}\label{eq:agg}
&~~~~c_{k+1} \mathbb{E}\|\boldsymbol{\theta}_{k+1}^{s+1} - \widetilde{\boldsymbol{\theta}}^s\|^2 
- \left( c_k - \frac{\gamma_k}{2} \right) \mathbb{E} \|\boldsymbol{\theta}_k^{s+1} - \widetilde{\boldsymbol{\theta}}^s\|^2 + \Gamma_k \mathbb{E} \|\nabla \mathcal{J}(\boldsymbol{\theta}_k^{s+1}; \boldsymbol{\theta}_k^{s+1})\|^2 \\
\nonumber & \le \mathbb{E}[\mathcal{J}(\boldsymbol{\theta}_k^{s+1}; \boldsymbol{\theta}_k^{s+1}) - \mathcal{J}(\boldsymbol{\theta}_{k+1}^{s+1}; \boldsymbol{\theta}_k^{s+1})]
\end{align}

We can further bound the above expression by:

\begin{align}\label{eq:bound_second}
 \mathbb{E}[\mathcal{J}(\boldsymbol{\theta}_k^{s+1}; \boldsymbol{\theta}_k^{s+1}) - \mathcal{J}(\boldsymbol{\theta}_{k+1}^{s+1}; \boldsymbol{\theta}_k^{s+1})] &= \mathbb{E} \left[ \mathcal{J}(\boldsymbol{\theta}_k^{s+1}; \boldsymbol{\theta}_k^{s+1}) - \mathcal{J}(\boldsymbol{\theta}_{k+1}^{s+1}; \boldsymbol{\theta}_{k+1}^{s+1}) \right] \\
\nonumber &~~ + \mathbb{E} \left[ \mathcal{J}(\boldsymbol{\theta}_{k+1}^{s+1}; \boldsymbol{\theta}_{k+1}^{s+1}) - \mathcal{J}(\boldsymbol{\theta}_{k+1}^{s+1}; \boldsymbol{\theta}_{k}^{s+1}) \right] \\
\nonumber &\le \mathbb{E} \left[ \mathcal{J}(\boldsymbol{\theta}_k^{s+1}; \boldsymbol{\theta}_k^{s+1}) - \mathcal{J}(\boldsymbol{\theta}_{k+1}^{s+1}; \boldsymbol{\theta}_{k+1}^{s+1}) \right]  \\
\nonumber &~~ + L_0\epsilon \mathbb{E}\|\boldsymbol{\theta}_{k+1}^{s+1} - \boldsymbol{\theta}_k^{s+1}\|
\end{align}

Regarding $\mathbb{E}\|\boldsymbol{\theta}_{k+1}^{s+1} - \boldsymbol{\theta}_k^{s+1}\| = \gamma_k \mathbb{E}\|v_k^{s+1}\|$, we can bound it according to Eqn. \ref{eq:vk} and Eqn. \ref{eq:vk2bound}:

\begin{align}
    \mathbb{E}[\|v_k^{s+1}\|] &=  \mathbb{E}[\|\nabla \ell(z_{i_k}; \boldsymbol{\theta}_k^{s+1}) - \nabla \ell(z_{i_k}; \widetilde{\boldsymbol{\theta}}^s) +\nabla\mathcal{J}(\widetilde{\boldsymbol{\theta}}^s; \widetilde{\boldsymbol{\theta}}^s)\|]\\
   \nonumber &\le \mathbb{E}[\|\nabla \ell(z_{i_k}; \boldsymbol{\theta}_k^{s+1}) - \nabla \ell(z_{i_k}; \widetilde{\boldsymbol{\theta}}^s) + \nabla \mathcal{J}(\widetilde{\boldsymbol{\theta}}^s; \boldsymbol{\theta}_k^{s+1})\|] \\
    \nonumber&~~~~+ \mathbb{E}[\|\nabla \mathcal{J}(\widetilde{\boldsymbol{\theta}}^s; \boldsymbol{\theta}_k^{s+1}) - \nabla\mathcal{J}(\widetilde{\boldsymbol{\theta}}^s; \widetilde{\boldsymbol{\theta}}^s)\|] 
\end{align}

Let $\nabla \ell(z_{i_k}; \boldsymbol{\theta}_k^{s+1}) - \nabla \ell(z_{i_k}; \widetilde{\boldsymbol{\theta}}^s) + \nabla \mathcal{J}(\widetilde{\boldsymbol{\theta}}^s; \boldsymbol{\theta}_k^{s+1})$ be $A$ and $\nabla \mathcal{J}(\widetilde{\boldsymbol{\theta}}^s; \boldsymbol{\theta}_k^{s+1}) - \nabla\mathcal{J}(\widetilde{\boldsymbol{\theta}}^s; \widetilde{\boldsymbol{\theta}}^s)$ be $B$. 
According to Eqn. \ref{eq:term1bound}, we have:
\begin{align*}
    \mathbb{E}[\|A\|^2] \le 2 \mathbb{E} \left[ \|\nabla \mathcal{J}(\boldsymbol{\theta}_k^{s+1}; \boldsymbol{\theta}_k^{s+1})\|^2 \right] + 2 L^2 \mathbb{E} \|\boldsymbol{\theta}_k^{s+1} - \widetilde{\boldsymbol{\theta}}^s\|^2
\end{align*}
which means
\begin{align*}
    \mathbb{E}[\|A\|] \le \sqrt{\mathbb{E}[\|A\|^2]}  \le \sqrt{2 \mathbb{E} \left[ \|\nabla \mathcal{J}(\boldsymbol{\theta}_k^{s+1}; \boldsymbol{\theta}_k^{s+1})\|^2 \right] + 2L^2 \mathbb{E} \|\boldsymbol{\theta}_k^{s+1} - \widetilde{\boldsymbol{\theta}}^s\|^2}
\end{align*}
Applying Young's Inequality on rhs, for any positive $\lambda$ we have:
\begin{align}
   \mathbb{E}\|A\| \le \frac{1}{\lambda} \left( \mathbb{E} \left[ \|\nabla \mathcal{J}(\boldsymbol{\theta}_k^{s+1}; \boldsymbol{\theta}_k^{s+1})\|^2 \right] + L^2 \mathbb{E} \left\| \boldsymbol{\theta}_k^{s+1} - \widetilde{\boldsymbol{\theta}}^s \right\|^2 \right) + \frac{\lambda}{2}
\end{align}
Let $\lambda_0 = 4L_0\epsilon(L^2 + 1)$, we have:

\begin{align}
 \mathbb{E}\|A\| &\le \frac{1}{\lambda_0} \left( \mathbb{E} \left[ \|\nabla \mathcal{J}(\boldsymbol{\theta}_k^{s+1}; \boldsymbol{\theta}_k^{s+1})\|^2 \right] + L^2\mathbb{E} \left\| \boldsymbol{\theta}_k^{s+1} - \widetilde{\boldsymbol{\theta}}^s \right\|^2 \right) + \frac{\lambda_0}{2} \\
 \nonumber & \le \frac{1}{4L_0\epsilon} \mathbb{E} \left[ \|\nabla \mathcal{J}(\boldsymbol{\theta}_k^{s+1}; \boldsymbol{\theta}_k^{s+1})\|^2 \right] + \frac{1}{4L_0\epsilon} \mathbb{E} \left\| \boldsymbol{\theta}_k^{s+1} - \widetilde{\boldsymbol{\theta}}^s \right\|^2 + \frac{\lambda_0}{2}
\end{align}

According to the point-wise Lipschitzness in Lemma \ref{lemma:aux}, we have:
\begin{align}
  \mathbb{E}[\|B\|] \le \sqrt{\mathbb{E}[\|B\|^2]} \le \sqrt{L_0^2\epsilon^2 \mathbb{E} \left\| \boldsymbol{\theta}_k^{s+1} - \boldsymbol{\theta}^s \right\|^2} \le \frac{\mathbb{E} \left\| \boldsymbol{\theta}_k^{s+1} - \widetilde{\boldsymbol{\theta}}^s \right\|^2}{4L_0\epsilon} + L_0^3\epsilon^3  
\end{align}
Thus, we have
\begin{align}\label{eq:bound_last}
    \mathbb{E}[\|v_k^{s+1}\|] &\le \mathbb{E}\|A\|  + \mathbb{E} \|B\| \\
    \nonumber &\le \frac{\mathbb{E} \|\nabla \mathcal{J}(\boldsymbol{\theta}_k^{s+1}; \boldsymbol{\theta}_k^{s+1})\|^2}{4L_0\epsilon} + (\frac{1}{4L_0\epsilon} + \frac{1}{4L_0\epsilon}) \, \mathbb{E} \left\| \boldsymbol{\theta}_k^{s+1} - \widetilde{\boldsymbol{\theta}}^s \right\|^2  + \frac{\lambda_0}{2} +  L_0^3\epsilon^3 \\
    \nonumber &= \frac{\mathbb{E} \|\nabla \mathcal{J}(\boldsymbol{\theta}_k^{s+1}; \boldsymbol{\theta}_k^{s+1})\|^2}{4L_0\epsilon} + \frac{1}{2L_0\epsilon}\mathbb{E} \left\| \boldsymbol{\theta}_k^{s+1} - \widetilde{\boldsymbol{\theta}}^s \right\|^2 + (\frac{\lambda_0}{2} +  L_0^3\epsilon^3) 
\end{align}

Finally, we plug Eqn. \ref{eq:bound_last} into Eqn. \ref{eq:bound_second}, and then plug Eqn. \ref{eq:bound_second} into Eqn. \ref{eq:agg}. This leads to the inequality relationship between $\mathcal{J}(\boldsymbol{\theta}_{k+1}^{s+1}; \boldsymbol{\theta}_{k+1}^{s+1})$ and $\mathcal{J}(\boldsymbol{\theta}_k^{s+1}; \boldsymbol{\theta}_k^{s+1})$.

\begin{align*}
    \mathbb{E} \left[ \mathcal{J}(\boldsymbol{\theta}_{k+1}^{s+1}; \boldsymbol{\theta}_{k+1}^{s+1})+ c_{k+1}\|\boldsymbol{\theta}_{k+1}^{s+1} - \widetilde{\boldsymbol{\theta}}^s\|^2 \right]  &\le \mathbb{E} \left[ \mathcal{J}(\boldsymbol{\theta}_{k}^{s+1}; \boldsymbol{\theta}_{k}^{s+1}) + c_k \|\boldsymbol{\theta}_k^{s+1} - \widetilde{\boldsymbol{\theta}}^s\|^2 \right] \\
    &~~-(\Gamma_k - \frac{\gamma_k}{4})\cdot \mathbb{E}[\|\nabla \mathcal{J}(\boldsymbol{\theta}_k^{s+1}; \boldsymbol{\theta}_k^{s+1})\|^2] \\ &~~+ (L_0^4\epsilon^4 + 2L_0^2\epsilon^2L^2 + 2L_0^2\epsilon^2)\gamma_k
\end{align*}
which is exactly what we want.
\end{proof}

\section{Proof of Thm. \ref{theorem:converge}}\label{app:thmconverge}

\begin{theorem*}
Let the total rounds be $T$ and the epochs be $S = \lceil T/m \rceil$. For the last epoch, $c_m = 0$. Define $\Delta_0$ and $\Delta_1$ as follows:
\begin{align*}
    \Delta_0 \overset{\triangle}{=} \mathcal{J}(\boldsymbol{\theta}_0^0; \boldsymbol{\theta}_0^0) - \ell_*; ~~~~~
    \Delta_1 \overset{\triangle}{=}  \frac{(L_0^4\epsilon^4 + 2L_0^2\epsilon^2L^2 + 2L_0^2\epsilon^2)\sum_{s=0}^{S}\sum_{k=0}^{m}\gamma_k}{T\Gamma}
\end{align*}
where $\Gamma > 0$ is guaranteed to exist as a constant and is the lower bound of $\Gamma_k - \frac{\gamma_k}{4}$. Then we have:
\begin{align}
    \frac{1}{T} \sum_{s=0}^{S-1} \sum_{k=0}^{m-1} \mathbb{E}\|\nabla \mathcal{J}(\boldsymbol{\theta}_k^{s+1}; \boldsymbol{\theta}_k^{s+1})\|^2 \le \frac{\Delta_0}{T\Gamma} + \Delta_1
\end{align}
Thus, Alg. \ref{alg:one} can achieve $\mathcal{O}(\frac{1}{T})$ convergence rate and the error neighborhood $\Delta_1$ is $\mathcal{O}(\epsilon^2 + \epsilon^4)$. 
\end{theorem*}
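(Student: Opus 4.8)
The plan is to convert the one-step Lyapunov descent in Lemma~\ref{lemma:r2} into a global telescoping bound. First I would rearrange that inequality to isolate the gradient-norm term, obtaining
$\left(\Gamma_k - \tfrac{\gamma_k}{4}\right)\mathbb{E}\!\left[\|\nabla\mathcal{J}(\boldsymbol{\theta}_k^{s+1};\boldsymbol{\theta}_k^{s+1})\|^2\right] \le \widetilde{R}_k^{s+1} - \widetilde{R}_{k+1}^{s+1} + (L_0^4\epsilon^4 + 2L_0^2\epsilon^2 L^2 + 2L_0^2\epsilon^2)\gamma_k$.
Replacing the coefficient $\Gamma_k - \tfrac{\gamma_k}{4}$ on the left by its uniform lower bound $\Gamma>0$ only weakens the inequality but leaves a single constant multiplying the gradient norm, which is what the final averaged bound requires.

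Next I would sum this inequality over all iterations $k=0,\dots,m-1$ within an epoch and over all epochs $s=0,\dots,S-1$. The crucial point—absent in non-PP SVRG bookkeeping—is that the Lyapunov sequence chains seamlessly across epoch boundaries. Because $c_m=0$, the terminal value reduces to $\widetilde{R}_m^{s+1} = \mathbb{E}[\mathcal{J}(\boldsymbol{\theta}_m^{s+1};\boldsymbol{\theta}_m^{s+1})]$; and the algorithm's reset $\boldsymbol{\theta}_0^{s+2} = \boldsymbol{\theta}_m^{s+1} = \widetilde{\boldsymbol{\theta}}^{s+1}$ forces the distance term $\|\boldsymbol{\theta}_0^{s+2} - \widetilde{\boldsymbol{\theta}}^{s+1}\|^2$ to vanish, so $\widetilde{R}_0^{s+2} = \mathbb{E}[\mathcal{J}(\widetilde{\boldsymbol{\theta}}^{s+1};\widetilde{\boldsymbol{\theta}}^{s+1})] = \widetilde{R}_m^{s+1}$. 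Hence the inner per-epoch sum collapses to $\widetilde{R}_0^{s+1} - \widetilde{R}_m^{s+1}$, and the outer sum then telescopes globally to $\widetilde{R}_0^1 - \widetilde{R}_m^S$. Using the initialization $\boldsymbol{\theta}_0^1 = \widetilde{\boldsymbol{\theta}}^0 = \boldsymbol{\theta}_0^0$ gives $\widetilde{R}_0^1 = \mathcal{J}(\boldsymbol{\theta}_0^0;\boldsymbol{\theta}_0^0)$, while the loss lower bound in Assumption~\ref{assumption: smoothness} gives $\widetilde{R}_m^S \ge \ell_*$; thus the telescoped gap is at most $\Delta_0 = \mathcal{J}(\boldsymbol{\theta}_0^0;\boldsymbol{\theta}_0^0) - \ell_*$.

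Assembling the pieces yields
$\Gamma\sum_{s=0}^{S-1}\sum_{k=0}^{m-1}\mathbb{E}\!\left[\|\nabla\mathcal{J}(\boldsymbol{\theta}_k^{s+1};\boldsymbol{\theta}_k^{s+1})\|^2\right] \le \Delta_0 + (L_0^4\epsilon^4 + 2L_0^2\epsilon^2 L^2 + 2L_0^2\epsilon^2)\sum_{s}\sum_{k}\gamma_k$,
and dividing through by $T\Gamma$ produces exactly the claimed $\tfrac{\Delta_0}{T\Gamma} + \Delta_1$. The $\mathcal{O}(1/T)$ rate and the $\mathcal{O}(\epsilon^2+\epsilon^4)$, variance-free error neighborhood then read off directly from the two summands.

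The main obstacle I anticipate is not the telescoping but justifying that $\Gamma>0$ genuinely exists as a constant lower bound for $\Gamma_k - \tfrac{\gamma_k}{4}$ uniformly over $k$. This requires unrolling the backward recursion for $c_k$ from the terminal condition $c_m=0$ and verifying that, under a suitable parameter scaling (e.g., $\gamma=\mathcal{O}(n^{-\alpha})$, $\beta=\mathcal{O}(n^{\alpha/2})$, $m=\mathcal{O}(n^{\alpha/2})$, as later used in Corollary~\ref{corollary:complexity}), the accumulated constants $c_k$ remain small enough that $\Gamma_k$ stays bounded away from $\tfrac{\gamma_k}{4}$. The PP-specific $L_0\epsilon$ contributions in both the $c_k$ recursion and the expression for $\Gamma_k$ make this growth estimate more delicate than the corresponding step in the non-PP analysis of \citet{reddi2016stochastic}, so I would control the $c_k$ accumulation carefully before invoking the uniform bound.
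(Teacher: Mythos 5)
Your telescoping argument is correct and is exactly the route the paper takes: rearrange Lemma~\ref{lemma:r2}, replace $\Gamma_k - \gamma_k/4$ by its uniform lower bound $\Gamma$, telescope within and across epochs, and bound $\widetilde{R}_0^1 - \widetilde{R}_m^S$ by $\Delta_0$ via the initialization and $\ell \ge \ell_*$. In fact you are more careful than the paper at one point: you justify the cross-epoch chaining explicitly ($c_m = 0$ makes $\widetilde{R}_m^{s+1} = \mathbb{E}[\mathcal{J}(\boldsymbol{\theta}_m^{s+1};\boldsymbol{\theta}_m^{s+1})]$, and the reset $\boldsymbol{\theta}_0^{s+2} = \widetilde{\boldsymbol{\theta}}^{s+1}$ kills the distance term, so $\widetilde{R}_0^{s+2} = \widetilde{R}_m^{s+1}$), a detail the paper's appendix leaves implicit.

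However, there is a genuine gap: you defer precisely the step that the paper spends most of its proof on, namely showing that a \emph{constant} $\Gamma > 0$ with $\Gamma_k - \gamma_k/4 \ge \Gamma$ actually exists, and moreover that $\Gamma$ is of order $\gamma$ for a constant step size $\gamma$. This is not a hypothesis one can simply invoke; the theorem's statement asserts it is "guaranteed to exist," and both final claims depend on it quantitatively. The paper closes this via its Lemma~\ref{lemma:c_k}: a backward induction on the recursion $c_k = \tfrac{1}{2}\gamma_k + A_k c_{k+1} + B_k$ from $c_m = 0$, showing that for any $\eta>0$ one can choose step sizes so that $c_k \le \eta L_0\epsilon$ for all $k$; it then fixes $\beta = 4L + \tfrac{4}{3}L_0\epsilon$, restricts $\gamma_k \le \tfrac{1}{8L + \frac{8}{3}L_0\epsilon}$, and takes $\eta \le \tfrac{4L + \frac{1}{3}L_0\epsilon}{6L_0\epsilon}$ to conclude $\Gamma \ge \gamma/4$. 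Without this, your final sentence that the $\mathcal{O}(1/T)$ rate and the $\mathcal{O}(\epsilon^2 + \epsilon^4)$ neighborhood "read off directly" does not hold: $\Delta_1$ has $T\Gamma$ in its denominator and $\sum_{s,k}\gamma_k \approx T\gamma$ in its numerator, so the variance-free $\mathcal{O}(\epsilon^2+\epsilon^4)$ bound requires exactly the relation $\Gamma \ge c\,\gamma$ for an absolute constant $c$, and the $\mathcal{O}(1/T)$ rate requires $\Gamma$ to be independent of $T$. You correctly identify the right strategy (control the accumulation of $c_k$ from the terminal condition $c_m = 0$ under a suitable parameter scaling), so your outline points in the paper's direction, but as written the proposal asserts the conclusion of this step rather than proving it.
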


\begin{proof}
Note the $\gamma_t$ is just an abbreviation of the learning rate at global round $s$ and local round $k$. Using Lemma \ref{lemma:r2} (Eqn. \ref{eq:r2}) and telescoping:
\begin{align}
    \sum_{s=0}^{S-1}\sum_{k=0}^{m-1}(\Gamma_k - \frac{\gamma_k}{4})\cdot \mathbb{E}[\|\nabla \mathcal{J}(\boldsymbol{\theta}_k^{s+1}; \boldsymbol{\theta}_k^{s+1})\|^2] &\le (\widetilde{R}_0^1 - \widetilde{R}_m^S) + (L_0^4\epsilon^4 + 2L_0^2\epsilon^2L^2 + 2L_0^2\epsilon^2)\sum_{t=0}^{T-1}\gamma_t
\end{align}
Thus,
\begin{align*}
    &\frac{1}{T}\cdot \sum_{s=0}^{S-1}\sum_{k=0}^{m-1}\mathbb{E}[\|\nabla \mathcal{J}(\boldsymbol{\theta}_k^{s+1}; \boldsymbol{\theta}_k^{s+1})\|^2] \le \frac{\Delta_0}{T\Gamma} + \Delta_1 \\
\end{align*}

There is one remaining step: prove there exists values of $\gamma_k, \beta_k$ to let $\Gamma > 0$. We first begin with a lemma:

\begin{lemma}\label{lemma:c_k}
    For any $\eta > 0$ and $\beta_k$ is some constant $\beta$, there exists a sequence of learning rate $\{\gamma_k\}_{k=1}^{m}$ to let $c_k \le \eta L_0\epsilon$ hold for any $k$.
\end{lemma}

\begin{proof}
We can start from $c_m = 0$ and are given the relationship between $c_k$ and $c_{k+1}$:

\begin{align*}
c_k - \frac{1}{2}\gamma_k = c_{k+1}(1+ \gamma_k \beta_k + 2  L_0 \epsilon \gamma_k) + (2L^2+ L_0^2 \epsilon^2 ) (L\gamma_k^2 + 2c_{k+1}\gamma_k^2) + \frac{\beta_k}{2}L_0\epsilon\gamma_k
\end{align*}

For convenience, define $A_k = 1+\gamma_k\beta+2L_0\epsilon\,\gamma_k + (4L^2+2L_0^2\epsilon^2)\gamma_k^2$, $B_k = (2L^2+L_0^2\epsilon^2)L\gamma_k^2+\frac{\beta}{2}L_0\epsilon\,\gamma_k$. Then, the recurrence can be rewritten as $c_k = \frac{1}{2}\gamma_k + A_k\,c_{k+1} + B_k$. Next, we use induction to prove that for any $\eta>0$, there exists a sequence of $\{\gamma_k\}_{k=1}^{m}$ (depending only on $L$, $L_0$, $\epsilon$, and $\beta$) such that for all indices $k$ we have $c_k \le \eta L_0\epsilon$.

Since we start from $c_m = 0$, we have $c_m = 0 \le \eta\,L_0\epsilon$ holding for any positive $\eta$. Next, assume for some $k+1$, $c_{k+1} \le \eta\, L_0\epsilon$. Then we can work out $c_k$ as:

\begin{align*}
c_k = \frac{1}{2}\gamma_k + A_k\,c_{k+1} + B_k \le \frac{1}{2}\gamma_k + A_k\Bigl(\eta\,L_0\epsilon + C\,\gamma_k\Bigr) + B_k.
\end{align*}
Since we can adjust $\gamma_k$ to be small, we can expand $A_k = 1 + \gamma_k\beta + 2L_0\epsilon\,\gamma_k + \mathcal{O}(\gamma_k^2) $ and $B_k = \frac{\beta}{2}L_0\epsilon\,\gamma_k + \mathcal{O}(\gamma_k^2)$. Thus,
\begin{align*}
c_k &\le \frac{1}{2}\gamma_k + \Bigl(1 + \gamma_k\beta + 2L_0\epsilon\,\gamma_k\Bigr)\eta\,L_0\epsilon
+ \frac{\beta}{2}L_0\epsilon\,\gamma_k + C_1\gamma_k^2 \\
&= \eta\,L_0\epsilon + \Biggl\{ \frac{1}{2}  +\eta\,L_0\epsilon(\beta+2L_0\epsilon) + \frac{\beta}{2}L_0\epsilon \Biggr\}\gamma_k + C_1 \gamma_k^2\\
&= \eta\,L_0\epsilon +  C_0 \gamma_k + C_1 \gamma_k^2.
\end{align*}
where $C_0, C_1$ are constants represented by $L,L_0,\epsilon,\beta$. This means for arbitrarily small number $\omega_k$, we can simply set $\gamma_k$ to be $\min \{1, \frac{\omega_k}{C_0+C_1}\}$ to let $c_k \le (\eta + \omega_k)L_0\epsilon$ . Then from $c_m \le \eta L_0\epsilon$ we can derive that $c_k \le (\eta + \Sigma_{i=0}^{m} \omega_i) L_0\epsilon$ holds. Finally, for any $\eta$, we can surely find a $\widetilde{\eta} > 0$ to let $\widetilde{\eta} + \Sigma_{i=0}^{m} \omega_i < \eta$, so we prove that $c_k \le \eta L_0\epsilon$.
\end{proof}

Next, we prove that with the learning rates in  Lemma \ref{lemma:c_k}, then $\Gamma > 0$. We first assume $\gamma_k < \frac{1}{8L+\frac{8}{3}L_0\epsilon}$.

\begin{align*}
    \Gamma_k &= \gamma_k - \frac{c_{k+1} \gamma_k + \frac{1}{2} L_0 \epsilon \gamma_k}{\beta} - 2L \gamma^2 - 4 c_{k+1} \gamma_k^2.
\end{align*}

Using the bound $c_{k+1} \leq \eta L_0 \epsilon$ and setting $\beta = 4L + \frac{4}{3}L_0\epsilon$, we obtain
\begin{align*}
    \Gamma_k &\geq \gamma_k - \frac{(\eta L_0 \epsilon + \frac{1}{2} L_0 \epsilon) \gamma_k}{4L+ \frac{4}{3}L_0\epsilon} - 2L \gamma_k^2 - 4 \eta L_0 \epsilon \gamma_k^2 \\
    &= \gamma_k - \frac{(\eta + \frac{1}{2}) L_0 \epsilon \gamma_k}{4L+ \frac{4}{3}L_0\epsilon} - 2L \gamma_k^2 - 4\eta L_0 \epsilon \gamma_k^2.
\end{align*}

Then we can bound the quadratic terms as follows:
\begin{itemize}
    \item \(2L\gamma_k^2 \leq 2L \left(\frac{1}{8L+\frac{8}{3}L_0\epsilon}\right)\gamma_k < 2L \cdot \frac{1}{8L}\gamma_k = \frac{\gamma_k}{4} \)
    \item \( 4\eta L_0 \epsilon \gamma_k^2 \leq \frac{2\eta L_0 \epsilon \gamma_k}{4L+\frac{4}{3}L_0\epsilon} \) (since \( \gamma_k \leq \frac{1}{8L+\frac{8}{3}L_0\epsilon} \)).
\end{itemize}

Thus,
\begin{align*}
    \Gamma_k &\geq \gamma_k - \frac{(\eta + \frac{1}{2}) L_0 \epsilon \gamma_k}{4L+\frac{4}{3}L_0\epsilon} - \frac{\gamma_k}{4} - \frac{2\eta L_0 \epsilon \gamma_k}{4L+\frac{4}{3}L_0\epsilon}.
\end{align*}

Combine the two fractions involving  $L_0 \epsilon$:

\begin{align*}
    \Gamma_k &\geq \gamma_k \left[ 1 - \frac{1}{4} - \frac{(3\eta + \frac{1}{2}) L_0 \epsilon}{4L+\frac{4}{3}L_0\epsilon} \right].
\end{align*}

To bound $\Gamma_k \geq \frac{\gamma_k}{4}$, we need:

\begin{align*}
    1 - \frac{1}{4} - \frac{(3\eta + \frac{1}{2}) L_0 \epsilon}{4L+\frac{4}{3}L_0\epsilon} \geq \frac{1}{4}.
\end{align*}

This means if $\eta \le \frac{4L+\frac{1}{3}L_0\epsilon}{6L_0\epsilon}$, we have $\Gamma \ge \frac{\gamma_k}{4}$. Finally, denote $\frac{4L+\frac{1}{3}L_0\epsilon}{6L_0\epsilon}$ as $\eta_0$, then consider $\eta = \eta_0$ in Lemma \ref{lemma:c_k}, we can let $\omega_k = \omega = \frac{\eta_0}{2m}$ to ensure $\Sigma_{i=1}^{m} \omega_k < \eta_0$.  Then we can set $\gamma = \min \{1,\frac{1}{8L+\frac{8}{3}L_0\epsilon} \frac{\omega}{C_0+C_1}\}$ and set $\gamma_k = \gamma$ to let so that Lemma \ref{lemma:c_k} also holds. Aggregating all these together, we finally prove that $\Gamma \ge \min \frac{\gamma_k}{4} = \frac{\gamma}{4}$ which is a constant, resulting in the $\mathcal{O}(\frac{1}{T})$ convergence given that $\gamma$ is a constant. 

Finally, consider $\Delta_1$ when $\Gamma \ge \frac{\gamma}{4}$, it is obvious it reduces to $L_0^2\epsilon^2+L_0^4\epsilon^4$ multiplied by some constant. When $\epsilon$ is smaller than $1$, the error neighborhood is $\mathcal{O}(\epsilon^2 + \epsilon^4)$.
\end{proof}

\section{Proof of Corollary \ref{corollary:complexity}}\label{app:proofcorollary}

\begin{corollary*}
Assume we have a finite distribution setting where the population consists of $n$ samples in total and $\alpha \in (0,1)$. Then when we have fixed parameters at each round, i.e., $\gamma_k = \gamma = \mathcal{O}(\frac{1}{n^{\alpha}}), \beta_k = \beta = \mathcal{O}(n^{\frac{\alpha}{2}}), m = \mathcal{O}(n^{\frac{\alpha}{2}})$, then we have $\Gamma = \mathcal{O}(\frac{1}{n^{\alpha}})$, and the IFO complexity is $\mathcal{O}(\frac{(n^{\alpha} + n^{1+\frac{\alpha}{2}})\Delta_0}{\delta})$ to achieve $\mathcal{O}(\delta)$-SPS in addition to the error neighborhood.
\end{corollary*}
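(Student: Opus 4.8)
The plan is to combine the per-round convergence guarantee of Theorem~\ref{theorem:converge} with a direct accounting of gradient evaluations, treating the three free parameters $\gamma,\beta,m$ as functions of the population size $n$. The IFO cost is most naturally counted per epoch: Line~5 of Algorithm~\ref{alg:one} computes the snapshot full gradient $\nabla\mathcal{J}(\widetilde{\boldsymbol{\theta}}^s;\widetilde{\boldsymbol{\theta}}^s)=\frac{1}{n}\sum_{i=1}^{n}\nabla\ell(z_i;\widetilde{\boldsymbol{\theta}}^s)$, costing $n$ oracle calls, while each of the $m$ inner iterations (Line~8) evaluates two per-sample gradients, $\nabla\ell(z_{i_k};\boldsymbol{\theta}_k^{s+1})$ and $\nabla\ell(z_{i_k};\widetilde{\boldsymbol{\theta}}^s)$, costing $2m$ in total. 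Hence each epoch costs $n+2m$, and with $S=\lceil T/m\rceil$ epochs the total IFO complexity is $S(n+2m)=\tfrac{T}{m}(n+2m)=\tfrac{Tn}{m}+2T$.

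First I would pin down the order of $\Gamma$ under the prescribed schedule. This requires revisiting the argument in the proof of Theorem~\ref{theorem:converge}, but now with $\beta=\mathcal{O}(n^{\alpha/2})$ \emph{growing} rather than held at the constant $4L+\tfrac{4}{3}L_0\epsilon$. I would re-run the recurrence $c_k=\tfrac{1}{2}\gamma_k+A_k c_{k+1}+B_k$ of Lemma~\ref{lemma:c_k}: with $\gamma=\mathcal{O}(n^{-\alpha})$ and $\beta=\mathcal{O}(n^{\alpha/2})$ one has $A_k=1+\mathcal{O}(n^{-\alpha/2})$ and $B_k=\mathcal{O}(n^{-\alpha/2})$, so the product $\prod_k A_k$ over the $m=\mathcal{O}(n^{\alpha/2})$ iterations stays $\mathcal{O}(1)$ since $(1+\mathcal{O}(n^{-\alpha/2}))^{\mathcal{O}(n^{\alpha/2})}=\mathcal{O}(1)$, and telescoping the $\mathcal{O}(n^{\alpha/2})$ contributions each of size $\mathcal{O}(n^{-\alpha/2})$ keeps every $c_k$ bounded by a constant. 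Feeding $c_{k+1}=\mathcal{O}(1)$ into $\Gamma_k=\gamma_k-\frac{c_{k+1}\gamma_k+\frac12 L_0\epsilon\gamma_k}{\beta}-2L\gamma_k^2-4c_{k+1}\gamma_k^2$, the subtracted terms are $\mathcal{O}(n^{-3\alpha/2})$ and $\mathcal{O}(n^{-2\alpha})$, both $o(\gamma_k)$, so $\Gamma=\Theta(\gamma)=\mathcal{O}(n^{-\alpha})$ for $n$ large, confirming the claimed order.

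Next I would solve for the number of rounds $T$ needed to drive the optimization error below $\delta$. By Theorem~\ref{theorem:converge} the averaged squared gradient norm is bounded by $\frac{\Delta_0}{T\Gamma}+\Delta_1$, where $\Delta_1$ is the variance-independent error neighborhood. To reach $\mathcal{O}(\delta)$-SPS \emph{in addition to} $\Delta_1$, it suffices that $\frac{\Delta_0}{T\Gamma}\le\delta$, i.e.\ $T=\mathcal{O}\!\left(\frac{\Delta_0}{\delta\,\Gamma}\right)=\mathcal{O}\!\left(\frac{\Delta_0 n^{\alpha}}{\delta}\right)$. Substituting this $T$ and $m=\mathcal{O}(n^{\alpha/2})$ into $\frac{Tn}{m}+2T=T\!\left(\frac{n}{m}+2\right)$, and using $\frac{n}{m}=n^{1-\alpha/2}$, yields $\mathcal{O}\!\left(\frac{\Delta_0 n^{\alpha}}{\delta}\right)\!\cdot\!\left(n^{1-\alpha/2}+2\right)=\mathcal{O}\!\left(\frac{(n^{1+\alpha/2}+n^{\alpha})\Delta_0}{\delta}\right)$, which is the stated bound.

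The main obstacle is the first step: in Theorem~\ref{theorem:converge} the analysis was carried out with $\beta$ equal to a fixed constant, so the clean bound $\Gamma\ge\gamma/4$ cannot be quoted directly once $\beta$ grows like $n^{\alpha/2}$. The delicate point is showing that the $c_k$ recurrence does not blow up as the epoch length $m$ and the penalty $\beta$ simultaneously scale with $n$ — the competing factors $\gamma_k\beta=\mathcal{O}(n^{-\alpha/2})$ per step and $m=\mathcal{O}(n^{\alpha/2})$ steps must balance so that $\prod_k A_k$ remains $\mathcal{O}(1)$. Once this boundedness is established, the remaining estimates are routine order-of-magnitude comparisons among $\gamma$, $\gamma^2$, and $\gamma/\beta$, and the IFO count follows by plugging the resulting $T$ and the schedule for $m$ into the per-epoch cost.
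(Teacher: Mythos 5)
Your proposal is correct and follows essentially the same route as the paper's own proof: establish $\Gamma = \Theta(\gamma) = \Theta(n^{-\alpha})$ under the prescribed scalings, solve $\frac{\Delta_0}{T\Gamma} \le \delta$ to get $T = \mathcal{O}\bigl(\frac{n^{\alpha}\Delta_0}{\delta}\bigr)$, and multiply by the per-epoch oracle cost $n + 2m$ (equivalently $2 + \frac{n}{m}$ per round) to obtain $\mathcal{O}\bigl(\frac{(n^{\alpha} + n^{1+\alpha/2})\Delta_0}{\delta}\bigr)$. The only difference is in how the first step is justified, and it is cosmetic: where the paper invokes Lemma~\ref{lemma:c_k} and the machinery of App.~\ref{app:thmconverge}, noting that $C_0, C_1 = \mathcal{O}(\beta) = \mathcal{O}(n^{\alpha/2})$ forces $\gamma = \mathcal{O}(n^{-\alpha})$ and yields $\Gamma > \gamma/2$, you re-run the $c_k$ recurrence directly with the product bound $(1+\mathcal{O}(n^{-\alpha/2}))^{\mathcal{O}(n^{\alpha/2})} = \mathcal{O}(1)$ — a slightly more self-contained verification of the same fact, which correctly addresses the point that the theorem's constant-$\beta$ bound $\Gamma \ge \gamma/4$ cannot be quoted verbatim when $\beta$ scales with $n$.
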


\begin{proof}
    From the Lemma \ref{lemma:c_k} and the proof in App. \ref{app:thmconverge}, we already know that when $m = \mathcal{O}(n^{\frac{\alpha}{2}})$, any $\gamma_k \le \gamma = \mathcal{O}(\frac{\eta_0}{2m(C_0+C_1)})$ will make $\Gamma > \frac{\gamma_k}{2}$. Since $C_0, C_1$ are all $\mathcal{O}(\beta) = \mathcal{O}(n^{\frac{\alpha}{2}})$, then we know $\gamma = \mathcal{O}(\frac{1}{n^{\alpha}})$ is a plausible choice to satisfy Lemma \ref{lemma:c_k} to let $\Gamma > \frac{\gamma}{2}$. Moreover, $\Gamma > \frac{\gamma}{2} = \mathcal{O}(\frac{1}{n^{\alpha}})$
    With this fact, we let $\frac{\Delta_0}{T\Gamma} \le \delta$ to get $T$ should be $\mathcal{O}(\frac{n^{\alpha}\Delta_0}{\delta})$. Since for each $m$ rounds ($1$ epoch), we would have $2m+n$ evaluations of gradients. So the average sample complexity at each round is $2 + \frac{n}{m}$. Then, the final IFO complexity is:
    \begin{align*}
        T(2+\frac{n}{m}) = \mathcal{O}\left(\frac{\Delta_0n^{\alpha}(2+n^{1-\frac{\alpha}{2}})}{\delta}\right) = \mathcal{O}(\frac{(n^{\alpha} + n^{1+\frac{\alpha}{2}})\Delta_0}{\delta})
    \end{align*}
\end{proof}

\section{Additional experimental results}
\label{app:exp}

\textbf{Additional experimental settings.} We conduct all the experiments on a server which has two Intel Xeon 6326 CPUs and a Nvidia A6000 GPU. We implement our code using the pytorch of version 2.4.1 and seed $2024$. In the PP settings, each training requires extensive steps to model distributional shifts, which significantly increases the computational cost, especially on large-scale datasets like CIFAR-10. We are only able to run one experiment on CIFAR-10, but we include experiments using $3$ random seeds for the MNIST dataset as follows.

\begin{figure}[h]
    \centering
    \begin{subfigure}[b]{0.32\textwidth}
        \includegraphics[width=\textwidth]{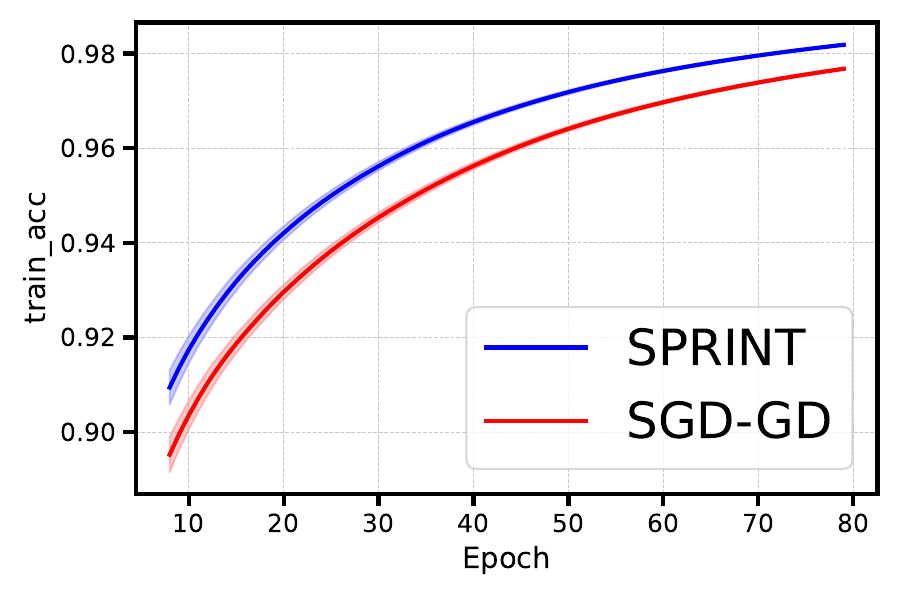}
        \label{fig:sub1}
    \end{subfigure}
    \hfill
    \begin{subfigure}[b]{0.32\textwidth}
        \includegraphics[width=\textwidth]{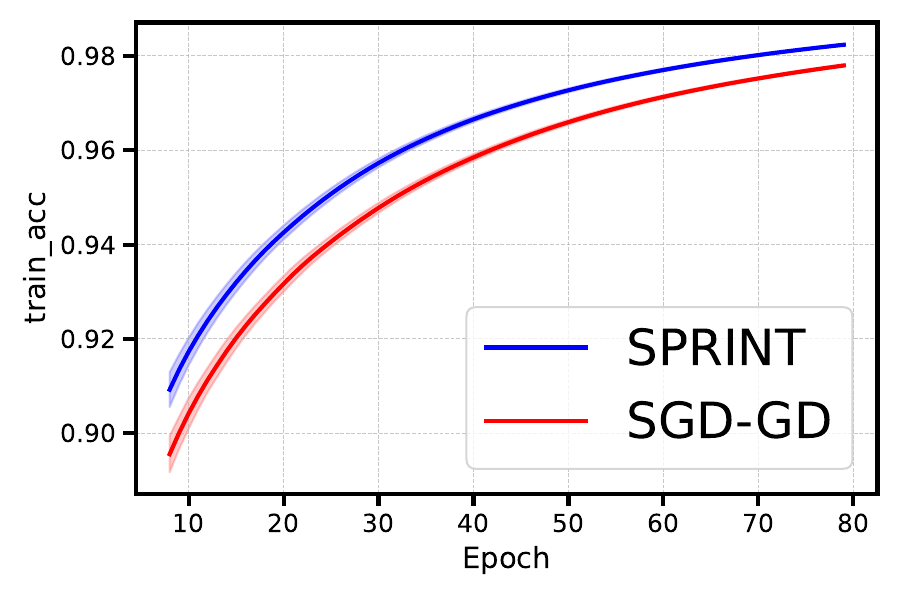}
        \label{fig:sub2}
    \end{subfigure}
    \hfill
    \begin{subfigure}[b]{0.32\textwidth}
        \includegraphics[width=\textwidth]{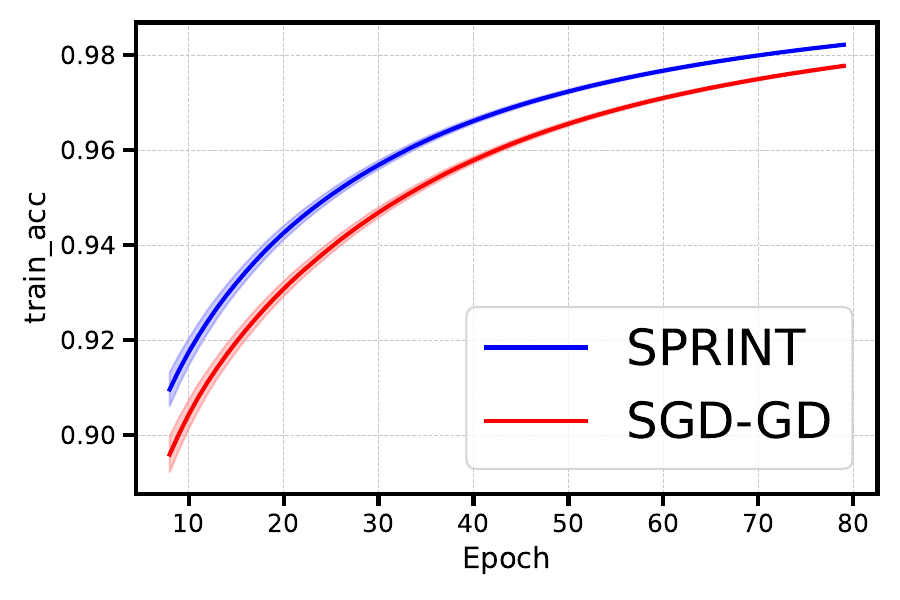}
        \label{fig:sub3}
    \end{subfigure}

    \begin{subfigure}[b]{0.32\textwidth}
        \includegraphics[width=\textwidth]{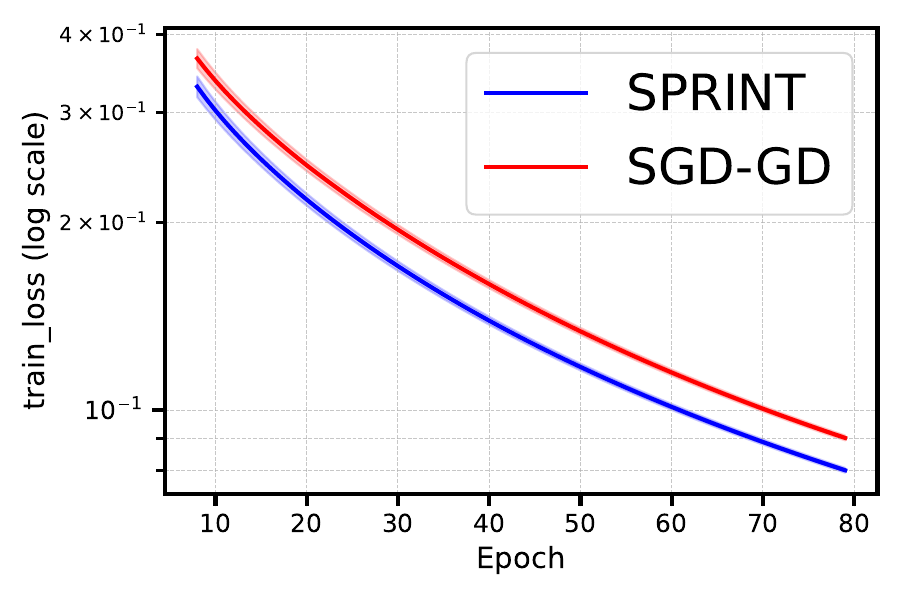}
        \caption{$\alpha = 20$}
        \label{fig:mnist_sub4}
    \end{subfigure}
    \hfill
    \begin{subfigure}[b]{0.32\textwidth}
        \includegraphics[width=\textwidth]{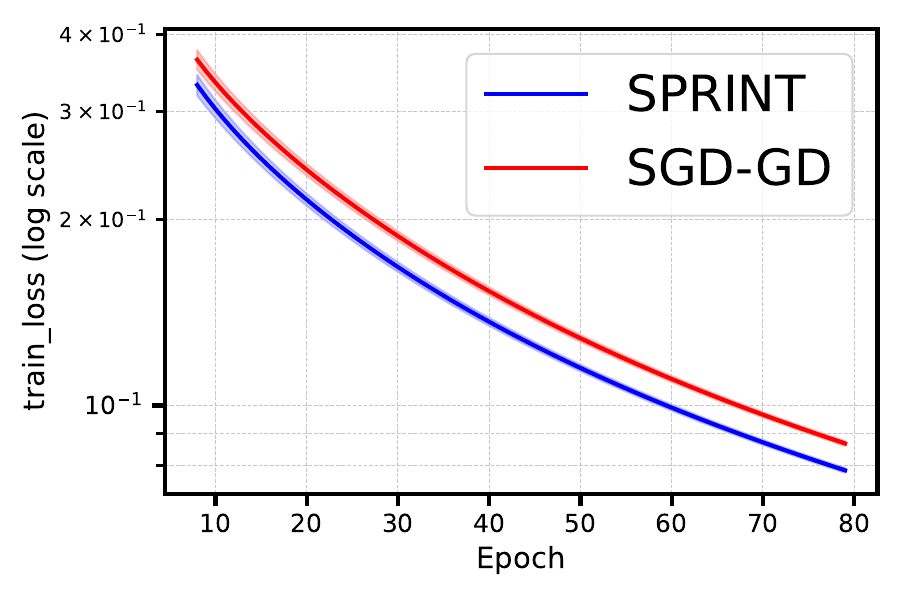}
        \caption{$\alpha = 50$}
        \label{fig:mnist_sub5}
    \end{subfigure}
    \hfill
    \begin{subfigure}[b]{0.32\textwidth}
        \includegraphics[width=\textwidth]{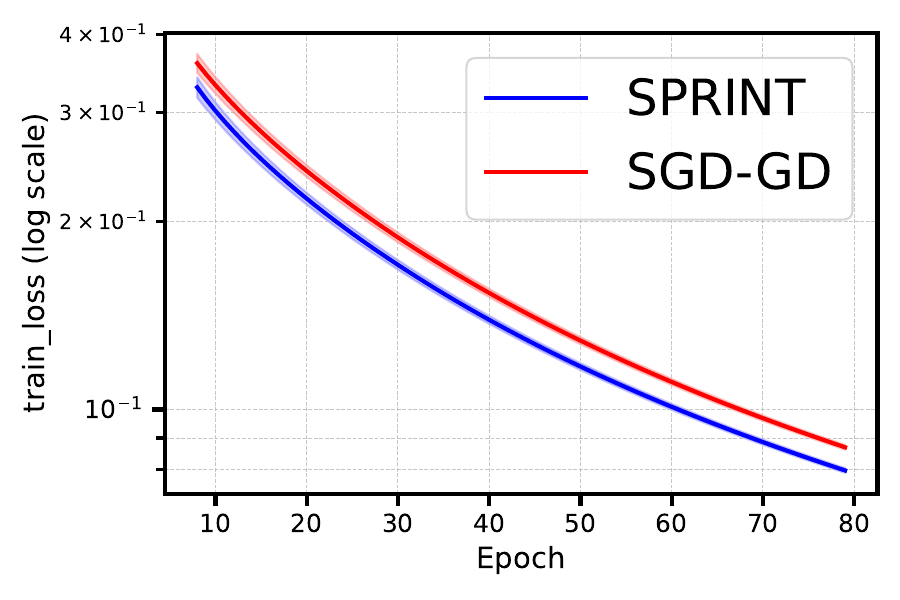}
        \caption{$\alpha = 80$}
        \label{fig:mnist_sub6}
    \end{subfigure}

    \caption{\textbf{MNIST} dataset: training accuracy (first row) and training loss (second row) of SGD-GD and \alg under different $\alpha$. Larger $\alpha$ means more intense retention dynamics. The shadows demonstrate standard errors.}
    \label{fig:mnist}
\end{figure}

\begin{figure}[h]
    \centering

    \begin{subfigure}[b]{0.32\textwidth}
        \includegraphics[width=\textwidth]{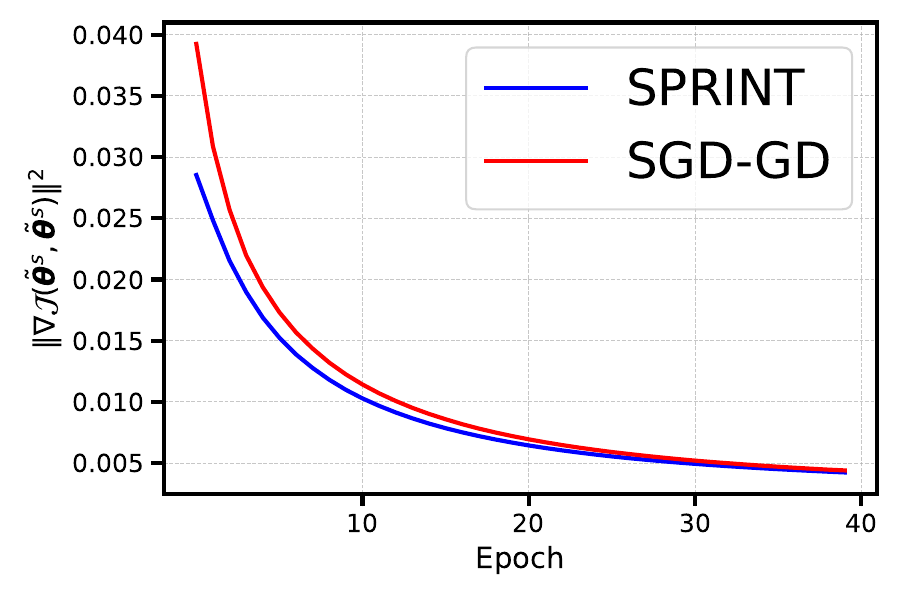}
        \caption{$\alpha=0.01$}
        \label{fig:grads_sub1}
    \end{subfigure}
    \hfill
    \begin{subfigure}[b]{0.32\textwidth}
        \includegraphics[width=\textwidth]{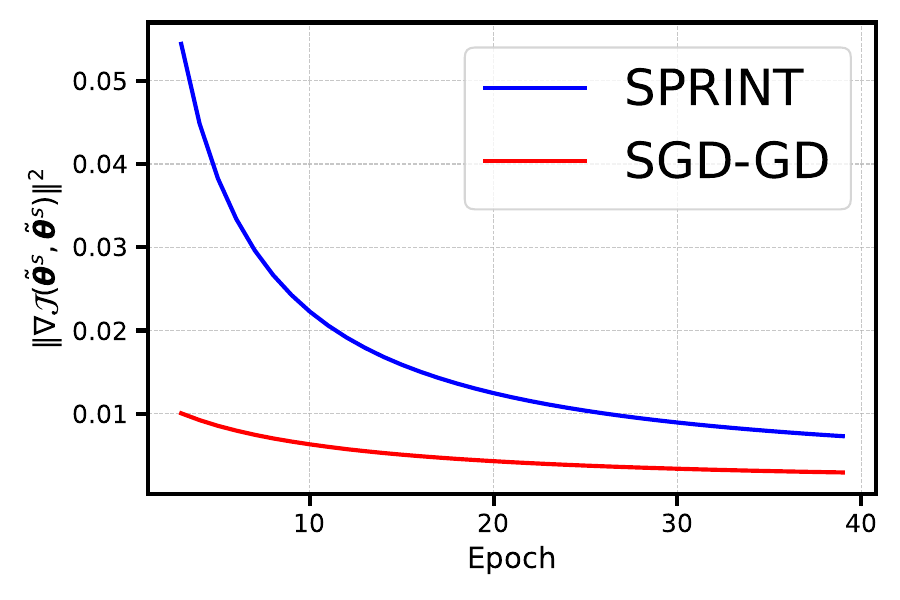}
        \caption{$\alpha = 0.2$}
        \label{fig:grads_sub2}
        
    \end{subfigure}
    \hfill
    \begin{subfigure}[b]{0.32\textwidth}
        \includegraphics[width=\textwidth]{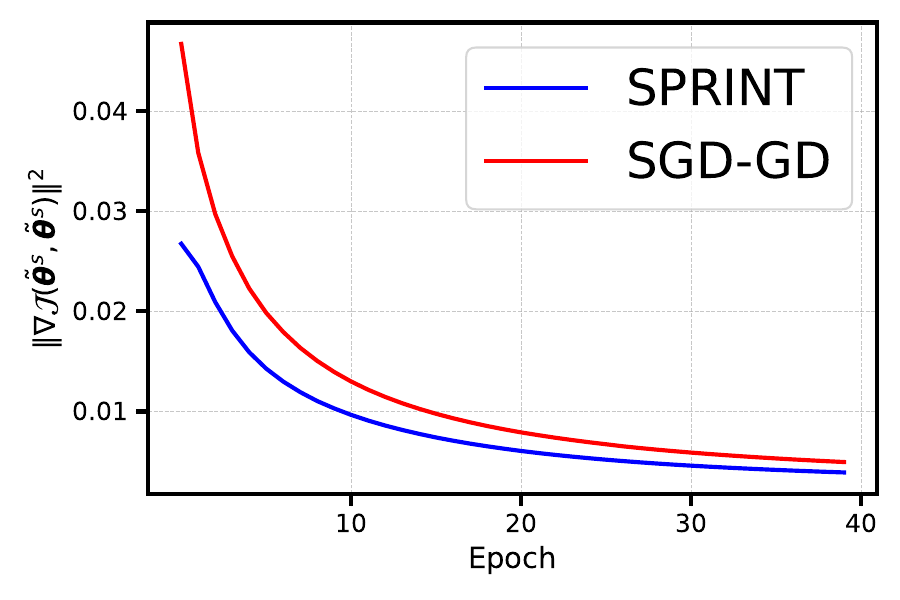}
        \caption{$\alpha = 0.4$}
        \label{fig:grads_sub3}
    \end{subfigure}


    \begin{subfigure}[b]{0.32\textwidth}
        \includegraphics[width=\textwidth]{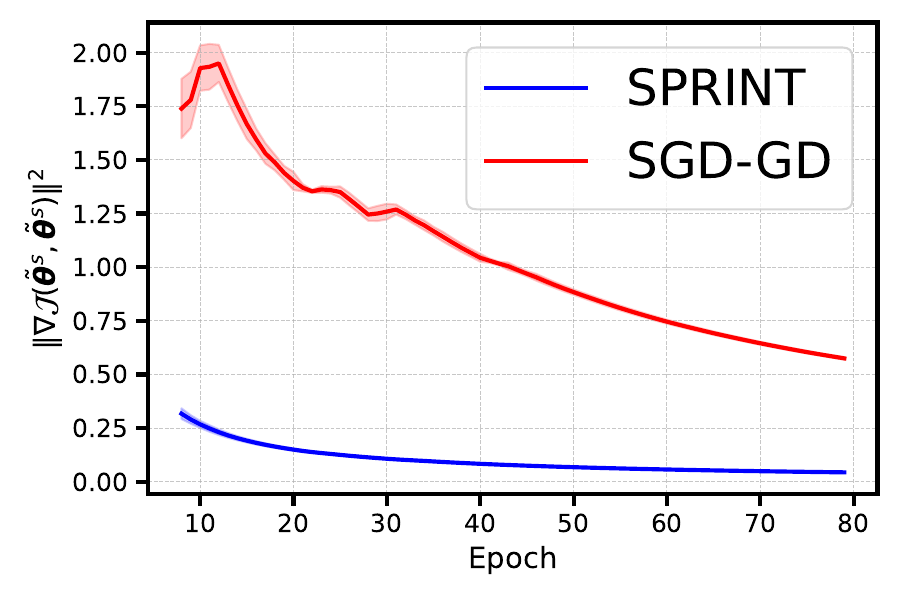}
        \caption{$\alpha = 20$}
        \label{fig:grads_sub4}
    \end{subfigure}
    \hfill
    \begin{subfigure}[b]{0.32\textwidth}
        \includegraphics[width=\textwidth]{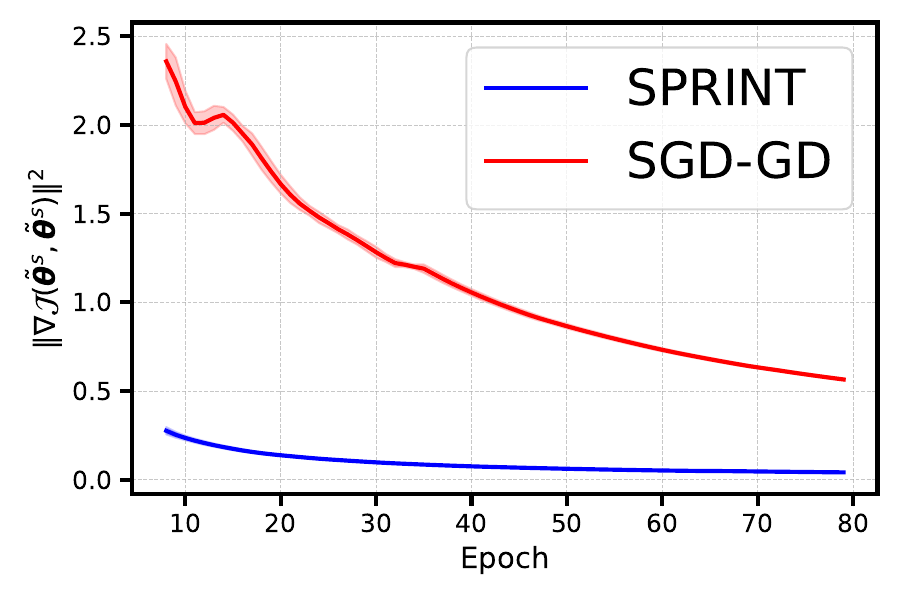}
        \caption{$\alpha = 50$}
        \label{fig:grads_sub5}
    \end{subfigure}
    \hfill
    \begin{subfigure}[b]{0.32\textwidth}
        \includegraphics[width=\textwidth]{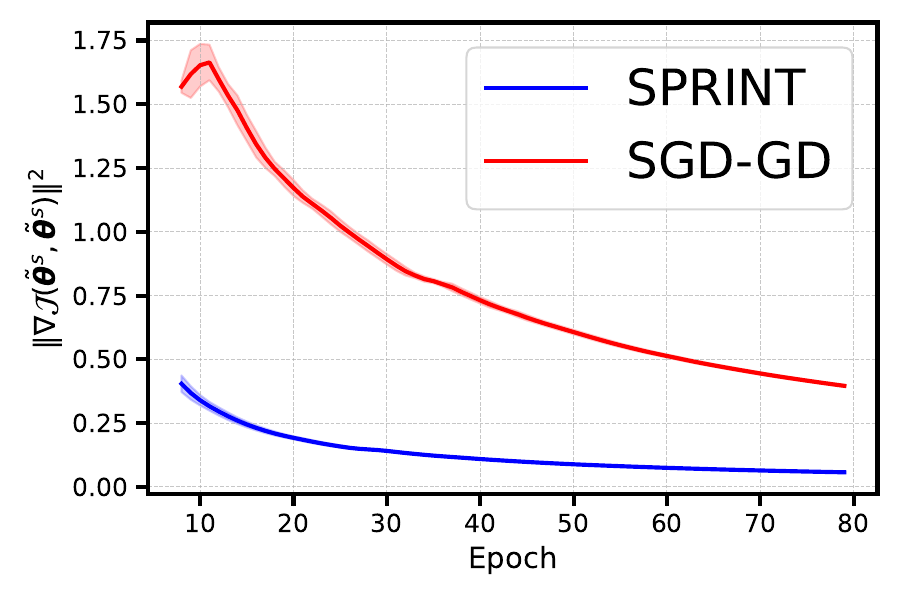}
        \caption{$\alpha = 80$}
        \label{fig:grads_sub6}
    \end{subfigure}


    \begin{subfigure}[b]{0.32\textwidth}
        \includegraphics[width=\textwidth]{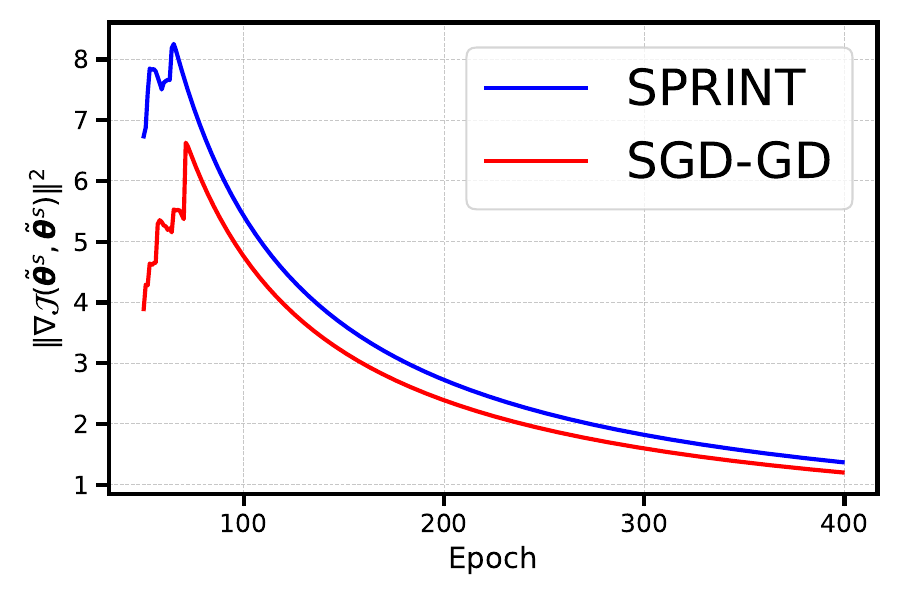}
        \caption{$\alpha = 20$}
        \label{fig:grads_sub7}
    \end{subfigure}
    \hfill
    \begin{subfigure}[b]{0.32\textwidth}
        \includegraphics[width=\textwidth]{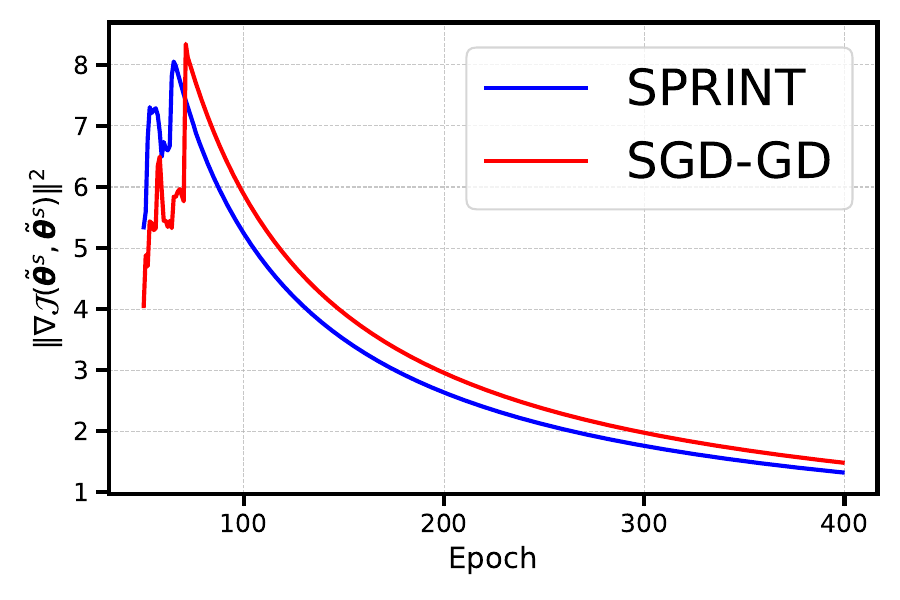}
        \caption{$\alpha = 50$}
        \label{fig:grads_sub8}
    \end{subfigure}
    \hfill
    \begin{subfigure}[b]{0.32\textwidth}
        \includegraphics[width=\textwidth]{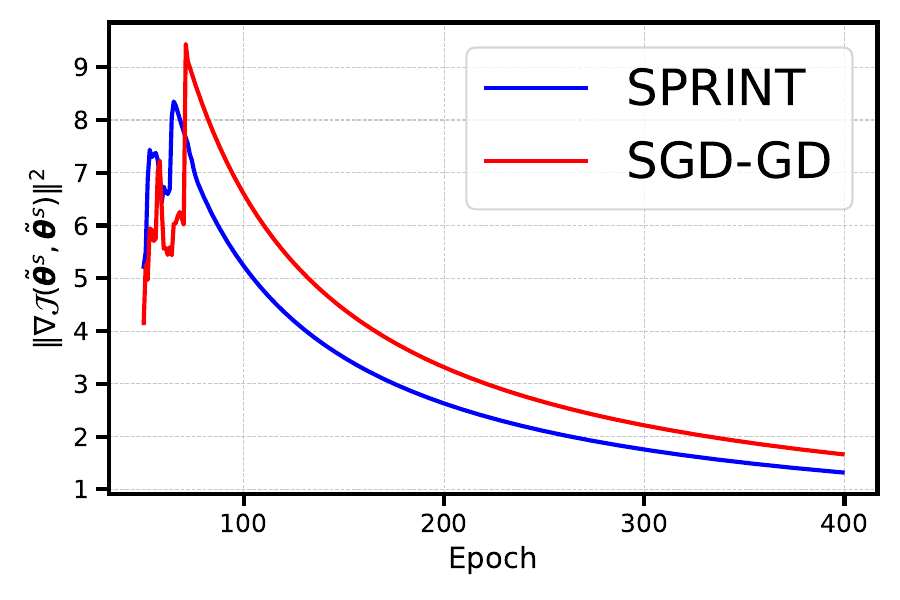}
        \caption{$\alpha = 80$}
        \label{fig:grads_sub9}
    \end{subfigure}
    \caption{Squared gradient norm $\|\nabla \mathcal{J}(\widetilde{\boldsymbol{\theta}}^s; \widetilde{\boldsymbol{\theta}}^s)\|^2$. From Up to down are \textbf{Credit} dataset, \textbf{MNIST} dataset, and \textbf{CIFAR-10} dataset.}
    \label{fig:grad}
\end{figure}

\textbf{Experiments on MNIST dataset.} We use the MNIST dataset \citep{deng2012mnist} and train a two-layer MLP to predict the $10$ possible digits. We randomly sample approximately $12000$ images and train the model using a learning rate set at $0.003$. Similar to retention dynamics \citep{hashimoto2018fairness, jin2024addressingpolarizationunfairnessperformative, Zhang_2019_Retention}, we assume the class distribution will change based on the performance of the current model. The fraction in class $c$ at iteration $k$ of $p^{k+1}_c = \frac{e^{-\alpha \ell^k_c}}{\sum_{c'}e^{-\alpha \ell^k_{c'}}}$, where $\ell^k_c$ is the loss expectation of class $c$ in iteration $k$. This means the image class with larger loss now will less likely to appear at the next iteration \citep{Zhang_2019_Retention}. We train the model for $80$ epochs with cross entropy loss and visualize the accuracy and training loss curve in Fig. \ref{fig:mnist}. We use three random seeds, which are 2024, 2025, 2026, to run the experiments. The $\alpha$ is set at $20, 50, 80$ to simulate different magnitudes of performative effects. In Fig. \ref{fig:mnist}, the training loss of \alg consistently converges faster, which is also reflected in the training accuracy. 

\textbf{The average of squared gradient norm.} We visualize the cumulative average of the squared gradient norm at the end of each epoch $\|\nabla \mathcal{J}(\widetilde{\boldsymbol{\theta}}^s; \widetilde{\boldsymbol{\theta}}^s)\|^2$ of SGD-GD and \alg in all $3$ settings as specified in Sec. \ref{sec:exp} shown in Fig. \ref{fig:grad}. Notably, in MNIST setting (Fig. \ref{fig:grads_sub4} to Fig. \ref{fig:grads_sub6}), \alg constantly has much smaller squared gradient norm, demonstrating the effectiveness of our algorithm. In Credit setting (Fig. \ref{fig:grads_sub1} to Fig. \ref{fig:grads_sub3}), \alg constantly has a smaller squared gradient norm when $\alpha = 0.01$ and $0.4$. When $\alpha = 0.2$, \alg has a much worse initial point than SGD-GD, resulting in the large squared gradient norm at first but similar norm at the end. Similarly, in CIFAR-10 setting  (Fig. \ref{fig:grads_sub7} to Fig. \ref{fig:grads_sub9}), \alg has a smaller squared gradient norm when $\alpha = 50$ and $80$. When $\alpha = 20$, \alg has a much worse initial point than SGD-GD, resulting in the large squared gradient norm at first but similar norm at the end. 

Note that Thm. \ref{theorem:converge} is on the expectation of the squared gradient norm, but we are only able to conduct one experiment for each setting due to the limitation of computational resource. Thus, we can have some bad initialization point of \alg to influence the results.

\section{Extend \alg to Infinite Sum Settings}\label{app:inf}

Our current finite-sum analysis avoids bounded variance by using a full gradient. For the population expectation minimization setting, we may have two plausible ways based on our current proof framework:

\begin{enumerate}[leftmargin=*]
    \item We may use a minibatch of samples as the gradient snapshot, but we do need the bounded variance assumption to obtain SVRG-type rates.

    \item We can consider methods similar to \citet{jothimurugesan2018variance}, where the minibatch is growing and a control variate method is used to keep the gradient variance bounded. In this way, we do not need an explicit bounded variance assumption.
\end{enumerate}

With the above methods and replacing the full gradients with mini-batch gradients, all other derivations should be similar to our current work. However, some nuanced constant parameters tuning may be needed to take care of the additional error produced by the mini-batch gradients.

\section{Comparison of IFO Complexity between \alg and SGD-GD}\label{app:adv}

According to Theorem 1 of \citet{li2024stochastic}, we can directly get the IFO Complexity of SGD-GD. When $n$ is not too large and when we require $\delta$ to be small, \alg will have much smaller complexity. Meanwhile, when $\sigma_0$ is large or even intractable (e.g., heavy-tailed/fat-tailed noise settings \citep{gurbuzbalaban2021heavy}), SGD-GD can also have extremely large or even infinite IFO Complexity, while the complexity of \alg is independent of the variance parameter $\sigma_0$.

\end{document}